\documentclass{article}
\pdfpagewidth=8.5in
\pdfpageheight=11in
\usepackage{ijcai21}

\usepackage{times}

\usepackage{soul}
\usepackage{url}
\usepackage[hidelinks]{hyperref}
\usepackage[utf8]{inputenc}
\usepackage[small]{caption}
\usepackage{graphicx}
\usepackage{amsmath}
\usepackage{booktabs}
\urlstyle{same}
\usepackage{dsfont}
\usepackage{balance} 
\usepackage{booktabs}       
\usepackage{amsfonts}       
\usepackage{nicefrac}       
\usepackage{microtype}      
\usepackage{amssymb}
\usepackage{appendix}
\usepackage{enumerate}
\usepackage{amsthm}
\usepackage{mathtools}
\usepackage{longtable}
\usepackage{subcaption}
\usepackage[ruled,linesnumbered,vlined]{algorithm2e}
\usepackage{makecell}
\usepackage{xcolor}
\usepackage{float}
\usepackage{wrapfig}
\usepackage{booktabs} 
\usepackage{thmtools,thm-restate}
\usepackage{enumitem}
\usepackage{cleveref} 
\usepackage{hyperref}
\hypersetup{
    colorlinks=true,
    linkcolor=blue
}

\theoremstyle{definition}

\newtheorem{theorem}{Theorem}[]
\newtheorem{extheorem}{Extended Theorem}[section]
\newtheorem{lemma}{Lemma}[section]
\newtheorem{exlemma}{Extended Lemma}[section]

\allowdisplaybreaks[3]

\newcommand{\eq}[1]{Eq.~(\ref{#1})}
\newcommand{\alg}[1]{Algorithm~\ref{#1}}
\newcommand{\fig}[1]{Figure~\ref{#1}}
\newcommand{\tab}[1]{Table~\ref{#1}}
\newcommand{\app}[1]{Appendix~\ref{#1}}
\newcommand{\se}[1]{Section~\ref{#1}}

\newcommand{\minisection}[1]{\paragraph{#1.}}

\newcommand{\cites}[1]{\citeauthor{#1}~\shortcite{#1}}
\newcommand{\methodname}{AORPO} 
\newcommand{\methodnametwo}{AORDPG}
\newcommand{\fullmethodname}{Adaptive Opponent-wise Rollout Policy Optimization}

\title{Model-based Multi-agent Policy Optimization \\with Adaptive Opponent-wise Rollouts}

\author{
Weinan Zhang\and
Xihuai Wang\and
Jian Shen\and
Ming Zhou\\
\affiliations
Shanghai Jiao Tong University\\
\emails
\{wnzhang, leoxhwang, r\_ocky, mingak\}@sjtu.edu.cn
}

\begin{document}
\maketitle 


\begin{abstract}
This paper investigates the model-based methods in multi-agent reinforcement learning (MARL). We specify the dynamics sample complexity and the opponent sample complexity in MARL, and conduct a theoretic analysis of return discrepancy upper bound. To reduce the upper bound with the intention of low sample complexity during the whole learning process, we propose a novel decentralized model-based MARL method, named Adaptive Opponent-wise Rollout Policy Optimization (AORPO). In AORPO, each agent builds its multi-agent environment model, consisting of a dynamics model and multiple opponent models, and trains its policy with the adaptive opponent-wise rollout. We further prove the theoretic convergence of AORPO under reasonable assumptions. Empirical experiments on competitive and cooperative tasks demonstrate that AORPO can achieve improved sample efficiency with comparable asymptotic performance over the compared MARL methods.
\end{abstract}

\section{Introduction}\label{sec: intro}

Multi-agent reinforcement learning (MARL) has been recently paid much attention 
and preliminarily applied to various control scenarios including robot system, autonomous driving, resource utilization etc. \cite{du2020survey}.
One main technical challenge MARL brings over single-agent reinforcement learning (SARL) is that the agents need to interact with other agents, and their returns depend on the behavior of all the agents, which usually requires a considerable amount of samples, i.e., high sample complexity. 



In a general MARL setting, one agent can access the actions taken by other agents in any state through some communication protocol, without any knowledge of their specific policies \cite{tian2020learning}. In such a situation, we claim that the sample complexity in MARL comes from two parts: \textit{dynamics sample complexity} and \textit{opponent sample complexity}. To achieve some policy performance, dynamics sample complexity represents the number of interactions made with the environment dynamics while collecting the samples.
On the other hand, opponent sample complexity, which is unique in MARL, represents the times of communications to access one opponent's action. Thus one goal of MARL is to find an effective policy with the two sample complexities being low.


In SARL scenarios, it is well known that model-based reinforcement learning (MBRL) can achieve lower dynamics sample complexity than model-free reinforcement learning (MFRL) empirically \cite{wang2019benchmarking} or achieve at least competitive complexity theoretically \cite{jin2018q}. Specifically, by building a dynamics model, the agent can also be trained with the model simulation samples and the ones collected from the environment, which can reduce the need for the environment samples \cite{luo2019algorithmic}. In multi-agent scenarios, however, to utilize the dynamics model for data simulation, we need to ask for the opponents' actions through a communication protocol, which will count as the opponent sample complexity. Via building opponent models \cite{he2016opponent},  we can replace the real opponents in the data simulation stage to reduce the opponent sample complexity.




In this paper, we investigate model-based methods for MARL and propose a novel method called \fullmethodname{} (\methodname{}). Specifically, from the perspective of each ego agent\footnote{In this paper, each studied agent is called ego agent while the other agents are called opponent agents. See Figure~\ref{fig: ma-mbrl} for illustrations.}, we build a multi-agent environment model, which consists of a dynamics model and opponent models for each opponent agent. The multi-agent environment model can then be used to perform simulation rollout for MARL training to reduce both sample complexities. To our knowledge, however, in literature, there is no theoretical guidance regarding how to perform the multi-agent simulated rollouts. We provide a theoretical analysis about the upper bound of the return discrepancy w.r.t. policy distribution shift and the generalization errors of the environment dynamics and each opponent model. The upper bound reveals that when the environment dynamics model performs rollout with multiple opponents models, different magnitudes of opponent model errors may lead to different contributions to the compounding error of the multi-step simulated rollouts. Based on the theoretic analysis, we design the adaptive opponent-wise rollout scheme for our policy optimization algorithm, called \methodname{}, and prove its convergence under reasonable assumptions.

Experiments show that \methodname{} outperforms several state-of-the-art MARL methods in terms of sample efficiency in both cooperative tasks and competitive tasks. 

\section{Related Work}\label{sec: related-work}

There are two important training paradigms in MARL, i.e., centralized training with decentralized execution (CTDE) and fully decentralized training. Typical algorithms using CTDE paradigm are value function decomposition algorithms \cite{whiteson2018qmix}, which deal with the non-stationarity and credit-assignment challenges by a centralized value function. Fully decentralized MARL algorithms include consensus learning over networked agents and communicating agents \cite{zhang2018fully,qu2019value}. In decentralized scenarios, where the agents can communicate with each other,
\cites{iqbal2019actor} proposed a method that integrates the received messages to learn a value function and a policy. In this paper, we propose a fully decentralized training method with a communication protocol.

Opponent modeling \cite{he2016opponent} is a feasible solution to the non-stationarity problem in MARL, which models others' policies with interaction experience. Behavior cloning is a straightforward method to learn opponents' policies \cite{lowe2017multi}. More advances include recursive reasoning with variational inference \cite{wen2019probabilistic}, maximum-entropy objective \cite{tian2019regularized}, and modeling the learning process of opponents \cite{foerster2018learning} etc.

There are generally two major problems for model-based RL methods, i.e., model learning and model usage.
For model learning, the most common solution is supervised learning \cite{nagabandi2018neural}, or non-parametric methods such as Gaussian processes \cite{kamthe2018data}. 
For model usage, a policy can be derived by exploiting the model with different algorithms such as Dyna \cite{sutton1990integrated}, shooting methods \cite{chua2018deep} and policy search with backpropagation through paths \cite{clavera2020model}. We refer to \cites{wang2019benchmarking} for details of model-based RL.
Theoretical analysis of model-based RL in single-agent scenarios has been investigated in recent literature. 
To name a few,
\cites{luo2019algorithmic} provided a monotonic improvement guarantee by enforcing a distance constraint between the learning policy and the data-collecting policy.
\cites{janner2019trust} derived a return discrepancy bound with the branched rollout, which studies how the model generalization affects the model usage.

For model-based MARL, there are relatively limited works of literature, to our knowledge. \cites{park2019multi} proposed to use a centralized auxiliary prediction network to model the environment dynamics to alleviate the non-stationary dynamics problem. \cites{kamra2020multi} and \cites{li2020evolvegraph} proposed interaction graph-based trajectory prediction methods, without considering policies. \cites{krupnik2019multi} built a centralized multi-step generative model with a disentangled variational auto-encoder to predict the environment dynamics and the opponent actions and then performed trajectory planning.
Unlike previous work, in our proposed \methodname{} each agent builds its environment model consisting of a dynamics model and opponent models, then learns its policy using the model rollouts. 
To our knowledge, \methodname{} is the first Dyna-style method in MARL.
More importantly, \methodname{} is supported by the theoretical bound of return discrepancy and convergence guarantee, which provides a principle to design further Dyna-style model-based MARL methods.

\section{Problem Definition and Sample Complexity}
\label{sec: background}

\subsection{Problem Definition} 
We formulate the MARL problem as a stochastic game \cite{shapley1953stochastic}. An $n$-agent stochastic game can be defined as a tuple $(\mathcal{S}, \mathcal{A}^{1}, \ldots, \mathcal{A}^{n}, R^{1}, \ldots, R^{n}, \mathcal{T}, \gamma)$, where $\mathcal{S}$ is the state space of the stochastic game, $\mathcal{A}^{i}$ is the action space of agent $i\in \{1,\ldots ,n\}$, $\mathcal{A}=\Pi_{i=1}^n\mathcal{A}^{i}$ is the joint action space,
$R^{i}: \mathcal{S}\times \mathcal{A} \mapsto \mathbb{R}$ is the reward function of agent $i$, 
and $\gamma$ is the discount factor. State transition proceeds according to the dynamics function $\mathcal{T}: \mathcal{S}\times \mathcal{A} \mapsto \mathcal{S}$. 
For agent $i$, denote its policy as $\pi^{i}: \mathcal{S} \rightarrow \Delta(\mathcal{A}^i)$, which is a probability distribution over its action space, and $\pi^i(a_t|s_t)$ is the probability of taking action $a_t$ at state $s_t$. 
By denoting other agents' actions as $a^{-i}=\{a^{j}\}_{j\neq i}$, we can formulate the joint policy of other agents' as $\pi^{-i}(a_t^{-i}|s_t)=\Pi_{j\in\{-i\}}\pi^j(a_t^j|s_t)$. 
At each timestep, actions are taken simultaneously. Each agent $i$ aims at finding its optimal policy to maximize the expected return (cumulative reward), defined as
\begin{small}
\begin{equation*}
    \max_{\pi^i} \eta_i[\pi^{i}, \pi^{-i}] = \mathbb{E}_{(s_t,  a_t^{i}, a_{t}^{-i}) \sim \mathcal{T},\pi^{i},\pi^{-i}}\Big[\sum_{t=1}^\infty \gamma^{t}R^{i}(s_t, a_t^{i}, a_{t}^{-i}) \Big].
\end{equation*}
\end{small}


We consider a general scenario that each ego agent $i$ has no knowledge of other agents' policies, but can observe the histories of other agents,
i.e., $\{s_{1:t-1}, a_{1:t-1}^{-i}\}$ at timestep $t$. 
This scenario is common in decentralized MARL, such as in \cites{zhang2018fully} and \cites{qu2019value}.





\subsection{Two Parts of Sample Complexity}

From the perspective of the ego agent $i$, the state transition in MARL involves sub-processes, 
i.e., the action sampling process $\pi^{-i}(a^{-i}|s)$ of opponent agents given the current state $s$, and the transition process to the next state $\mathcal{T}(s^{\prime}|s,a^i,a^{-i})$ given the current state $s$ and the joint action $(a^{i}, a^{-i})$. 

\begin{figure}[b]
 \centering
 \includegraphics[width=0.85\linewidth]{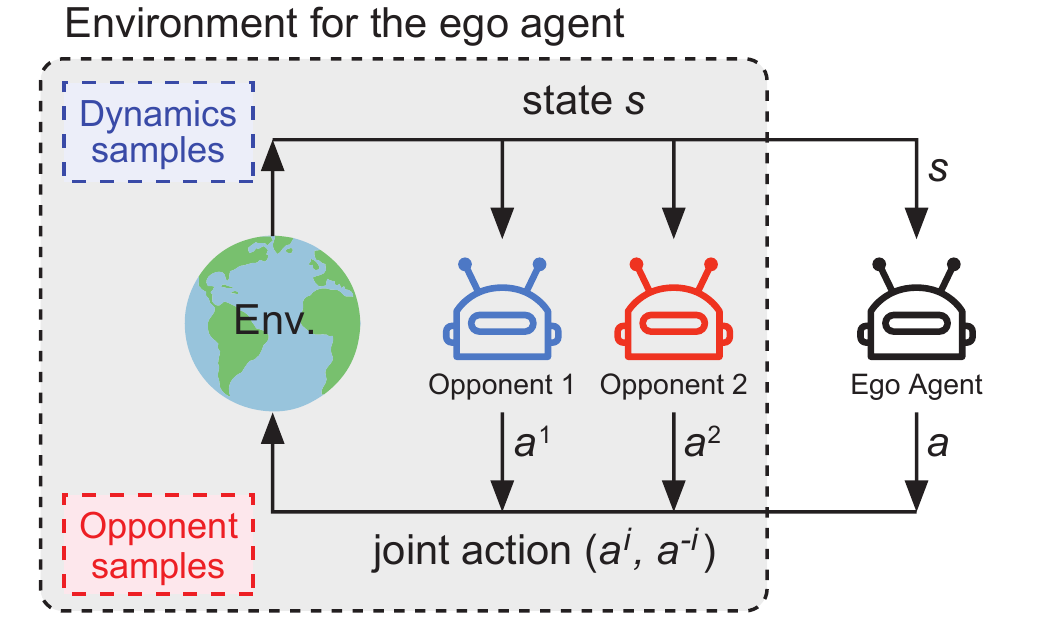}
 \caption{Two parts of the sample complexity in MARL.}
 \label{fig:marl-complexity}
\end{figure}

As such, for an ego agent to achieve some policy performance in a multi-agent environment, the sample complexity is specified in two parts: (i) \textit{dynamics sample complexity}, i.e., the number of state transition interactions between the agent group and the dynamics environment, and (ii) \textit{opponent sample complexity}, i.e., the total number of opponent action sampling given the conditioned state. Figure~\ref{fig:marl-complexity} illustrates such two parts of sample complexity.

We argue that it is necessary to study such two parts of sample complexity explicitly since the two sub-processes' approximations meet different challenges.
Environment dynamics modeling is usually computationally expensive as the state representation and the joint actions are usually of high dimensions.
Opponent modeling usually aims at approximating the opponents' policies and eases MARL training in a decentralized manner \cite{tian2020learning}. 



\paragraph{Opponent Sample Complexity as Communication.}
If we leverage model-based techniques derived from SARL in multi-agent cases with decentralized setting, we may need to call real opponents for the action given the current state, which can be formulated as a communication protocol.
For example, the agents communicate via calling for the opponent actions \cite{tian2020learning}: the ego agent sends a state (no matter real or simulated) to an opponent agent and receives the action sampled from the opponent's policy. As such, 
we can regard the opponent sample complexity as communication load.
To lower the opponent sample complexity, we replace the real opponents with learned opponent models in data simulation, which is analogous to selectively call some (or none) of opponents for useful information to reduce the communication load (or bandwidth) in multi-agent interactions
\cite{ryu2020multi}.




\section{Return Discrepancy Bounds}
\label{sec: bound}

In this section, we conduct theoretical analysis to better understand model-based MARL in the decentralized setting, where each ego agent learns a dynamics model and opponent models based on observed data. For a comprehensive analysis, we also discuss the bound in centralized model-based MARL in \app{app: cen_bounds}.



Usually, the dynamics model and opponent models have prediction errors, known as generalization errors. We investigate the influence of the learned models over the agent's performance. From the perspective of agent $i$, we aim to bound the discrepancy between the expected return $\eta_i[\pi^{i}, \pi^{-i}]$ of running the agent's policy $\pi^{i}$ in the real dynamics $\mathcal{T}$ with real opponents $\pi^{-i}$ and the expected return $\hat{\eta}_i[\pi^{i}, \hat{\pi}^{-i}]$ on a learned dynamics model $\hat{\mathcal{T}}(s^{\prime}|s,a^{i}, a^{-i})$ with opponent models $\hat{\pi}^{-i}$. The return discrepancy upper bound $C$ can be expressed as 
\begin{equation}
    \left|\eta_i[\pi^{i}, \pi^{-i}] - \hat{\eta}_i[\pi^{i}, \hat{\pi}^{-i}] \right| \leq C~.
    \label{eq: bound0}
\end{equation}
Once the upper bound $C$ is derived, it may indicate the key influence from the dynamics model and the opponent models, and thus helps design model-based algorithms accordingly. 



In MARL with iterative policy optimization, we denote data-collecting policies in the previous iterations as $(\pi_{D}^{i}, \pi_{D}^{-i})$. Then we measure the distribution shift between the current policies and the data-collecting policies with the total variation distance $\epsilon_{\pi}^i=\max_{s}D_{TV}(\pi^i(\cdot|s)\|\pi_D^i(\cdot|s))$, and 
similarly for the opponent agent $j\in \{-i\}$.
Besides, interacting with $(\pi_{D}^{i}, \pi_{D}^{-i})$, we denote the generalization error of the learned dynamics model $\hat{\mathcal{T}}$ compared to the real environment dynamics $\mathcal{T}$ as $\epsilon_m = \max_t\mathbb{E}_{(s_{t},a_{t}^{i},a_{t}^{-i})\sim\pi_{D}^{i}, \pi_{D}^{-i}}$ 
$[D_{TV}(\mathcal{T}(\cdot|s_{t},a_{t}^i,a_{t}^{-i})\|\hat{\mathcal{T}}(\cdot|s_{t},a_{t}^i,a_{t}^{-i}))]$ and the error of the opponent model for agent $j$ as $\epsilon_{\hat{\pi}}^j=\max_{s}D_{TV}(\pi^{j}(\cdot|s)\| $ $\hat{\pi}^{j}(\cdot|s))$. Now we are ready to provide an upper bound.
\begin{restatable}{theorem}{bound}
Assume that the error of the learned dynamics model is bounded at each timestep by $\epsilon_m$, and the distance of the policies are bounded by $\epsilon_{\pi}^i$, $\epsilon_{\pi}^{j}$, and the errors of the opponent models are bounded by $\epsilon_{\hat{\pi}}^j$, for $j \in \{-i\}$. The return discrepancy upper bound can be expressed as
\begin{small}
\begin{align}
    &~ \big| \eta_i[\pi^{i}, \pi^{-i}] - \hat{\eta}_i[\pi^{i}, \hat{\pi}^{-i}] \big| \nonumber \\
    & \leq 2r_{\text{max}}\Big[\frac{\gamma(\epsilon_m+2\epsilon_{\pi}^i+2\sum_{j \in \{-i\}}\epsilon_{\pi}^{j}+\sum_{j \in \{-i\}}\epsilon_{\hat{\pi}}^j)}{1-\gamma}\\
    & \quad\quad\quad\quad+2\epsilon^{i}_{\pi}+2\sum_{j \in \{-i\}}\epsilon_{\pi}^{j}+\sum_{j \in \{-i\}}\epsilon_{\hat{\pi}}^j\Big].\nonumber
\end{align}
\end{small}
\end{restatable}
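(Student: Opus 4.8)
The plan is to reduce the return discrepancy to a $\gamma$-discounted sum of per-timestep distribution distances and then control each distance by propagating a single-step error through the induced Markov chain, adapting the single-agent branched-rollout argument of \cites{janner2019trust} to the \emph{joint} state--action distribution of the ego agent and its opponents. Let $\rho_t(s,a^i,a^{-i})$ and $\hat\rho_t(s,a^i,a^{-i})$ denote the joint occupancy over state and joint action at timestep $t$ under $(\pi^i,\pi^{-i},\mathcal{T})$ and $(\pi^i,\hat\pi^{-i},\hat{\mathcal{T}})$ respectively. Since $|R^i|\le r_{\text{max}}$, the elementary inequality $|\mathbb{E}_{p}[f]-\mathbb{E}_{q}[f]|\le 2\|f\|_\infty D_{TV}(p\|q)$ yields
\begin{small}
\begin{equation*}
 \big|\eta_i[\pi^i,\pi^{-i}]-\hat\eta_i[\pi^i,\hat\pi^{-i}]\big| \le 2r_{\text{max}}\sum_{t=1}^\infty \gamma^t\, D_{TV}(\rho_t\|\hat\rho_t),
\end{equation*}
\end{small}
so the $2r_{\text{max}}$ prefactor is accounted for and the whole task reduces to bounding the per-timestep occupancy distance $D_{TV}(\rho_t\|\hat\rho_t)$.

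\paragraph{One-step gap via a hybrid system.}
I would insert the intermediate system $(\pi^i,\pi^{-i},\hat{\mathcal{T}})$ that keeps the true opponents but swaps in the learned dynamics, and split the one-step transition-kernel gap by the triangle inequality into an opponent part ($\pi^{-i}$ vs.\ $\hat\pi^{-i}$) and a dynamics part ($\mathcal{T}$ vs.\ $\hat{\mathcal{T}}$). The opponent part is a difference of \emph{product} policies at a fixed state, so by subadditivity of total variation over product measures it is at most $\sum_{j\in\{-i\}}D_{TV}(\pi^j\|\hat\pi^j)\le \sum_{j\in\{-i\}}\epsilon_{\hat\pi}^j$, giving each opponent its own $\epsilon_{\hat\pi}^j$. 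The dynamics part is where $\epsilon_m$ enters; because $\epsilon_m$ is measured under the \emph{data-collecting} distribution $(\pi_D^i,\pi_D^{-i})$ rather than under the current policies, I must transfer it to the current occupancy by a change-of-measure step, and this is exactly what produces the policy-shift terms $2\epsilon_\pi^i$ and $2\sum_{j\in\{-i\}}\epsilon_\pi^j$ (the ego and opponent shifts again separated via TV subadditivity over the joint action).

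\paragraph{Propagate and sum.}
The errors split into two qualitatively different sources. The immediate action-level mismatch at the current state affects the reward directly and does not pass through the dynamics, so it never carries $\epsilon_m$ and contributes the standalone term $2\epsilon_\pi^i+2\sum_{j\in\{-i\}}\epsilon_\pi^j+\sum_{j\in\{-i\}}\epsilon_{\hat\pi}^j$. The remaining mismatch propagates through the state distribution: using that a stochastic transition kernel is nonexpansive in total variation, the previously accumulated discrepancy is not amplified and only the fresh per-step gap $\epsilon_m+2\epsilon_\pi^i+2\sum_{j\in\{-i\}}\epsilon_\pi^j+\sum_{j\in\{-i\}}\epsilon_{\hat\pi}^j$ is injected at each step. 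Summing the resulting $\gamma$-discounted series produces the $\tfrac{\gamma}{1-\gamma}$-weighted term, and collecting both contributions together with the $2r_{\text{max}}$ prefactor gives the claimed bound.

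\paragraph{Main obstacle.}
The single-agent telescoping carries over essentially unchanged, so the real work is the multi-agent bookkeeping in the second step: keeping straight \emph{which} reference distribution each quantity is defined against (the opponent-model errors $\epsilon_{\hat\pi}^j$ and the shifts $\epsilon_\pi^i,\epsilon_\pi^j$ are pointwise in the state, whereas $\epsilon_m$ is an expectation under $\pi_D$), and splitting both the product opponent policy and the joint-occupancy change-of-measure into per-opponent pieces so that every opponent contributes its own $\epsilon_{\hat\pi}^j$ and $\epsilon_\pi^j$ rather than a coarse joint quantity. Pinning down the exact constants---in particular the two factors of two on the policy shifts and the precise discount-factor exponent---is delicate but mechanical once this decomposition is fixed.
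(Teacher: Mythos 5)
Your proposal is correct and follows essentially the same route as the paper: the reduction to a discounted sum of total-variation distances between joint state--action occupancies, the triangle-inequality split through the hybrid system $(\pi^i,\pi^{-i},\hat{\mathcal{T}})$ (the paper's $L_1/L_2$ decomposition), the further change of measure through $(\pi_D^i,\pi_D^{-i})$ that produces both factors of two on the policy shifts, the per-opponent splitting of the product policy via TV subadditivity, and the linear error accumulation $\delta_t\le\delta_{t-1}+(\text{per-step gap})$ are all exactly the paper's Lemmas C.1--C.3 and Theorems A.1/B.1 in extended form. The only nit is that your stated reduction $|\eta_i-\hat\eta_i|\le 2r_{\text{max}}\sum_t\gamma^t D_{TV}(\rho_t\|\hat\rho_t)$ needs the $(1-\gamma)$-normalized occupancy convention (as the paper uses in Lemma C.3) to land on $\tfrac{\gamma}{1-\gamma}$ rather than $\tfrac{\gamma}{(1-\gamma)^2}$.
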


\begin{proof} 
The proof is provided in \app{sec:multi-opponent-bound}. 
\end{proof}
This upper bound can be minimized by improving models' accuracy, which is the standard objective in previous literature. We investigate the model usage instead, and a more instructive and practical upper bound is needed.



Here we first investigate a $k$-step rollout scheme, which is a natural extension of MBPO \cite{janner2019trust} to multi-agent scenario. This rollout scheme begins from a state collected by the previous data-collecting policies, and then run the current policy for $k$ steps under the learned dynamics model and the opponent models. 
Denoting the return of using multi-agent branched rollouts as $\eta_i^{\text{branch}}$, and the bound of the generalization error of the learned dynamics model with the current policies $(\pi^{i}, \pi^{-i})$
as $\epsilon_m^{\prime}=$
$\max_t\mathbb{E}_{(s_{t},a_{t}^{i},a_{t}^{-i})\sim\pi^{i}, \pi^{-i}}[D_{TV}$ $(\mathcal{T}(\cdot|s_{t},a_{t}^i,a_{t}^{-i})\|\hat{\mathcal{T}}(\cdot|s_{t},a_{t}^i,a_{t}^{-i}))]~,$ we derive the following return discrepancy upper bound.

\begin{theorem}\label{th: discrepancy-branched-rollout}
Assume the generalization error is bounded by $\epsilon_m^{\prime}$ and the distance of the policies are bounded as $\epsilon_{\pi}^{i}$, $\epsilon_{\pi}^{j}$ and $\epsilon_{\hat{\pi}}^j$ for $j \in \{-i\}$. The bound of the discrepancy between the return in real environment $\eta_i[\pi^{i}, \pi^{-i}]$ and the return using the dynamics model and opponent models with branched rollouts $\eta_i^{\text{branch}}[(\pi^{1}_{D},\hat{\pi}^1),\ldots,(\pi^{i}_{D},\pi^{i}),\ldots,(\pi^{n}_{D},\hat{\pi}^{n})]$ is
\begin{small}
\begin{align}
        \big| \eta_i&[\pi^{i}, \pi^{-i}] - \eta_i^{\text{branch}}[(\pi^{1}_{D},\hat{\pi}^1),\ldots,(\pi^{i}_{D},\pi^{i}),\ldots,(\pi^{n}_{D},\hat{\pi}^{n})] \big| \nonumber \\
        & \leq 2r_{\text{max}}\Big[\underbrace{k\epsilon_m^{\prime}+ (k+1)\sum_{j \in \{-i\}}\epsilon_{\hat{\pi}}^j}_{\text{model generalization error}} \nonumber\\
        & \quad+ \underbrace{\gamma^{k+1}\big(\epsilon_{\pi}^{i}+\sum_{j \in \{-i\}}\epsilon_{\pi}^{j}\big)+\frac{\gamma^{k+1}(\epsilon_{\pi}^{i}+\sum_{j \in \{-i\}}\epsilon_{\pi}^{j})}{1-\gamma}}_{\text{policy distribution shift}} \Big] \nonumber \\
        & = C(\epsilon_m^{\prime}, \epsilon_{\pi}^{i}, \epsilon_{\pi}^{-i}, \epsilon_{\hat{\pi}}^{-i}, k)~,
    \label{eq: bound2}
\end{align}
\end{small}
where 
$\epsilon_{\pi}^{-i}=\sum_{j \in \{-i\}}\epsilon_{\pi}^{j}$, $\epsilon_{\hat{\pi}}^{-i}=\sum_{j \in \{-i\}}\epsilon_{\hat{\pi}}^j$ and the pair $(\pi_{D}^j,\hat{\pi}^j)$ means that the policy $\pi_{D}^j$ and opponent model $\hat{\pi}^j$ are used before and after the branch point respectively for agent $j$.
\end{theorem}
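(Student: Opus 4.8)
The plan is to adapt the telescoping argument behind the single-agent branched-rollout bound of \cites{janner2019trust} to the multi-agent setting, where the one-step transition now factorizes into the opponents' joint action-sampling step $\pi^{-i}(a^{-i}|s)$ followed by the dynamics step $\mathcal{T}(s'|s,a^i,a^{-i})$. The foundational building block I would establish first is a marginal-coupling lemma: writing $p_t$ and $\hat{p}_t$ for the joint state--action marginals at timestep $t$ under the real process and under the branched process, and using $|R^i|\le r_{\text{max}}$,
\begin{equation*}
\big| \eta_i[\pi^i,\pi^{-i}] - \eta_i^{\text{branch}} \big| \;\le\; r_{\text{max}} \sum_{t} \gamma^{t}\, 2\, D_{TV}(p_t \| \hat{p}_t)~.
\end{equation*}
Everything then reduces to controlling the per-timestep marginal discrepancy $D_{TV}(p_t\|\hat{p}_t)$ and summing the resulting arithmetic and geometric series.

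Second, I would split the total discrepancy by the triangle inequality into a \textbf{policy-distribution-shift} term and a \textbf{model-generalization} term, inserting an intermediate return in which the process still runs in the real dynamics with the real opponents but already carries the branch structure (the data-collecting policies $\pi_D$ before the branch point, the current policy $\pi^i$ and opponent models after it). The model-generalization piece then isolates the effect of swapping $\mathcal{T}\to\hat{\mathcal{T}}$ and $\pi^{-i}\to\hat{\pi}^{-i}$ over the $k$ post-branch rollout steps, while the policy-shift piece isolates the effect of replacing $\pi_D$ by $(\pi^i,\pi^{-i})$.

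Third, for each piece I would unroll a one-step recursion for the marginal TV distance. The key decomposition is that a single application of the factorized MARL transition operator introduces at most three additive sources of error into $D_{TV}(p_{t+1}\|\hat{p}_{t+1})$ relative to $D_{TV}(p_t\|\hat{p}_t)$: the dynamics model error $\epsilon_m'$, the ego-policy mismatch $\epsilon_\pi^i$, and the summed opponent mismatches $\sum_{j\in\{-i\}}\epsilon_{\hat{\pi}}^j$ (resp.\ $\sum_{j}\epsilon_\pi^j$ in the shift term). Because the learned dynamics and opponent models are applied only during the $k$ branched steps, these errors accumulate \emph{arithmetically} over that window and are discounted by factors $\gamma^t\le 1$, yielding the $k\epsilon_m'$ and $(k+1)\sum_j\epsilon_{\hat{\pi}}^j$ contributions; the extra $+1$ arises because the opponent models govern the joint action at each of the $k+1$ states visited inside the branch window (entering both the reward and the input to the transition), whereas the learned dynamics acts only at the $k$ transitions. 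Conversely, the policy-distribution shift takes effect only once the process has left the branch window, so those TV increments are all multiplied by $\gamma^{k+1}$ and summed geometrically, producing the $\gamma^{k+1}(\epsilon_\pi^i+\sum_j\epsilon_\pi^j)$ boundary term together with its $\frac{\gamma^{k+1}(\epsilon_\pi^i+\sum_j\epsilon_\pi^j)}{1-\gamma}$ tail.

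Finally, I would add the two bounds, factor out $2r_{\text{max}}$, and collect terms to match the stated expression, identifying $\epsilon_\pi^{-i}=\sum_j\epsilon_\pi^j$ and $\epsilon_{\hat{\pi}}^{-i}=\sum_j\epsilon_{\hat{\pi}}^j$. The main obstacle I anticipate is the careful bookkeeping of the branched structure: getting the discount powers (the bare multipliers $k$ and $k+1$ on the model terms versus the $\gamma^{k+1}$ factors on the shift terms) and the exact multiplicities right requires tracking precisely at which timestep each error source first enters the marginal recursion, and ensuring the repeated triangle inequalities do not double-count the opponent contribution, which couples simultaneously into the action-sampling factor and into the state-transition input of the next step.
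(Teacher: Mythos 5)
Your proposal is correct and follows essentially the same route as the paper: a triangle inequality around an intermediate branched return, combined with a per-step recursion on the total-variation distance of the state--action marginals (the paper's Lemmas~\ref{le: bl1}--\ref{le: bl4} and their extended versions) that accumulates the post-branch model errors arithmetically with multiplicities $k$ and $k+1$ and discounts the pre-branch policy shift by $\gamma^{k+1}$. The only cosmetic difference is the choice of intermediate term --- you keep real dynamics and real opponents in the reference branched process and isolate all model swaps in one piece, whereas the paper's reference process already uses the learned dynamics post-branch so that $L_2$ isolates only the opponent-model error --- but since the underlying lemma is additive in the error sources, both groupings yield the identical bound.
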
 
\begin{proof}
The proof is provided in \app{sec:multi-opponent-bound}.
\end{proof}

The upper bound $C(\epsilon_m^{\prime}, \epsilon_{\pi}^{i}, \epsilon_{\pi}^{-i}, \epsilon_{\hat{\pi}}^{-i}, k)$ 
in \eq{eq: bound2} consists of the generalization errors of both dynamics model and opponent models as well as the policy distribution shift. Intuitively, choosing the optimal $k^{*}=\arg\min_{k>0} C(\epsilon_m^{\prime}, \epsilon_{\pi}^{i}, \epsilon_{\pi}^{-i}, \epsilon_{\hat{\pi}}^{-i}, k)$ 
with sufficiently low $\epsilon_m^{\prime}$ and $\sum_{j \in \{-i\}}\epsilon_{\hat{\pi}}^j$ minimizes the discrepancy, which, however, cannot directly achieve a low discrepancy if any opponent model has relatively large error. So we need to reduce the upper bound such that the policy trained with the model rollouts will still perform well in real environment, leading to improved sample efficiency. To be more specific, we can optimize the target $\eta_{i}$ through optimizing $\eta_{i}^{\text{branch}}$ if the bound is tight. And improving $\eta_{i}^{\text{branch}}$ by $C(\epsilon_m^{\prime}, \epsilon_{\pi}^{i}, \epsilon_{\pi}^{-i}, \epsilon_{\hat{\pi}}^{-i}, k)$ guarantees to improve the target $\eta_{i}$. It means that the policy improved under the environment model and the opponent models will get improved performance in the real environment. 

\minisection{Remark}
As the policy is trained using mainly the samples from the model rollouts, which do not contribute to the sample complexity. 
A reduced return discrepancy upper bound indicates that the policy will get more improvement in the real environment given that the policy is improved to the same degree using the models. In other words, the policy will obtain the same performance improvement in the real environment with fewer samples with a reduced discrepancy upper bound.

Now the problem becomes how to reduce the bound in order to achieve low sample complexity. Considering that $\epsilon_{\hat{\pi}}^{j}$ may be different across different opponent models, and sometimes the ego agent can call the real opponent agents for real actions, the rollout length $k$ needs not to be the same across different opponents. Thus, in Section~\ref{sec: method} we propose a method called \emph{\fullmethodname{}} (\methodname{}) to reduce the upper bound.




\section{The \methodname{} Method}
\label{sec: method}

Based on the bound analysis in Theorem~\ref{th: discrepancy-branched-rollout}, now we present the detailed \methodname{} algorithm and prove its convergence.

\begin{algorithm}[t]
\caption {\methodname{} Algorithm}
\label{alg}
Initialize policy $\pi_{\zeta}$, Q value function $Q_{\omega}$, predictive model $\hat{\mathcal{T}}_{\theta}$, opponent models $\pi_{\phi^{j}}$ for $j\in\{-i\}$, environment dataset $\mathcal{D}_{\text{env}}$, model dataset $\mathcal{D}_{\text{model}}$. \\ 
\For{$N$ epochs}
{
     Train model $\hat{\mathcal{T}}_{\theta}$ on $\mathcal{D}_{\text{env}}$. \label{alg: learn_model} \\
    \For{$E$ steps}
    {
        Take actions in environment via $\pi_{\zeta}$ with real opponents, add the transitions to $\mathcal{D}_{\text{env}}$.\\
        Train all opponent models $\pi_{\phi^{j}}$. \label{alg: learn_opp} \\
        Compute the errors for each opponent model $\epsilon_{\hat{\pi}}^{j}$. \label{alg: opp_err}\\
        \vspace{-5pt}
        For each opponent, compute $n^j = \lfloor k\frac{\min_{j^{\prime}}\epsilon_{\hat{\pi}}^{j^{\prime}}}{\epsilon_{\hat{\pi}}^j} \rfloor$.\\
        \For {$M$ model rollouts}
        {
            Sample $s_t$ uniformly from $\mathcal{D}_{\text{env}}$.\\
            Perform $k$-step rollouts from $s_t$:\\
            \For {$p=1, \ldots, k$}  
            {
            \label{alg_model_rollut_start}
            $a_{t+p-1}^{i} = \pi_{\zeta}(s_{t+p-1})$ \\
            For each opponent $j$: \\
                \If {$p\leq n^j$}
                {
                    $a_{t+p-1}^{j} = \pi_{\phi^{j}}(s_{t+p-1})$
                }
                \Else
                {
                    \SetKwFunction{FComm}{Comm}
                    $a_{t+p-1}^{j}$ = \FComm($s_{t+p-1}$, $j$)
                    \label{send_and_get}
                }
            $s_{t+p} = \hat{\mathcal{T}}_{\theta}(s_{t+p-1}, a_{t+p-1}^{i}, a_{t+p-1}^{-i})$
            }
            \label{alg_model_rollut_end}
            Add the transitions to $\mathcal{D}_{\text{model}}$.
        }
        Train the Q function and the policy using data from $\mathcal{D}_{\text{model}}$ \label{alg: learn_agent} with the loss in \eq{eq: Q} and \eq{eq: pi}.
    }
}
\SetKwProg{Fn}{Function}{ :}{}
\Fn{\FComm ($s$, $j$)}{
        Send state $s$ to agent $j$. \\
        \KwRet the received $a^{j}$
}
\end{algorithm}

\subsection{Algorithm Details of \methodname{}}
\label{subs: method}
We propose a model-based MARL algorithm named Multi-Agent Branched-Rollout Policy Optimization (\methodname{}). The overall algorithm of \methodname{} is shown in \alg{alg}. For simplicity, the detailed algorithm is described in the perspective of agent $i$. \methodname{} includes some key components, and the implementation of them is based on previous work, which serve as the preliminaries and are described in \app{app:building-blocks}. 

In \methodname{}, the agent $i$ learns a dynamics model and an opponent model for each of other agents and learns a policy based on the data collected from the model rollouts.


\begin{figure*}[t]
    \centering
    \hspace{-10pt}\includegraphics[width=0.85\linewidth]{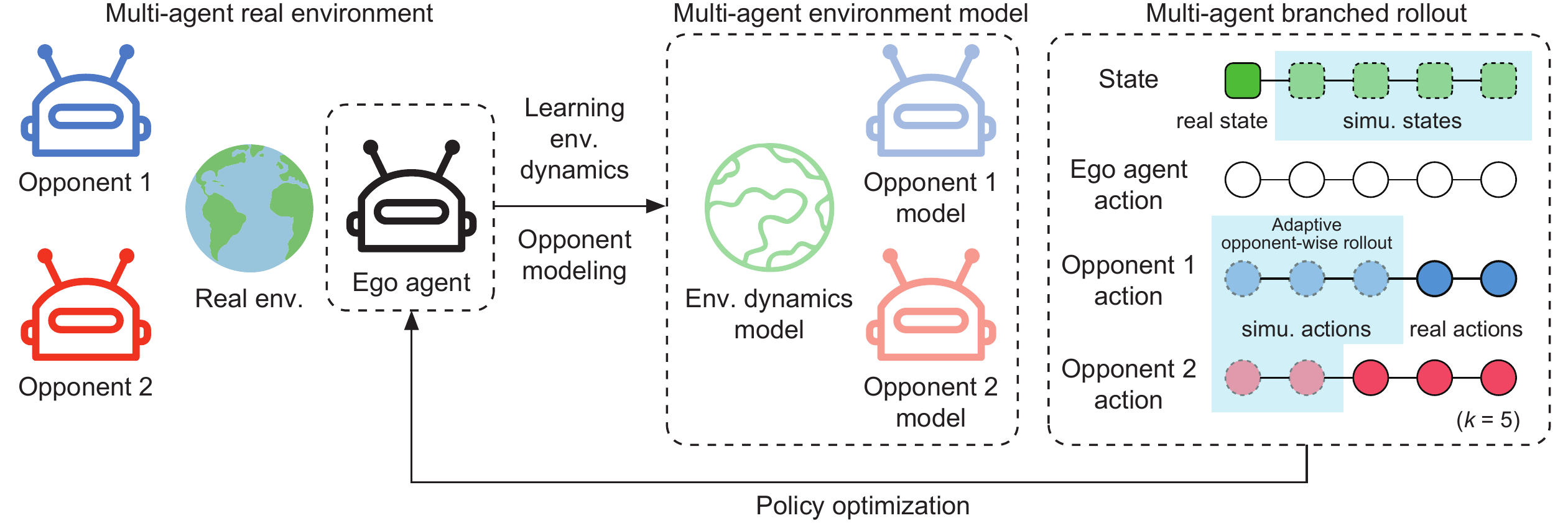}
    \caption{Illustration of the \textit{adaptive opponent-wise rollout policy optimization} (\methodname) method from the perspective of the ego agent. The ego agent begins the rollouts from the states collected from the real environment interacting with real opponents, then runs $k$ steps under the learned dynamics model. For each opponent agent, the ego agent interacts with it directly or with the opponent model for it in the $k$-step rollout, according to the prediction performance of its opponent model.}
    \label{fig: ma-mbrl}
    \vspace{-12pt}
\end{figure*}

\subsubsection{Agent Learning}
For the dynamics model, in line \ref{alg: learn_model} of \alg{alg}, a bootstrap ensemble of probabilistic dynamics models is trained to predict the environment dynamics as in \cites{chua2018deep}, and we select one of the models randomly from a uniform distribution when generating a prediction. For the opponent model, in line \ref{alg: learn_opp} of \alg{alg}, the policy of one opponent agent $j$ can be modeled as a Gaussian distribution of the action. We further use the opponent models to encourage the coordination behaviors. Agent $i$ makes a decision based on both the current state and the inferred opponent actions $\hat{a}^{-i}$.

For the policy learning, \methodname{} is implemented based on the Multi-Agent Soft Actor-Critic (MASAC) algorithm, which is a multi-agent version of Soft Actor-Critic algorithm \cite{haarnoja2018improved}. In line \ref{alg: learn_agent} of \alg{alg}, agent $i$ alternates between optimizing its policy and updating the value function. With the reparameterization function $f$ and a Gaussian $\mathcal{N}$, the policy optimization objective is
\begin{small}
\begin{align}
\label{eq: pi}
 J(\pi)=\mathbb{E}_{s_t, a_t^{-i} \sim \mathcal{D}, \epsilon_t \sim \mathcal{N}}[
 &\alpha\log\pi(f(\epsilon_t;s_t)|s_t, \hat{a}_{t}^{-i}) \\
 & -Q(s_t, f(\epsilon_t;s_t), a^{-i}_t)]~,\nonumber  
\end{align}
\end{small}
where $\mathcal{D}$ is the replay buffer, $\epsilon_t$ is the input noise and $\hat{a}_{t}^{-i} \sim \hat{\pi}^{-i}(\cdot|s_t)$.

The loss function of the Q function is
\begin{small}
\begin{align}
\label{eq: Q}
\vspace{-5pt}
        & J(Q) =  \mathbb{E}_{s_t, a_t^{i}, a_t^{-i} \sim \mathcal{D}}\Big[
        \frac{1}{2}\Big(Q(s_t,a_t^{i}, a_t^{-i}) \\
        & - \big(r_t + \gamma Q(s_{t+1}, a_{t+1}^{i}, a_{t+1}^{-i})
        -\alpha \log \pi(a_{t+1}^{i} | s_{t+1})\big)\Big)^2\Big]~, \nonumber
\end{align}
\end{small}
where $\alpha$ is the temperature hyperparameter.

\subsubsection{Adaptive Opponent-wise Rollout}
In this work, a $k$-step model rollout begins from a real state collected in the real environment. We design the \textit{adaptive opponent-wise rollout} scheme to reduce the bound. The opponent model generalization error term $(k+1)\sum_{j \in \{-i\}}\epsilon_{\hat{\pi}}^j$ of the bound in \eq{eq: bound2} reveals that different magnitudes of opponent model errors may lead to different contributions to the compounding error of the multi-step simulated rollouts. Intuitively, if too short rollouts are performed, the relatively accurate opponent models are not fully utilized, leading to low sample efficiency. In contrast, if the rollout is too long, the relatively inaccurate opponent models may cause the rollouts to depart from the real trajectory distribution heavily, leading to degraded performance in the environment and thereby low sample efficiency.
Thus our proposed adaptive opponent-wise rollout scheme lowers the rollout length of the relatively inaccurate opponent models but keeps long rollout length for relatively accurate opponent models, as shown in \fig{fig: ma-mbrl}.

We can achieve this 
since the agent $i$ can play with either the learned opponent model $\hat{\pi}^{j}$ or the real policy $\pi^{j}$ of each opponent $j$. In detail, for the opponent $j$, $\hat{\pi}^{j}$ is used for the first $n^j = \lfloor k\frac{\min_{j^{\prime}}\epsilon_{\hat{\pi}}^{j^{\prime}}}{\epsilon_{\hat{\pi}}^j} \rfloor$ steps, then the agent $i$ interacts with the real opponent $\pi^{j}$ in the rest $k-n^j$ steps. 
The compounding error contribution of each opponent model is thus bounded by $k \min_{j^{\prime}}{\epsilon_{\hat{\pi}}^{j'}}$. Note that the adaptive opponent-wise rollout scheme requires the ego agent $i$ can obtain the actions $a_t^j$ from the real policy in the last $k-n^j$ steps. In \cref{send_and_get} of \alg{alg}, the ego agent $i$ sends the simulated states to the opponent $j$ and requires the response, following the predefined communication protocol. One may argue that it will introduce extra opponent sample complexity since we need to communicate with the real opponent in the model rollouts. However, if we only use the opponent models, more errors are exploited, and it will lead to a poor policy and, finally, the sample complexity to achieve some good performance will be high. Our experiments in Section \ref{sec: exp} will justify this claim.


With such $n^j$'s, the generalization error caused by the opponent models becomes $\sum_{j \in \{-i\}}(n^j+1)\epsilon_{\hat{\pi}}^j \simeq (n-1) k \min_{j^{\prime}}\epsilon_{\hat{\pi}}^{j^{\prime}}$, which is remarkably reduced and makes a good balance of the contribution to overall generalization error from different opponent models. 
According to the remark of \Cref{th: discrepancy-branched-rollout}, improving the surrogate return $\eta_{i}^{\text{branch}}$ with a tighter discrepancy bound will improve the target return $\eta_{i}$ more efficiently.
Note that the comparison of different model usage schemes is provided in \app{app:two-rollout}, where the above scheme yields the highest asymptotic performance and sample efficiency.


\subsection{Convergence Guarantee}
\label{subs: convergence}


According to \cites{wei2018multiagent}, the optimal policy learned by MASQL is $\pi^*_{\text{MASQL}} = \exp{( \frac{1}{\alpha}Q^*(s_t, \cdot) - V^*(s_t) )} = \exp{( \frac{1}{\alpha}Q^*(s_t, \cdot))} / \exp{\left(V^*(s_t)\right)}$, where $Q^*$ and $V^*$ are the optimal \emph{Q} function and state value function respectively. As \cites{haarnoja2018improved} showed, the optimal policy learned by MASAC should be $\pi^*_{\text{MASAC}} = \exp{( \frac{1}{\alpha}Q^*(s_t, \cdot))} / Z(s_t)$. Since the partition function is $Z(s_t)=\exp{\left(V^*(s_t)\right)}$, with given the same optimal $Q^*$ and $V^*$, MASQL is equivalent to MASAC from the perspective of policy learning. With this fact, we prove that (1) using the learned dynamics model and the opponent models in \methodname{} still guarantees the convergence of MASQL; (2)  MASQL still guarantees the convergence of Nash Q-learning \cite{hu2003nash}. Thus we prove that \methodname{} achieves the convergence solution of Nash Q-learning. The theorems and proofs can be found in  \app{app: converge}, \Cref{th: MABRPO_convergence} and \Cref{th: MASQL_convergence}.

The theorems guarantee \methodname{}'s convergence under several assumptions. However, we show that the assumptions are not necessary conditions for the learning algorithm to converges by the following experimental results in \Cref{fig: climb}, in which the convergence is still guaranteed when the assumptions are not strictly satisfied. More empirical findings can be found in \cites{yang2018mean}.

\section{Experiments}\label{sec: exp}

\begin{figure*}
	\centering
	\includegraphics[width=0.9\linewidth]{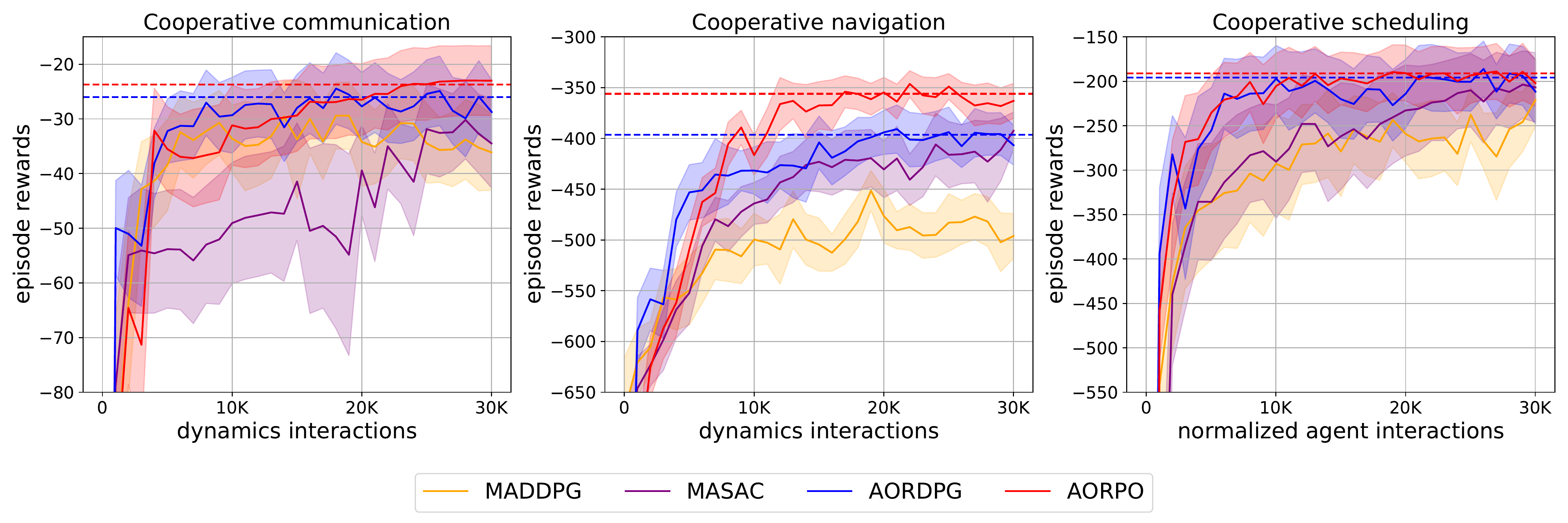}
	\caption{The average episode reward performance of five different random seeds in the cooperative tasks. \textit{Dynamics interactions} means the episodes in the training stage (the agents interacts with the dynamics environment 25 times in an episode). \textit{Normalized agent interactions} means the times the agent groups complete a full interaction, which is divided by 25. 
	}
	\label{fig: cooperative}
\end{figure*}


We evaluate the performance of \methodname{} in both competitive and cooperative tasks\footnote{Our implementation is available at \url{https://github.com/apexrl/AORPO}.}. We demonstrate that \methodname{} can converge to the Nash equilibrium and that \methodname{} is more sample-efficient than a series of baselines in both competitive and cooperative tasks. See \app{app: exp} for task details. 

\minisection{Compared Methods} We compare our method \methodname{} with the strong model-free MARL algorithms, MADDPG \cite{lowe2017multi} and MASAC. To show that our multi-agent models approach is generally helpful and to have fair comparisons, we also implement \methodname{} on top of MADDPG, named as \methodnametwo{}, for more comprehensive analysis.

\minisection{Implementation} As for the practical implementation of \methodname{}, we first collect real experience data using model-free algorithms and use them to pre-train the dynamics model and the opponent models. 
Other implementation details, including network architectures and important hyperparameters, are provided in \app{app: hyper_set}.


\subsection{Competitive Tasks}

\textbf{Climb} is a modified \textit{Battle of Sex} game. Agent 1 picks a number $a \in [-1, 1]$, and agent 2 picks a number $b \in [-1, 1]$. Their actions form a position $(a, b)$. There are 2 states and 2 corresponding landmarks located in the lower-left $(-0.5, -0.5)$ and in the upper right $(0.5, 0.5)$ respectively. The only Nash equilibrium is that the agents go to the lower-left landmark at state 1 and go to the upper right landmark at state 2.

\begin{figure}[tb]
	\centering
	\includegraphics[width=1\linewidth]{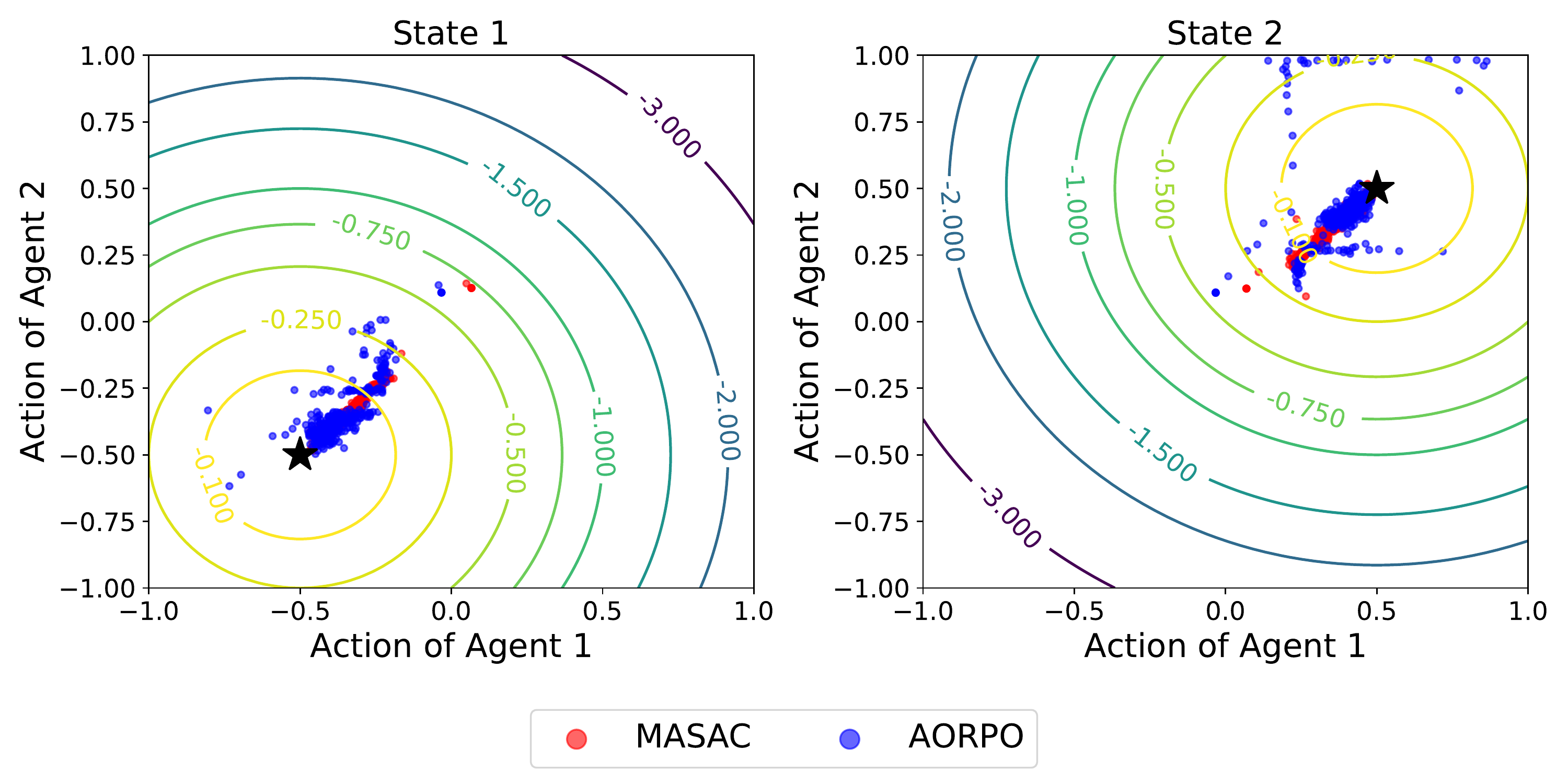}
	\caption{Climb task convergence.}
	\label{fig: climb}
\end{figure}


The reward surfaces of converged MASAC and \methodname{} for the two agents in the two states are shown in \fig{fig: climb}. The result verifies that \methodname{} will converge to the Nash equilibrium. In \app{subs: tab_res} and \ref{sec:comp-with-mbmarl}, we further provide more detailed results of comparing model-based and model-free methods conducted in several competitive environments of the Multi-Agent Particle Environment \cite{lowe2017multi} and the results of comparing \methodname{} with another model-based MARL, i.e., MAMSGM \cite{krupnik2019multi}.

\subsection{Cooperative Tasks}
Based on a multi-agent particle environment, we evaluate our method in two types of cooperative tasks: CESO tasks for complex environment dynamics and simple opponents, while SECO tasks for simple environment dynamics and complex opponents. CESO tasks include \textit{Cooperative communication} with two agents and three landmarks and \textit{Cooperative navigation} with three agents and three landmarks.
The studied SECO task is \textit{Cooperative scheduling} with three agents and three landmarks, as detailed in \app{app: exp}.

The laws of mechanics and the collisions' stochastic outcomes make difficulties for the dynamics model prediction, so we mainly consider the efficiency of interacting with the environment in the CESO tasks. In the left and middle subfigures of \fig{fig: cooperative}, both \methodname{} and \methodnametwo{} can reach the asymptotic performance of the state-of-the-art model-free baselines with fewer interactions with the environment, i.e., achieving lower dynamics sample complexity. 

The \textit{Cooperative scheduling} task is configured as a SECO task, with two kinds of agents and two kinds of action spaces. We consider the efficiency of interacting with the other agents in the SECO tasks.  As shown in the right subfigure of \fig{fig: cooperative}, 
our methods \methodname{} and \methodnametwo{} reach the asymptotic performance with fewer agents' interactions, which means the opponent sample complexity is lower. One may think using opponent models will indeed introduce extra agent interactions when running rollouts, but interestingly, \methodname{} uses fewer agent interactions to achieve some performance since it helps policy learn faster. More detailed experiments are presented in \app{app: exp}. 


\subsection{Analysis on Model Usage}
We further discuss the effect of the opponent model usage. Although the generalization errors caused by the opponent models in the return discrepancy are hard to estimate, the opponent models affect the performance through the model compounding errors. We investigate the model compounding errors when using adaptive opponent-wise rollout or using the opponent models in a whole rollout. Specifically, for a real trajectory $(s_0, a_0, \ldots, s_h)$, a branched rollout from the start state $s_0$ under learned dynamics model and opponent models is denoted as $(s_0, \hat{a}_0, \ldots, \hat{s}_h)$. Following \cites{nagabandi2018neural}, the model compounding error is defined as $\epsilon_{\text{c}} = \frac{1}{h}\sum_{i=1}^{h}\|s_i-\hat{s}_i\|^2_2$. 



Using opponent models reduces interactions among agents but causes extra compounding errors compared with real opponents. We show the trade-off between model errors and interaction times in \fig{fig: comp_r_l}, where the left and right y-axes are for the solid lines and the dotted lines, respectively. We observe the adaptive opponent-wise rollout achieves lower model compounding error than rollout with all opponent models while reducing the number of interactions compared to rollout with real opponents. 


\begin{figure}
	\includegraphics[width=\columnwidth]{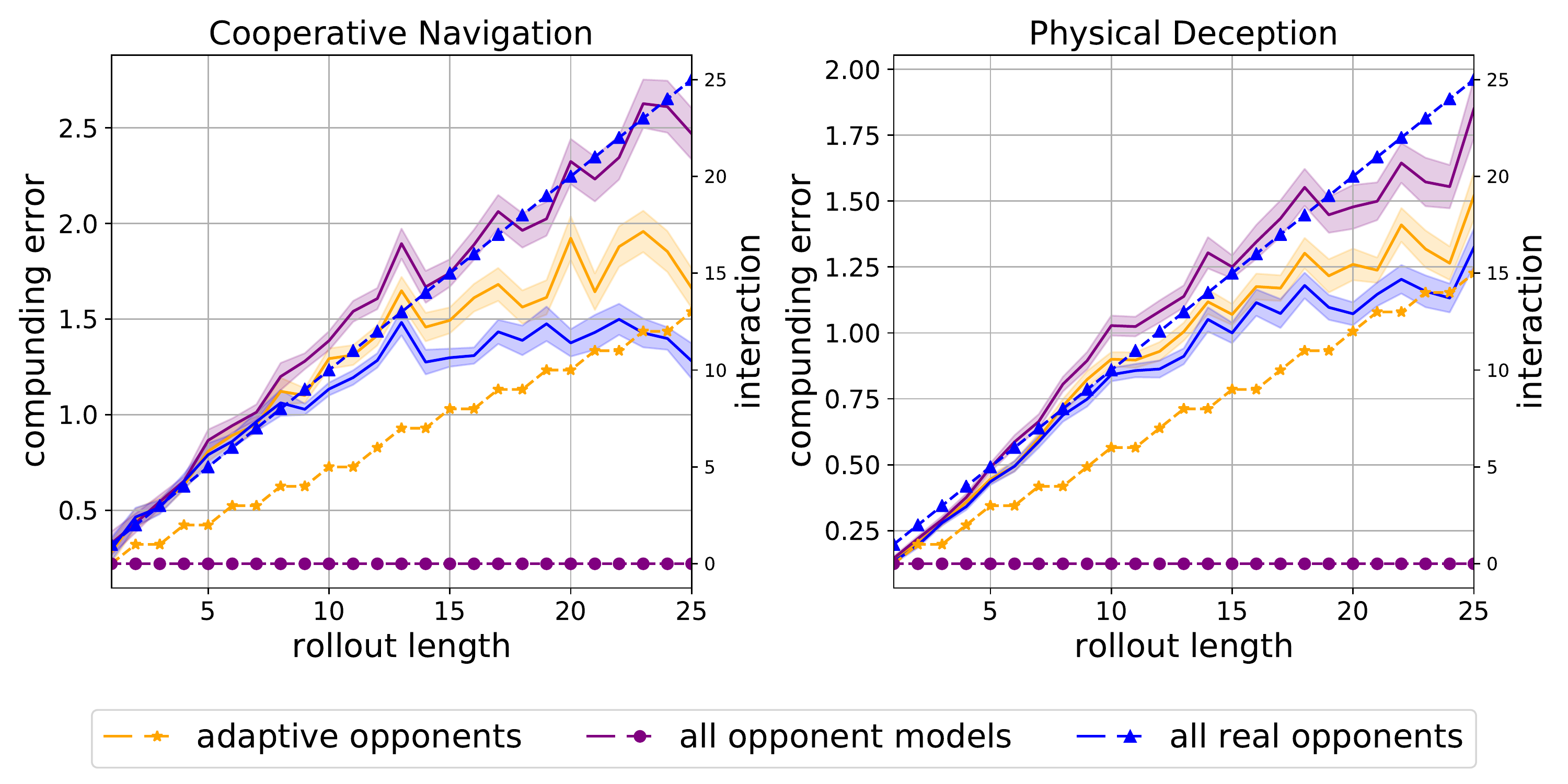}
	\captionof{figure}{Model compounding errors and interactions numbers of different model usages.}
	\label{fig: comp_r_l}
\end{figure}
\begin{figure}
	\includegraphics[width=\columnwidth]{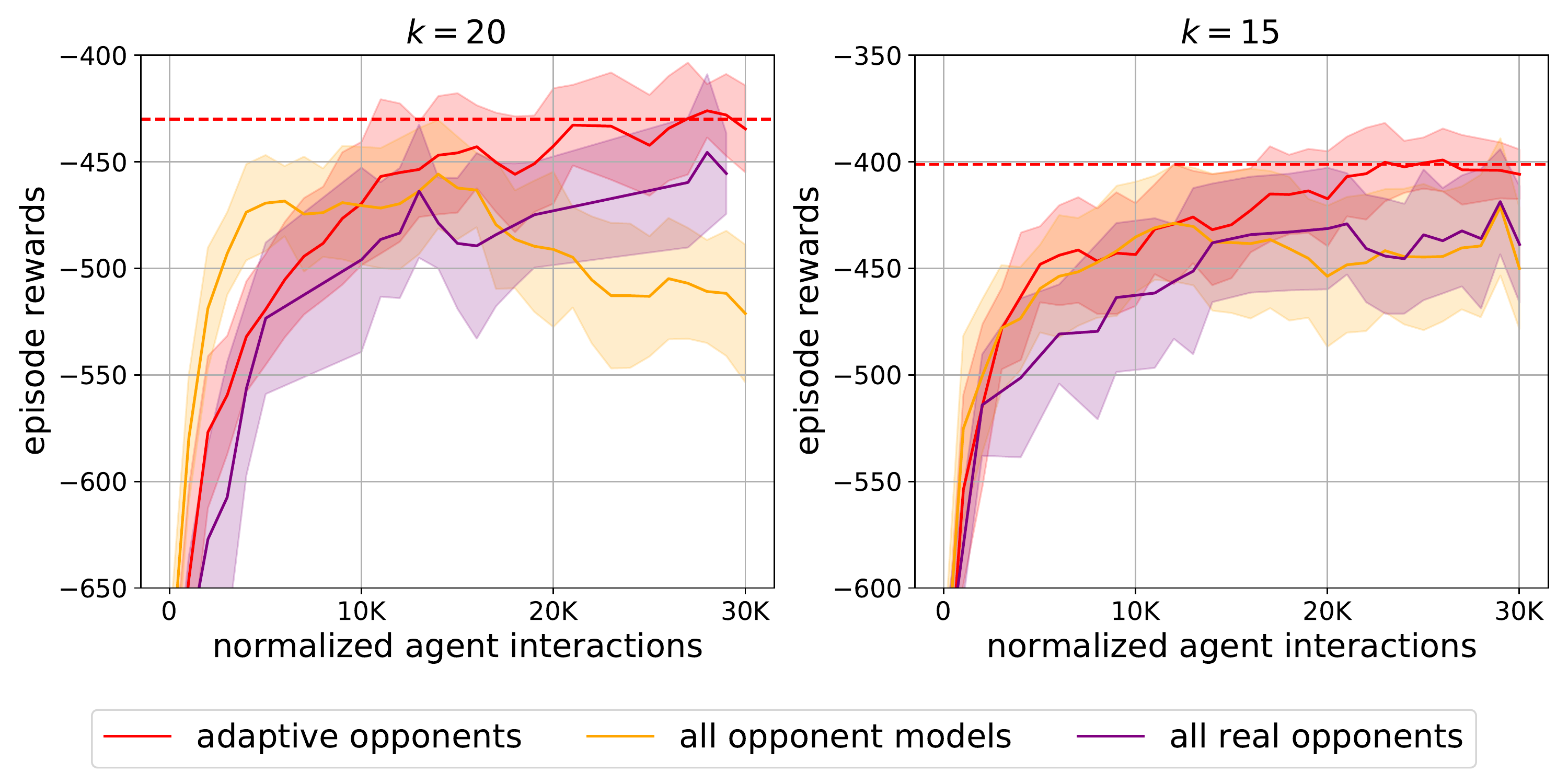}
	\captionof{figure}{Performance curves of average episode reward of different model usages.}
	\label{fig: ada}
\end{figure}

Moreover, we investigate how different rollout schemes affect the sample efficiency. In \fig{fig: ada}, we compare three kinds of usages in a Cooperative navigation scenario using two settings of rollout length $k$. We notice that using the adaptive opponent-wise rollout achieves the same performance with fewer interactions among the agents than interacting with all real opponents, verifying that although we introduce some sample complexity while simulating data, the reduced discrepancy bound ensures the model-based methods to improve the sample efficiency. 
It can also avoid performance drop caused by compounding error compared with using all opponent models, which means that the adaptive opponent-wise rollout scheme mitigates some bad opponent models' negative effect by leveraging different rollout lengths of opponent models.

\section{Conclusion}\label{sec:conclusion}
In this paper, we investigated model-based MARL problems with both theoretical and empirical analyses. We specified two parts of sample complexity in MARL and derived an upper bound of the return discrepancy in model-based MARL w.r.t. the policy distribution shift and the generalization errors of the learned dynamics model and opponent models.
Inspired by the theoretical analysis, we designed the \methodname{} algorithm framework, in which the return discrepancy can be reduced by adaptive opponent-wise rollout controlling and the two parts of sample complexity are strategically reduced by the learned dynamics and opponent models. We then proved that \methodname{} can converge to Nash Q-values with reasonable assumptions. In experiments, \methodname{} has shown highly comparable asymptotic performance with the model-free MARL baselines while achieving higher sample efficiency. 
For future work, we plan to look deeper into various adaptive opponent-wise rollout schemes for different settings of model-based MARL. 
We will also investigate theoretical and empirical results of multi-agent dynamics model learning to further ease model-based MARL and improve its sample efficiency.

\section*{Acknowledgements}
This work is supported by ``New Generation of AI 2030'' Major Project (2018AAA0100900) and National Natural Science Foundation of China (62076161, 61632017).
Xihuai Wang is supported by Wu Wen Jun Honorary Doctoral Scholarship, AI Institute, Shanghai Jiao Tong University.
We thank Yaodong Yang for helpful discussion.

\bibliographystyle{named}
\bibliography{reference}

\newpage
\onecolumn
\appendix
\section*{Appendix}

\section{Upper Bounds for Centralized Model-based MARL}
\label{app: cen_bounds}

In this section, we derive an upper bound of the return discrepancy between the expected return under the real environment and the expected return under a learned dynamics model, both from the perspective of the ego agent, i.e., agent $i$, interacting with the known real opponent agents, which corresponds to centralized MARL settings and will not produce extra opponent sample complexity.

\begin{theorem}
	\label{th: cen_bound} Assume that the expected total variation distance between the learned dynamics model $\hat{\mathcal{T}}$ and the real dynamics model $\mathcal{T}$ is bounded at each timestep under the expectation of $\pi_D^i$ and $\pi_D^{-i}$ by $\epsilon_m=\max_t\mathbb{E}_{(s_{t},a_{t}^{i},a_{t}^{-i})\sim\pi_{D}^{i},\pi_{D}^{-i}}[D_{TV}$
	$(\mathcal{T}(\cdot|s_{t},a_{t}^i,a_{t}^{-i})\|\hat{\mathcal{T}}(\cdot|s_{t},a_{t}^i,a_{t}^{-i}))]$, and the distance of the policies are bounded as $\epsilon_{\pi}^i=\max_{s}D_{TV}(\pi^i(\cdot|s)\|\pi_D^i(\cdot|s))$, $\epsilon_{\pi}^{-i}=\max_{s}D_{TV}($ $\pi^{-i}(\cdot|s)\|\pi_D^{-i}(\cdot|s))$, where subscript $D$ means the certain policies are used to collect data\footnote{In our paper, $\mathbb{E}_{s_t, a_t\sim \pi}\left[\cdot\right]$ has the same meaning as $\mathbb{E}_{s, a\sim \pi_{t}}\left[\cdot\right]$ in \cites{janner2019trust}.}. Then the returns under the real dynamics and the learned dynamics model are bounded as:
	$$
	\left|\eta_i[\pi^{i}, \pi^{-i}] - \hat{\eta}_i[\pi^{i}, \pi^{-i}]\right| \leq 2r_{\text{max}} \Big[\frac{\gamma(\epsilon_m+2\epsilon_{\pi}^i+2\epsilon_{\pi}^{-i})}{1-\gamma}+2\epsilon^{i}_{\pi}+2\epsilon^{-i}_{\pi}\Big]~,
	$$
	where $\hat{\eta}$ means that the policies are executed under the learned dynamics model.
\end{theorem}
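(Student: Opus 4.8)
The plan is to recast agent $i$'s two-policy comparison as a single-agent model-based return discrepancy and then adapt the telescoping argument of \cites{janner2019trust}. From agent $i$'s viewpoint I would write the joint action $a=(a^i,a^{-i})$ and treat the product $\bar{\pi}(a|s)=\pi^i(a^i|s)\,\pi^{-i}(a^{-i}|s)$ as a single effective policy over $\mathcal{A}$, with transition kernel $\mathcal{T}$ and reward $R^i$. Under this reduction $\eta_i[\pi^i,\pi^{-i}]$ and $\hat{\eta}_i[\pi^i,\pi^{-i}]$ are the returns of the \emph{same} policy $\bar{\pi}$ under $\mathcal{T}$ and $\hat{\mathcal{T}}$, so the discrepancy is caused only by the dynamics mismatch; the policy-shift quantities must therefore enter solely because $\epsilon_m$ is defined along trajectories of the data-collecting product policy $\bar{\pi}_D(a|s)=\pi_D^i(a^i|s)\,\pi_D^{-i}(a^{-i}|s)$ rather than along $\bar{\pi}$-trajectories.

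\textbf{Key steps.} First I would record the one genuinely multi-agent ingredient: since $\bar{\pi}$ and $\bar{\pi}_D$ both factorize, subadditivity of total variation for product measures gives $D_{TV}(\bar{\pi}(\cdot|s)\|\bar{\pi}_D(\cdot|s)) \le \epsilon_\pi^i+\epsilon_\pi^{-i}$ uniformly in $s$, which is what collapses the per-agent shifts into the combined $\epsilon_\pi^i+\epsilon_\pi^{-i}$ used in the bound. Second, I would expand the return gap as $|\eta_i-\hat{\eta}_i|\le 2r_{\text{max}}\sum_{t=1}^{\infty}\gamma^t D_{TV}(p_t\|\hat{p}_t)$, where $p_t,\hat{p}_t$ are the joint state-action marginals at step $t$ under $(\bar{\pi},\mathcal{T})$ and $(\bar{\pi},\hat{\mathcal{T}})$, using $|\mathbb{E}_{p}[R^i]-\mathbb{E}_{\hat{p}}[R^i]|\le r_{\text{max}}\|p-\hat{p}\|_1$. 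Third, I would bound each per-step marginal distance $D_{TV}(p_t\|\hat{p}_t)$ by propagating one increment of dynamics error through the shared policy, with the identical initial distribution supplying the base case.

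\textbf{Main obstacle.} The step I expect to be hardest is that the recursion needs the dynamics error in expectation under $\bar{\pi}$, whereas the hypothesis supplies it only under $\bar{\pi}_D$. I would bridge this with $|\mathbb{E}_{(s,a)\sim\bar{\pi}}[D_{TV}(\mathcal{T}\|\hat{\mathcal{T}})]-\mathbb{E}_{(s,a)\sim\bar{\pi}_D}[D_{TV}(\mathcal{T}\|\hat{\mathcal{T}})]|\le 2\,D_{TV}(p_t^{\bar{\pi}}\|p_t^{\bar{\pi}_D})$, valid because the integrand lies in $[0,1]$, and then control the occupancy shift $D_{TV}(p_t^{\bar{\pi}}\|p_t^{\bar{\pi}_D})$ by the per-step policy divergence $\epsilon_\pi^i+\epsilon_\pi^{-i}$. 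This yields a per-step marginal bound of the form $\epsilon_m+2(\epsilon_\pi^i+\epsilon_\pi^{-i})$ together with a non-discounted boundary contribution $2(\epsilon_\pi^i+\epsilon_\pi^{-i})$ from the policy-shift correction; summing $\sum_{t\ge1}\gamma^t$ over the per-step term then assembles the $\tfrac{\gamma(\epsilon_m+2\epsilon_\pi^i+2\epsilon_\pi^{-i})}{1-\gamma}$ coefficient, and the boundary contribution supplies the additive $2\epsilon_\pi^i+2\epsilon_\pi^{-i}$, giving the claimed bound. Once the product-policy subadditivity and the data-to-current conversion are established, the remaining work is exactly the single-agent bookkeeping, so the new content is isolated in those two points.
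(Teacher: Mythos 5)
Your setup is fine through the second step: viewing $\bar{\pi}=\pi^i\pi^{-i}$ as a single joint policy, using TV subadditivity for the product (this is exactly \Cref{le: bl1}), and expanding the return gap through the discounted state-action marginals are both consistent with the paper's machinery. The genuine gap is in your third step, the ``data-to-current conversion.'' The per-step increment in the recursion for $\delta_t=D_{TV}(p_t\|\hat{p}_t)$ is $\mathbb{E}_{(s,a)\sim q_{t-1}}[D_{TV}(\mathcal{T}(\cdot|s,a)\|\hat{\mathcal{T}}(\cdot|s,a))]$, where $q_{t-1}$ is the step-$(t-1)$ marginal of one of the two systems being compared, i.e., the current policy $\bar{\pi}$ under $\mathcal{T}$ or under $\hat{\mathcal{T}}$ --- not the data-collecting marginal under which $\epsilon_m$ is defined. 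Your proposed bridge $\mathbb{E}_{q_{t-1}}[\cdot]\le \epsilon_m + 2D_{TV}(q_{t-1}\|p_{t-1}^{\bar{\pi}_D})$ is valid, but the correction term is \emph{not} uniformly $2(\epsilon_\pi^i+\epsilon_\pi^{-i})$: the marginal gap between two policies accumulates over time, $D_{TV}(p_{t-1}^{\bar{\pi}}\|p_{t-1}^{\bar{\pi}_D})\le (t-1)(\epsilon_\pi^i+\epsilon_\pi^{-i})$ (this is \Cref{le: bl2} with $M=0$, and the linear growth in $t$ cannot be removed in general). Feeding a linearly growing increment into the recursion makes the policy-shift contribution to $\delta_t$ grow like $t^2$, and the discounted sum then yields a coefficient of order $(1-\gamma)^{-2}$ on $\epsilon_\pi^i+\epsilon_\pi^{-i}$, strictly weaker than the claimed $\gamma/(1-\gamma)$ plus an additive constant. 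Worse, if the recursion's expectation lands on the model-rollout marginal $p_{t-1}^{\bar{\pi},\hat{\mathcal{T}}}$, the gap $D_{TV}(p_{t-1}^{\bar{\pi},\hat{\mathcal{T}}}\|p_{t-1}^{\bar{\pi}_D,\mathcal{T}})$ itself depends on the dynamics error you are in the middle of bounding, so the argument becomes circular.

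The paper avoids this entirely by adding and subtracting the reference return $\eta_i[\pi_D^i,\pi_D^{-i}]$ and applying the general two-system lemma (\Cref{le: bl3}) twice: once with $M=0$, $P=\epsilon_\pi^i$, $O=\epsilon_\pi^{-i}$ to bound $|\eta_i[\pi^i,\pi^{-i}]-\eta_i[\pi_D^i,\pi_D^{-i}]|$, and once with $M=\epsilon_m$, $P=\epsilon_\pi^i$, $O=\epsilon_\pi^{-i}$ to bound $|\eta_i[\pi_D^i,\pi_D^{-i}]-\hat{\eta}_i[\pi^i,\pi^{-i}]|$. In the second comparison the reference system is exactly the data-collecting policies under the real dynamics, so the dynamics error is evaluated precisely under the distribution that defines $\epsilon_m$ and no conversion is needed; the cost is paying the policy shift linearly in both halves, which is exactly where the factor $2$ on $\epsilon_\pi^i$ and $\epsilon_\pi^{-i}$ in the statement comes from. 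If you want to keep your single-comparison framing, you must insert the intermediate data-collecting system at the level of returns or occupancy measures, not inside the per-step increment of the marginal recursion.
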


\begin{proof}
	We can use \Cref{le: bl3} to bound the returns, and it requires to bound the error of the learned dynamics model. Thus we need to introduce $\pi^{i}_{D}$ and $\pi^{-i}_{D}$ by adding and subtracting $\eta[\pi^{i}_{D},\pi^{-i}_{D}]$:
	$$
	\left|\eta_i[\pi^{i}, \pi^{-i}] - \hat{\eta}_i[\pi^{i}, \pi^{-i}]\right| \leq \underbrace{\left|\eta_i[\pi^{i}, \pi^{-i}] - \eta_i[\pi^{i}_{D},\pi^{-i}_{D}]\right|}_{L_1} + \underbrace{\left|\eta_i[\pi^{i}_{D},\pi^{-i}_{D}] - \hat{\eta}_i[\pi^{i}, \pi^{-i}]\right|}_{L_2}~.
	$$
	We can bound both $L_1$ and $L_2$ using \Cref{le: bl3}.
	
	For $L_1$, we apply \Cref{le: bl3} using $M=0$, $\mathcal{P}=\epsilon_{\pi}^{i}$, $O=\epsilon_{\pi}^{-i}$, and obtain
	$$
	L_1 \leq 2r_{\text{max}} \Big[\frac{\gamma(\epsilon_{\pi}^i+\epsilon_{\pi}^{-i})}{1-\gamma}+\epsilon^{i}_{\pi}+\epsilon^{-i}_{\pi} \Big]~.
	$$
	
	For $L_2$, we apply \Cref{le: bl3} using $M=\epsilon_m$, $P=\epsilon_{\pi}^{i}$, $O=\epsilon_{\pi}^{-i}$, and obtain
	$$
	L_2 \leq 2r_{\text{max}}\Big[\frac{\gamma(\epsilon_m+\epsilon_{\pi}^i+\epsilon_{\pi}^{-i})}{1-\gamma}+\epsilon^{i}_{\pi}+\epsilon^{-i}_{\pi}\Big]~.
	$$
	Adding the two bounds together completes the proof.
\end{proof}

\begin{extheorem} \label{exth: cen}
	Assume that the expected total variation distance between the learned dynamics model and the real dynamics model is bounded at each timestep under the expectation of $\pi_D^i$ and $\pi_{D}^{-i}$ by 
	$\epsilon_m=\max_t\mathbb{E}_{(s_{t},a_{t}^{i},a_{t}^{-i})\sim \pi_{D}^{i},\pi_{D}^{-i}}[D_{TV}(\mathcal{T}(\cdot|s_{t},a_{t}^{i},
	a_{t}^{-i})\|\hat{\mathcal{T}}(\cdot|s_{t},a_{t}^{i},a_{t}^{-i}))]$
	, and the distance of the policies are bounded as 
	$\epsilon_{\pi}^i=\max_{s}D_{TV}(\pi^i(\cdot|s)\|\pi_D^i(\cdot|s))$, $\epsilon_{\pi}^{j}=\max_{s}D_{TV}(\pi^{j}(\cdot|s)\|\pi_{D}^{j}(\cdot|s))$ 
	for $j \in \{-i\}$. Then the returns under the real dynamics and the learned dynamics model are bounded as:
	\begin{align*}
	\left|\eta_i[\pi^{i}, \pi^{-i}] - \hat{\eta}_i[\pi^{i}, \pi^{-i}]\right| 
	&\leq 2r_{\text{max}} \Big[\frac{\gamma(\epsilon_m+2\epsilon_{\pi}^i+2\sum_{j\in\{-i\}}\epsilon_{\pi}^{j})}{1-\gamma}+2\epsilon^{i}_{\pi}+2\sum_{j\in\{-i\}}\epsilon_{\pi}^{j} \Big] \\
	&=C(\epsilon_m, \epsilon_{\pi}^i, \epsilon_{\pi}^{-i})~,
	\end{align*}
	where $\epsilon_{\pi}^{-i}=\sum_{j\in\{-i\}}\epsilon_{\pi}^{j}$ is the distribution shift caused by the opponent agent policies.
\end{extheorem}

\begin{proof}
	Following the similar procedures of proving Theorem \ref{th: cen_bound} while using Extended Lemmas (\ref{exle: bl1}-\ref{exle: bl4}) completes the proof.
\end{proof}


The discrepancy bound of the returns in \Cref{exth: cen} is denoted as $C(\epsilon_m, \epsilon_{\pi}^i, \epsilon_{\pi}^{-i})$, in which the generalizatoin error of the dynamics model $\epsilon_m$ is the only controllable variable, i.e., can be reduced by improving the performance of the dynamics model\footnote{We are aware of the possibility of controlling the policy distribution shift terms $\epsilon_{\pi}^i, \epsilon_{\pi}^{-i}$ by performing trust region based methods, like SLBO \cite{luo2019algorithmic}. However, trust region based methods constrain the distribution shift between the current policy $\pi$ and the \emph{last round} policy $\pi_D$. By contrast, in our discussion, $\pi_D$ represents the policies in all previous rounds.}. 
However, in a centralized model-based MARL method, amounts of interactions between the agents are required, which leads to low sample efficiency.


\section{Upper Bounds for Model-based Multi-agent Policy Optimization with Adaptive Opponent-wise Rollouts}
\label{app: bounds}

In this section, we prove the upper bounds presented in the main paper for decentralized model-based MARL, from the perspective of the ego agent, i.e., agent $i$, where opponent modeling is required to infer the actions of opponent agents. Specifically, We first prove the upper bounds for the situations where the ego agent plays with one opponent, and then prove the upper bounds for the situations where the ego agent plays with multiple opponents.

\subsection{Single Opponent Case}

\begin{theorem}
	Assume that the expected total variation distance between the learned dynamics model $\hat{\mathcal{T}}$ and the real dynamics model $\mathcal{T}$ is bounded at each timestep under the expectation of $\pi_D^i$ and $\pi_D^{-i}$ by $\epsilon_m=\max_t\mathbb{E}_{(s_{t},a_{t}^{i},a_{t}^{-i})\sim\pi_{D}^{i},\pi_{D}^{-i}}[D_{TV}(\mathcal{T}(\cdot|s_{t},a_{t}^i,a_{t}^{-i})\|$
	$\hat{\mathcal{T}}(\cdot|s_{t},a_{t}^i,a_{t}^{-i}))]$, the distribution shift of the policies are bounded as $\epsilon_{\pi}^i=\max_{s}D_{TV}(\pi^i(\cdot|s)\|\pi_D^i(\cdot|s))$, $\epsilon_{\pi}^{-i}=\max_{s}D_{TV}(\pi^{-i}(\cdot|s)\|\pi_D^{-i}(\cdot|s))$,
	the generalization error of the opponent model is bounded as $\epsilon_{\hat{\pi}}=\max_{s}D_{TV}(\pi^{-i}(\cdot|s)\|$
	$\hat{\pi}^{-i}(\cdot|s))$.
	Then the discrepancy of returns under the real dynamics and the learned dynamics with opponent models is bounded as
	\begin{align*}
	\left| \eta_i[\pi^{i}, \pi^{-i}] - \hat{\eta}_i[\pi^{i}, \hat{\pi}^{-i}] \right|
	\leq 2r_{\text{max}}\Big[\frac{\gamma(\epsilon_m+2\epsilon_{\pi}^i+2\epsilon_{\pi}^{-i}+\epsilon_{\hat{\pi}}^{-i})}{1-\gamma}+2\epsilon^{i}_{\pi}+2\epsilon_{\pi}^{-i}+\epsilon_{\hat{\pi}}^{-i}\Big].
	\end{align*}
\end{theorem}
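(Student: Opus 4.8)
The plan is to mirror the argument used for the centralized bound in \Cref{th: cen_bound}, inserting the data-collecting policies $(\pi_D^i, \pi_D^{-i})$ as an intermediary and then invoking \Cref{le: bl3} twice. First I would apply the triangle inequality to split the target discrepancy as
\begin{align*}
\left| \eta_i[\pi^{i}, \pi^{-i}] - \hat{\eta}_i[\pi^{i}, \hat{\pi}^{-i}] \right| \leq \underbrace{\left|\eta_i[\pi^{i}, \pi^{-i}] - \eta_i[\pi_D^i, \pi_D^{-i}]\right|}_{L_1} + \underbrace{\left|\eta_i[\pi_D^i, \pi_D^{-i}] - \hat{\eta}_i[\pi^{i}, \hat{\pi}^{-i}]\right|}_{L_2}~.
\end{align*}
Here $L_1$ isolates the pure policy distribution shift under the true dynamics, while $L_2$ collects the dynamics-model error together with the combined opponent discrepancy; the two real-dynamics returns $\eta_i[\pi_D^i, \pi_D^{-i}]$ cancel, so the split is legitimate.

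For $L_1$ I would apply \Cref{le: bl3} exactly as in the centralized case with $M = 0$ (no model is involved, since both returns run under $\mathcal{T}$), $P = \epsilon_\pi^i$ and $O = \epsilon_\pi^{-i}$, yielding
\begin{align*}
L_1 \leq 2r_{\text{max}}\Big[\frac{\gamma(\epsilon_\pi^i + \epsilon_\pi^{-i})}{1-\gamma} + \epsilon_\pi^i + \epsilon_\pi^{-i}\Big]~.
\end{align*}

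The key new step is $L_2$, where the learned dynamics model $\hat{\mathcal{T}}$ and the opponent model $\hat{\pi}^{-i}$ both enter. The observation is that the effective opponent shift from the data-collecting policy $\pi_D^{-i}$ to the opponent model $\hat{\pi}^{-i}$ is controlled via the triangle inequality for total variation, $D_{TV}(\hat{\pi}^{-i}(\cdot|s) \| \pi_D^{-i}(\cdot|s)) \leq D_{TV}(\hat{\pi}^{-i}(\cdot|s)\|\pi^{-i}(\cdot|s)) + D_{TV}(\pi^{-i}(\cdot|s)\|\pi_D^{-i}(\cdot|s)) \leq \epsilon_{\hat{\pi}}^{-i} + \epsilon_\pi^{-i}$. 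Feeding this combined opponent discrepancy into \Cref{le: bl3} with $M = \epsilon_m$, $P = \epsilon_\pi^i$ and $O = \epsilon_\pi^{-i} + \epsilon_{\hat{\pi}}^{-i}$ gives
\begin{align*}
L_2 \leq 2r_{\text{max}}\Big[\frac{\gamma(\epsilon_m + \epsilon_\pi^i + \epsilon_\pi^{-i} + \epsilon_{\hat{\pi}}^{-i})}{1-\gamma} + \epsilon_\pi^i + \epsilon_\pi^{-i} + \epsilon_{\hat{\pi}}^{-i}\Big]~.
\end{align*}

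Adding $L_1$ and $L_2$ and collecting coefficients then recovers exactly the claimed bound, with $\epsilon_m$ appearing once, $\epsilon_\pi^i$ and $\epsilon_\pi^{-i}$ appearing with coefficient two, and $\epsilon_{\hat{\pi}}^{-i}$ with coefficient one in each bracket. I expect the main obstacle to be verifying that \Cref{le: bl3} genuinely accepts the opponent-model substitution the way it handles a true opponent policy shift, i.e. that replacing $\pi^{-i}$ by $\hat{\pi}^{-i}$ in the simulated return behaves formally like an additional distribution shift of magnitude $\epsilon_{\hat{\pi}}^{-i}$ inside the lemma's telescoping/compounding-error argument, rather than requiring a separate treatment. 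Once that is confirmed, the remainder is routine bookkeeping of total-variation triangle inequalities and does not need to be worked out explicitly.
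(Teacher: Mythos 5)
Your proof is correct, but it takes a genuinely different decomposition from the paper's. The paper inserts the intermediate return $\hat{\eta}_i[\pi^{i},\pi^{-i}]$ (learned dynamics with the \emph{real} opponent policy), so the first term $|\eta_i[\pi^{i},\pi^{-i}]-\hat{\eta}_i[\pi^{i},\pi^{-i}]|$ is handled wholesale by the already-proved centralized bound (Theorem~\ref{th: cen_bound}), and the second term $|\hat{\eta}_i[\pi^{i},\pi^{-i}]-\hat{\eta}_i[\pi^{i},\hat{\pi}^{-i}]|$ isolates the opponent-model error in a clean application of \Cref{le: bl3} with $M=0$, $P=0$, $O=\epsilon_{\hat{\pi}}^{-i}$. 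You instead insert $\eta_i[\pi_D^i,\pi_D^{-i}]$ and absorb both the opponent distribution shift ($\pi_D^{-i}\to\pi^{-i}$) and the opponent-model error ($\pi^{-i}\to\hat{\pi}^{-i}$) into a single invocation of \Cref{le: bl3} via the total-variation triangle inequality, $O=\epsilon_{\pi}^{-i}+\epsilon_{\hat{\pi}}^{-i}$. The worry you flag at the end is a non-issue: \Cref{le: bl3} only sees $O=\max_s D_{TV}(p_1(\cdot|s)\|p_2(\cdot|s))$ for whatever two conditional action distributions label the two return processes, so an opponent model is formally indistinguishable from a shifted opponent policy, and your triangle-inequality bound on $D_{TV}(\pi_D^{-i}(\cdot|s)\|\hat{\pi}^{-i}(\cdot|s))$ is exactly what the lemma needs. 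The coefficients sum to the stated bound in both routes. What the paper's route buys is modularity (it reuses Theorem~\ref{th: cen_bound} and keeps the new error source in its own term); what your route buys is a shorter two-term telescope at the cost of one extra triangle inequality. The only caveat, which applies equally to the paper's own argument, is that $\epsilon_m$ is defined as an expectation under $(\pi_D^i,\pi_D^{-i})$ while \Cref{le: bl3} nominally takes the expectation under the second process's marginal, a mismatch the paper silently tolerates as well, so your proof is at the same level of rigor as the original.
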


\begin{proof}
	By adding and subtracting a same term $\hat{\eta}_i[\pi^{i},\pi^{-i}]$, we have
	$$
	\left|\eta_i[\pi^{i}, \pi^{-i}] - \hat{\eta}_i[\pi^{i}, \hat{\pi}^{-i}]\right| \leq \underbrace{|\eta_i[\pi^{i}, \pi^{-i}] - \hat{\eta}_i[\pi^{i},\pi^{-i}]|}_{L_1} + \underbrace{|\hat{\eta}_i[\pi^{i},\pi^{-i}] - \hat{\eta}_i[\pi^{i}, \hat{\pi}^{-i}]|}_{L_2}~.
	$$
	We can bound both $L_1$ and $L_2$ using \Cref{th: cen_bound} and \Cref{le: bl3}.
	
	For $L_1$, we apply \Cref{th: cen_bound} and obtain
	$$
	L_1 \leq 2r_{\text{max}} \Big[\frac{\gamma(\epsilon_m+2\epsilon_{\pi}^i+2\epsilon_{\pi}^{-i})}{1-\gamma}+2\epsilon^{i}_{\pi}+2\epsilon^{-i}_{\pi}\Big]~.
	$$
	
	
	For $L_2$, we apply \Cref{le: bl3} using $M=0$, $P=0$, $O=\epsilon_{\hat{\pi}}^{-i}$, and obtain
	$$
	L_2 \leq 2r_{\text{max}}\Big[\frac{\gamma\epsilon_{\hat{\pi}}^{-i}}{1-\gamma}+\epsilon_{\hat{\pi}}^{-i}\Big]~.
	$$
	Adding together the two bounds completes the proof. 
\end{proof}



\begin{theorem}
	Assume that the expected total variation distance between the learned dynamics model and the real dynamics transition is bounded at each timestep under the expectation of $\pi^i$ and $\pi^{-i}$ by $\epsilon_m^{\prime}=\max_t\mathbb{E}_{(s_{t},a_{t}^{i},a_{t}^{-i})\sim\pi^i,\pi^{-i}}[D_{TV}(\mathcal{T}(\cdot|s_{t},a_{t}^i,a_{t}^{-i})\|$ $\hat{\mathcal{T}}(\cdot|s_{t},a_{t}^i,a_{t}^{-i}))]$, the distribution shift of the policies are bounded as $\epsilon_{\pi}^i=\max_{s}D_{TV}(\pi^i(\cdot|s)\|\pi_D^i(\cdot|s))$, $\epsilon_{\pi}^{-i}=\max_{s}D_{TV}$
	$(\pi^{-i}(\cdot|s)\|\pi_D^{-i}(\cdot|s))$,
	and the generalization error of the opponent model is bounded as $\epsilon_{\hat{\pi}}^{-i}=\max_{s}D_{TV}(\pi^{-i}(\cdot|s)\|\pi_{\phi}^{-i}(\cdot|s))$. 
	Then the upper bound of the return discrepancy is
	\begin{align*}
	& \Big|\eta_i[\pi^{i}, \pi^{-i}] - \eta_i^{\text{branch}}[(\pi^{i}_{D},\pi^{i}), (\pi^{-i}_{D},\hat{\pi}^{-i})]\Big| \\
	& \leq 2r_{\text{max}}\Big[k\epsilon_m^{\prime}+\gamma^{k+1}(\epsilon_{\pi}^{i}+\epsilon_{\pi}^{-i})+\frac{\gamma^{k+1}(\epsilon_{\pi}^{i}+\epsilon_{\pi}^{-i})}{1-\gamma}+(k+1)\epsilon_{\hat{\pi}}^{-i}\Big]~.
	\end{align*}
\end{theorem}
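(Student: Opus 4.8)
The plan is to follow the same two-step triangle-inequality strategy used for the non-branched single-opponent theorem (where one adds and subtracts $\hat\eta_i[\pi^i,\pi^{-i}]$), but to replace the infinite-horizon building blocks with their $k$-step branched counterparts, so that the model errors accumulate linearly over the rollout window while the policy shift is confined to the discounted tail. First I would introduce the intermediate branched return $\eta_i^{\text{branch}}[(\pi_D^i,\pi^i),(\pi_D^{-i},\pi^{-i})]$, which executes the same branched rollout but uses the \emph{true} opponent policy $\pi^{-i}$ after the branch point instead of the opponent model $\hat{\pi}^{-i}$. Adding and subtracting this quantity splits the target into
\begin{align*}
\big|\eta_i[\pi^i,\pi^{-i}] - \eta_i^{\text{branch}}[(\pi_D^i,\pi^i),(\pi_D^{-i},\hat{\pi}^{-i})]\big|
&\leq \underbrace{\big|\eta_i[\pi^i,\pi^{-i}] - \eta_i^{\text{branch}}[(\pi_D^i,\pi^i),(\pi_D^{-i},\pi^{-i})]\big|}_{L_1} \\
&\quad + \underbrace{\big|\eta_i^{\text{branch}}[(\pi_D^i,\pi^i),(\pi_D^{-i},\pi^{-i})] - \eta_i^{\text{branch}}[(\pi_D^i,\pi^i),(\pi_D^{-i},\hat{\pi}^{-i})]\big|}_{L_2},
\end{align*}
so that $L_1$ isolates the dynamics-model error and the policy distribution shift, while $L_2$ isolates the opponent-model error.

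For $L_1$ I would invoke the multi-agent branched-rollout analogue of the MBPO return bound \cite{janner2019trust}, playing the role that \Cref{th: cen_bound} played for $L_1$ in the non-branched proof. The two processes share the pre-branch data-collecting state distribution and differ only in (i) using $\hat{\mathcal{T}}$ instead of $\mathcal{T}$ during the $k$ post-branch transitions and (ii) using $\pi_D^i,\pi_D^{-i}$ instead of $\pi^i,\pi^{-i}$ outside the window. Telescoping the per-timestep total-variation gap between the two trajectory marginals, the dynamics error $\epsilon_m'$ accumulates over exactly $k$ transitions (giving the undiscounted $k\epsilon_m'$, because the window is truncated at length $k$ rather than running to the horizon), whereas the policy shifts $\epsilon_\pi^i,\epsilon_\pi^{-i}$ affect only the tail beginning after the branch and are therefore discounted by $\gamma^{k+1}$; applying \Cref{le: bl3} to that tail yields $\gamma^{k+1}(\epsilon_\pi^i+\epsilon_\pi^{-i})+\tfrac{\gamma^{k+1}(\epsilon_\pi^i+\epsilon_\pi^{-i})}{1-\gamma}$.

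For $L_2$ the two branched returns coincide except that the opponents are simulated by $\hat{\pi}^{-i}$ rather than $\pi^{-i}$ during the $k$ post-branch steps. Bounding the induced marginal drift by $\epsilon_{\hat{\pi}}^{-i}$ at each relevant step — there are $k+1$ of them, since a perturbed opponent action affects both the reward at the current state and the transition to the next — gives $2r_{\text{max}}(k+1)\epsilon_{\hat{\pi}}^{-i}$, with no $1/(1-\gamma)$ factor because the perturbation is confined to the finite window. Summing the $L_1$ and $L_2$ bounds produces the claimed inequality.

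I expect the main obstacle to be the branched-accumulation bookkeeping in $L_1$: one must establish rigorously that the dynamics error scales as $k\epsilon_m'$ rather than $k\epsilon_m'/(1-\gamma)$, and that the policy shift picks up precisely the $\gamma^{k+1}$ discount. This requires carefully coupling the true and branched trajectory distributions, summing the telescoped per-step total-variation distances only over the finite window, and correctly handling the re-coupling of the two processes after step $k$; the off-by-one in the $(k+1)$ factor for the opponent model must likewise be tracked consistently between the reward and transition contributions.
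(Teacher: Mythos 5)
Your proposal matches the paper's proof essentially step for step: the same intermediate term $\eta_i^{\text{branch}}[(\pi_D^i,\pi^i),(\pi_D^{-i},\pi^{-i})]$, the same $L_1$/$L_2$ decomposition, and the same branched-rollout total-variation lemma (the multi-agent analogue of the MBPO bound, \Cref{le: bl4}) applied with the dynamics error in the $k$-step slot, the policy shifts in the $\gamma^{k+1}$-discounted slot, and the opponent-model error yielding the $(k+1)$ factor. The approach and the resulting bound are correct.
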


\begin{proof}
	In order to introduce the model generalization error, we select the reference term as the return of a branched rollout $\eta_i^{\text{branch}}[(\pi_D^i, \pi^i),$
	$(\pi_D^{-i}, \pi^{-i})]$, which executes $\pi_D^i$ and $\pi_D^{-i}$ in the real environment until the branch point and executes $\pi^i$ and $\pi^{-i}$ for $k$ steps with the dynamics model after the branch point. As such, we have
	\begin{align*}
	& \big|\eta_i[\pi^{i}, \pi^{-i}] - \eta_i^{\text{branch}}[(\pi^{i}_{D},\pi^{i}), (\pi^{-i}_{D},\hat{\pi}^{-i})]\big| \\
	& \leq ~\underbrace{\big|\eta_i[\pi^{i}, \pi^{-i}] - \eta_i^{\text{branch}}[(\pi_D^i, \pi^i), (\pi_D^{-i}, \pi^{-i})]\big|}_{L_1} 
	+\underbrace{\big|\eta_i^{\text{branch}}[(\pi_D^i, \pi^i), (\pi_D^{-i}, \pi^{-i})] - \eta_i^{\text{branch}}[(\pi^{i}_{D},\pi^{i}), (\pi^{-i}_{D},\hat{\pi}^{-i})]\big|}_{L_2}~,
	\end{align*}
	where we can bound both $L_1$ and $L_2$ using \Cref{le: bl4}.
	
	$L_1$ suffers from the policy errors before the branch point and the model generalization error after the branch point. By applying \Cref{le: bl4} with the bounds  $P^{\text{pre}}=\epsilon^i_{\pi}$, $O^{\text{pre}}=\epsilon_{\pi}^{-i}$, $M^{\text{post}}=\epsilon_m^{\prime}$ and setting other errors to $0$, we have
	$$
	L_1 \leq 2r_{\text{max}}\Big[k\epsilon_m^{\prime}+\gamma^{k+1}(\epsilon_{\pi}^i+\epsilon_{\pi}^{-i})+\frac{\gamma^{k+1}(\epsilon^i_{\pi}+\epsilon^{-i}_{\pi})}{1-\gamma}\Big]~.
	$$
	
	$L_2$ only suffers from the opponent model errors after the branch point, and we can apply \Cref{le: bl4} with the bounds  $O^{\text{post}}=\epsilon_{\hat{\pi}}^{-i}$ and set other errors to $0$, which leads to
	$$
	L_2 \leq 2r_{\text{max}}[(k+1)\epsilon_{\hat{\pi}}^{-i}]~.
	$$
	Adding the two bounds completes the proof.
\end{proof}

\subsection{Multiple Opponents Case}\label{sec:multi-opponent-bound}

Now we extend the theorems above to the situations where there are $n$ agents. From the perspective of the ego agent, i.e., agent $i$, we assume that the opponent agents $\{-i\}$ are making decisions independently and agent $i$ takes actions depending on the opponent models of other agents $\{-i\}$, which factorizes the joint distribution as
$$
p(s,a^{1},\ldots,a^{i},\ldots,a^{n}) = p(s)\pi^{i}(a^{i}|s,a^{1},\ldots,a^{i-1},a^{i+1},\ldots,a^{n})\prod_{j \in \{-i\}}\pi^{j}(a^{j}|s)~.
$$


\begin{extheorem} \label{exth: bound1}
	Assume that the expected total variation distance between the learned dynamics model and the real dynamics model is bounded at each timestep under the expectation of $\pi_D^i$ and $\pi_{D}^{-i}$ by 
	$\epsilon_m = \max_t\mathbb{E}_{(s_{t},a_{t}^{i},a_{t}^{-i})\sim(\pi_{D}^{i},\pi_{D}^{-i})}[D_{TV}($
	$\mathcal{T}(\cdot|s,a^{i},a^{-i})\|\hat{\mathcal{T}}(\cdot|s,a^{i},a^{-i}))]$,  the distribution shift of the policies are bounded as $\epsilon_{\pi}^{i} = \max_{s}D_{TV}(\pi^i(\cdot|s)\|\pi_D^i(\cdot|s))$, $\epsilon_{\pi}^{j} = \max_{s}D_{TV}(\pi^{j}(\cdot|s)\|\pi_D^{j}(\cdot|s))$ for $j \in \{-i\}$, 
	and the generalization errors of opponent models are bounded as  $\epsilon_{\hat{\pi}}^j = \max_{s}D_{TV}(\pi^{j}(\cdot|s)\|\hat{\pi}^{j}(\cdot|s))$ for $j \in \{-i\}$, then the return discrepancy is bounded as
	\begin{align*}
	\left| \eta_i[\pi^{i}, \pi^{-i}] - \hat{\eta}_i[\pi^{i}, \hat{\pi}^{-i}] \right| \leq 2r_{\text{max}}\Big[\frac{\gamma(\epsilon_m+2\epsilon_{\pi}^i+2\sum_{j \in \{-i\}}\epsilon_{\pi}^{j}+\sum_{j \in \{-i\}}\epsilon_{\hat{\pi}}^j)}{1-\gamma}+2\epsilon^{i}_{\pi}+2\sum_{j \in \{-i\}}\epsilon_{\pi}^{j}+\sum_{j \in \{-i\}}\epsilon_{\hat{\pi}}^j\Big].
	\end{align*}
\end{extheorem}

\begin{extheorem}
	\label{exth: bound2}
	Assume that the expected total variation distance between the learned dynamics model and the real dynamics transition is bounded at each timestep by $\epsilon_m^{\prime} = \max_t\mathbb{E}_{(s_{t},a_{t}^{i},a_{t}^{-i})\sim\pi^{i}, \pi^{-i}}[D_{TV}(\mathcal{T}(\cdot|s_{t},a_{t}^i,a_{t}^{-i})\|\hat{\mathcal{T}}(\cdot|s_{t},a_{t}^i,a_{t}^{-i}))]$, and the distribution shift of the policies are bounded as $\epsilon_{\pi}^{i} = \max_{s}D_{TV}(\pi^i(\cdot|s)\|\pi_D^i(\cdot|s))$, $\epsilon_{\pi}^{j} = \max_{s}D_{TV}(\pi^{j}(\cdot|s)\|\pi_D^{j}(\cdot|s))$ for $j \in \{-i\}$, 
	and the generalization errors of opponent models are bounded as $\epsilon_{\hat{\pi}}^j = \max_{s}D_{TV}(\pi^{j}(\cdot|s)\|\hat{\pi}^{j}(\cdot|s))$ for $j \in \{-i\}$, then the return discrepancy is bounded as 
	\begin{align*}
	& \Big| \eta_i[\pi^{i}, \pi^{-i}] -  \eta_i^{\text{branch}}[(\pi^{1}_{D},\hat{\pi}^1),\ldots,(\pi^{i}_{D},\pi^{i}),\ldots,(\pi^{n}_{D},\hat{\pi}^{n})] \Big| \nonumber \\
	& \leq 2r_{\text{max}}\Big[\underbrace{k\epsilon_m^{\prime}+ (k+1)\sum_{j \in \{-i\}}\epsilon_{\hat{\pi}}^j}_{\text{model generalization error}} + \underbrace{\gamma^{k+1}\big(\epsilon_{\pi}^{i}+\sum_{j \in \{-i\}}\epsilon_{\pi}^{j}\big)+\frac{\gamma^{k+1}(\epsilon_{\pi}^{i}+\sum_{j \in \{-i\}}\epsilon_{\pi}^{j})}{1-\gamma}}_{\text{distribution shift}} \Big] \nonumber \\
	& = C(\epsilon_m^{\prime}, \epsilon_{\pi}^{i}, \epsilon_{\pi}^{-i}, \epsilon_{\hat{\pi}}^{-i}, k)~.
	\end{align*}
\end{extheorem}

We can prove \Cref{exth: bound1} and \Cref{exth: bound1} following the similar procedures of proving Theorem B.1 and Theorem B.2 while using the extended lemmas and theorems.

\section{Useful Lemmas}

In this section, we prove the lemmas used throughout the paper. 
Specifically, we will first present 4 lemmas and then extend these lemmas into the situations where there are arbitrary number of variables.


	

\begin{lemma}\label{le: bl1} Assume that we have two joint distributions, i.e.,  $p_1(x,y,z)=p_1(x)p_1(y|x)p_1(z|x)$ and $p_2(x,y,z)=p_2(x)p_2(y|x)p_2(z|x)$\footnote{For conciseness, we slightly abuse the symbol $p_k$, and denote $D_{TV}(p_k(\cdot))$ w.r.t. variable $x$ as $D_{TV}(p_k(x))$ without conditional variables, for $k=1,2$. To avoid ambiguity, we denote $D_{TV}(p_k(\cdot|y))$ w.r.t. variable $z$ as $D_{TV}(p_k^{z}(\cdot|y))$ when the variable $z$ can not be deduced from the conditional variable $y$.}. We can bound the total variation distance between the two joint distributions as
	\begin{align*}
	& D_{TV}(p_1(x,y,z)\|p_2(x,y,z)) - D_{TV}(p_1(x)\|p_2(x))\\
	& \leq \max_xD_{TV}(p_1^{y}(\cdot|x)\|p_2^{y}(\cdot|x))+\max_{x}D_{TV}(p_1^{z}(\cdot|x)\|p_2^{z}(\cdot|x))~.
	\end{align*}
\end{lemma}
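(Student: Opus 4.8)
The plan is to prove the inequality by a telescoping argument on the total variation distance, exploiting the conditional-independence factorization $p_k(y|x)p_k(z|x)$ that both joint distributions share. First I would recall that $D_{TV}(p\|q)=\tfrac12\sum|p-q|$ (replacing sums by integrals in the continuous case), so that the triangle inequality for the $\ell_1$ norm applies directly to the densities. This reduces the whole lemma to an elementary manipulation of sums.

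The core step is to interpolate between $p_1(x,y,z)$ and $p_2(x,y,z)$ through two intermediate hybrid distributions, replacing one factor at a time:
\begin{align*}
p_1(x)p_1(y|x)p_1(z|x) \;\to\; p_1(x)p_1(y|x)p_2(z|x) \;\to\; p_1(x)p_2(y|x)p_2(z|x) \;\to\; p_2(x)p_2(y|x)p_2(z|x).
\end{align*}
Applying the triangle inequality along this chain bounds $D_{TV}(p_1\|p_2)$ by the sum of the three consecutive distances, and I expect each to collapse cleanly. In the first, factoring out $p_1(x)p_1(y|x)$ and summing over $y$ (which gives $1$) leaves $\sum_x p_1(x)\, D_{TV}(p_1^{z}(\cdot|x)\|p_2^{z}(\cdot|x)) \le \max_x D_{TV}(p_1^{z}(\cdot|x)\|p_2^{z}(\cdot|x))$, where the final inequality uses that $p_1(x)$ is a probability distribution. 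The second term is handled symmetrically, summing over $z$ against $p_2(z|x)$ (again giving $1$) to obtain $\max_x D_{TV}(p_1^{y}(\cdot|x)\|p_2^{y}(\cdot|x))$. The third term, in which only the marginal differs, reduces to $\tfrac12\sum_x|p_1(x)-p_2(x)| = D_{TV}(p_1(x)\|p_2(x))$ after the two conditionals of $p_2$ each sum to $1$. Rearranging to move $D_{TV}(p_1(x)\|p_2(x))$ to the left then yields exactly the stated bound.

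The main point to get right — rather than a deep difficulty — is that the separation of the bound into a distinct $y$-term and $z$-term relies essentially on the assumed conditional independence of $y$ and $z$ given $x$. If the joint were only $p_k(x)p_k(y,z|x)$ without this factorization, summing out one variable would not decouple the two conditionals, and one could only bound by a single joint conditional TV term. I would therefore keep the factors in the order that lets the ``non-varying'' conditional sum to $1$ at each telescoping step, and note that the argument goes through verbatim with integrals in the continuous setting. Since this lemma is the base case for the extended lemmas that handle an arbitrary number of conditionally independent opponent factors, I would phrase the telescoping so that it visibly generalizes: each additional conditional factor contributes one more interpolation step and one more $\max_x D_{TV}$ term, while the marginal term stays on the left.
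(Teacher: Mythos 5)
Your proof is correct. It takes a somewhat different route from the paper's: the paper invokes Lemma B.1 of Janner et al.\ (the two-variable inequality $D_{TV}(p_1(x,y)\|p_2(x,y)) \leq D_{TV}(p_1(x)\|p_2(x)) + \max_x D_{TV}(p_1^{y}(\cdot|x)\|p_2^{y}(\cdot|x))$) and applies it twice, first peeling $x$ off $(y,z)$ and then $y$ off $z$ given $x$, after which it uses the conditional independence of $y$ and $z$ given $x$ to replace $D_{TV}(p_1^{z}(\cdot|x,y)\|p_2^{z}(\cdot|x,y))$ by $D_{TV}(p_1^{z}(\cdot|x)\|p_2^{z}(\cdot|x))$. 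You instead telescope directly through two hybrid distributions, swapping one factor at a time and bounding each increment by marginalizing the unchanged factors to $1$; each of your three increments checks out, and the rearrangement gives exactly the stated bound. What your version buys is self-containedness (no imported lemma) and a cleaner path to the $n$-factor generalization in Extended Lemma C.1, since each extra conditionally independent factor just adds one more interpolation step and one more $\max_x D_{TV}$ term; what the paper's version buys is brevity by reusing an existing single-agent result. Your closing observation about where conditional independence is essential is also accurate: without the factorization $p_k(y,z|x)=p_k(y|x)p_k(z|x)$ the two conditional terms would not decouple.
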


\begin{proof} Firstly, from Lemma B.1 of \cites{janner2019trust} we have the following inequality
	\begin{align}
	D_{T V} (p_{1}(x, y) \| p_{2}(x, y)) \leq D_{T V}(p_{1}(x) \| p_{2}(x))+\max_xD_{TV}(p_1^{y}(\cdot|x)\|p_2^{y}(\cdot|x))~,
	\label{eq: tvd_split}
	\end{align}
	where $p_1(x,y)=p_1(x)p_1(y|x)$ and $p_2(x,y)=p_2(x)p_2(y|x)$.
	
	Apply the \eq{eq: tvd_split} twice and we can get
	\begin{align*}
	& D_{TV}(p_1(x,y,z)\|p_2(x,y,z)) \\
	& \leq D_{TV}(p_1(x)\|p_2(x)) + \max_x D_{TV}(p_1^{y,z}(\cdot|x)\|p_2^{y,z}(\cdot|x)) \\
	& \leq D_{TV}(p_1(x)\|p_2(x)) + \max_x \big[D_{TV}(p_1^{y}(\cdot|x)\|p_2^{y}(\cdot|x))+\max_y D_{TV}(p_1^{z}(\cdot|x,y)\|p_2^{z}(\cdot|x,y))\big] \\
	& = D_{TV}(p_1(x)\|p_2(x)) + \max_x \big[D_{TV}(p_1^{y}(\cdot|x)\|p_2^{y}(\cdot|x))+ D_{TV}(p_1^{z}(\cdot|x)\|p_2^{z}(\cdot|x))\big] \\
	& \leq D_{TV}(p_1(x)\|p_2(x)) + \max_x D_{TV}(p_1^{y}(\cdot|x)\|p_2^{y}(\cdot|x)) +\max_x D_{TV}(p_1^{z}(\cdot|x)\|p_2^{z}(\cdot|x))~,
	\end{align*}
	which completes the proof.
\end{proof}

\begin{lemma}\label{le: bl2}
	
	Assume that the initial state distributions of two dynamics are the same $p_1(s_{t=0}) = p_2(s_{t=0})$. Suppose the distance between the two dynamics transitions is bounded as $M = \max_t \mathbb{E}_{(s_{t},a_{t}^{i},a_{t}^{-i}) \sim p_2}D_{TV}(p_1(\cdot|s_{t},a_{t}^i,a_{t}^{-i})\| p_2(\cdot|s_{t},a_{t}^i,a_{t}^{-i}))$, the distance between two  policies for agent $i$ is bounded as $P=\max_{s}D_{TV}(p_1(\cdot|s)\|p_2(\cdot|s))$, and the distance between two policies for agents $-i$ is bounded as  $O = \max_{s}D_{TV}(p_1(\cdot|s)\|p_2(\cdot|s))$. Then the distance in the state marginal at step $t$ is bounded as
	$$
	D_{TV}(p_1(s_t)\|p_2(s_t)) \leq t(M+P+O)~.
	$$
\end{lemma}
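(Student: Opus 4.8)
The plan is to prove the bound by induction on $t$, using \Cref{le: bl1} to split off the policy-distance contributions and a one-step add-and-subtract argument to split off the dynamics-distance contribution. The base case is immediate: at $t=0$ the assumption $p_1(s_{t=0})=p_2(s_{t=0})$ gives $D_{TV}(p_1(s_0)\|p_2(s_0))=0\leq 0\cdot(M+P+O)$.

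For the inductive step I would first establish the one-step recursion
$$D_{TV}(p_1(s_{t+1})\|p_2(s_{t+1})) \leq D_{TV}(p_1(s_t)\|p_2(s_t)) + (M+P+O)~.$$
To produce the $M$ term I would write each next-state marginal as the pushforward of the joint state-action distribution through the transition, $p_k(s_{t+1})=\int p_k(s_t,a_t^i,a_t^{-i})\,p_k(s_{t+1}\mid s_t,a_t^i,a_t^{-i})\,d(s_t,a_t^i,a_t^{-i})$ for $k=1,2$, then add and subtract $\int p_2(s_t,a_t^i,a_t^{-i})\,p_1(s_{t+1}\mid\cdot)$ and apply the triangle inequality. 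The first resulting term collapses, upon integrating out $s_{t+1}$ against $p_1$, to $D_{TV}(p_1(s_t,a_t^i,a_t^{-i})\|p_2(s_t,a_t^i,a_t^{-i}))$, while the second term is exactly $\mathbb{E}_{(s_t,a_t^i,a_t^{-i})\sim p_2}[D_{TV}(p_1(\cdot\mid s_t,a_t^i,a_t^{-i})\|p_2(\cdot\mid s_t,a_t^i,a_t^{-i}))]\leq M$, where the last inequality uses that $M$ is the maximum over $t$ of this expectation.

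Next I would bound the remaining joint state-action term by invoking \Cref{le: bl1} with the identification $x=s_t$, $y=a_t^i$, $z=a_t^{-i}$, so that the two distributions factor as $p_k(s_t)\pi_k^i(a_t^i\mid s_t)\pi_k^{-i}(a_t^{-i}\mid s_t)$. This yields
$$D_{TV}(p_1(s_t,a_t^i,a_t^{-i})\|p_2(s_t,a_t^i,a_t^{-i})) \leq D_{TV}(p_1(s_t)\|p_2(s_t)) + P + O~,$$
since the two conditional maxima in \Cref{le: bl1} are bounded by $P$ and $O$ respectively by their definitions. Substituting this into the first display gives the one-step recursion, and unrolling it from $t=0$ with the base case produces $D_{TV}(p_1(s_t)\|p_2(s_t))\leq t(M+P+O)$.

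The argument is essentially routine, and the only point requiring genuine care is the $M$ term. Because $M$ is defined as an \emph{expected} (rather than worst-case) total-variation distance under $p_2$, I must introduce the dynamics error through the add-and-subtract step against the $p_2$ state-action marginal, not through a direct application of the conditional-splitting inequality \eq{eq: tvd_split}, whose final term would instead yield a maximum over $(s_t,a_t^i,a_t^{-i})$ and hence a weaker constant. Thus the main obstacle is the bookkeeping of keeping the two policy errors (handled in the worst case via \Cref{le: bl1}) separate from the dynamics error (handled in expectation), so that the three contributions combine into precisely $M+P+O$ per step.
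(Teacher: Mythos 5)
Your proposal is correct and follows essentially the same route as the paper's proof: the same add-and-subtract of the cross term $p_1(s_{t+1}\mid\cdot)\,p_2(s_t,a_t^i,a_t^{-i})$ to isolate the dynamics error as an expectation under $p_2$ (bounded by $M$), followed by \Cref{le: bl1} to peel the two policy contributions $P$ and $O$ off the joint state-action distance, and unrolling the resulting one-step recursion from $\delta_0=0$. Your closing remark about why the dynamics term must be introduced via the expectation against the $p_2$ marginal, rather than through the worst-case conditional-splitting inequality, is exactly the point the paper's derivation relies on.
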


\begin{proof}
	
	Denote $\delta_t = D_{TV}(p_1(s_t)\|p_2(s_t))$, and we will prove the inequality in a recursive form.
	\begin{align*}
	\delta_t & = && D_{TV}(p_1(s_{t})\|p_2(s_{t})) \\
	& = && \frac{1}{2}\sum_{s_{t}}\big|p_1(s_{t})-p_2(s_{t})\big| \\
	& = && \frac{1}{2}\sum_{s_{t}}\Big|\sum_{s_{t-1},a_{t-1}^{i},a_{t-1}^{-i}} p_1(s_{t}|s_{t-1},a_{t-1}^{i},a_{t-1}^{-i})p_1(s_{t-1},a_{t-1}^{i},a_{t-1}^{-i})-p_2(s_{t}|s_{t-1},a_{t-1}^{i},a_{t-1}^{-i})p_2(s_{t-1},a_{t-1}^{i},a_{t-1}^{-i})\Big| \\
	& = && \frac{1}{2}\sum_{s_{t}}\Big|\sum_{s_{t-1},a_{t-1}^{i},a_{t-1}^{-i}} p_1(s_{t}|s_{t-1},a_{t-1}^{i},a_{t-1}^{-i})p_1(s_{t-1},a_{t-1}^{i},a_{t-1}^{-i})-p_1(s_{t}|s_{t-1},a_{t-1}^{i},a_{t-1}^{-i})p_2(s_{t-1},a_{t-1}^{i},a_{t-1}^{-i}) \\
	& && +p_1(s_{t}|s_{t-1},a_{t-1}^{i},a_{t-1}^{-i})p_2(s_{t-1},a_{t-1}^{i},a_{t-1}^{-i})-p_2(s_{t}|s_{t-1},a_{t-1}^{i},a_{t-1}^{-i})p_2(s_{t-1},a_{t-1}^{i},a_{t-1}^{-i})\Big| \\
	& \leq && \frac{1}{2}\sum_{s_{t}}\sum_{s_{t-1},a_{t-1}^{i},a_{t-1}^{-i}}\Big[\big|p_1(s_{t}|s_{t-1},a_{t-1}^{i},a_{t-1}^{-i})(p_1(s_{t-1},a_{t-1}^{i},a_{t-1}^{-i})-p_2(s_{t-1},a_{t-1}^{i},a_{t-1}^{-i}))\big| \\
	& &&+ \big|p_2(s_{t-1},a_{t-1}^{i},a_{t-1}^{-i})(p_1(s_{t}|s_{t-1},a_{t-1}^{i},a_{t-1}^{-i})-p_2(s_{t}|s_{t-1},a_{t-1}^{i},a_{t-1}^{-i}))\big|\Big] \\
	& = && \frac{1}{2}\sum_{s_{t-1},a_{t-1}^{i},a_{t-1}^{-i}}\big|p_1(s_{t-1}, a_{t-1}^{i}, a_{t-1}^{-i})-p_2(s_{t-1}, a_{t-1}^{i}, a_{t-1}^{-i})\big| \\
	& && + \mathbb{E}_{(s_{t-1},a_{t-1}^{i},a_{t-1}^{-i}) \sim p_2}\big[D_{TV}(p_1(\cdot|s_{t-1},a_{t-1}^i,a_{t-1}^{-i})\| p_2(\cdot|s_{t-1},a_{t-1}^i,a_{t-1}^{-i}))\big] \\
	& = && D_{TV}(p_1(s_{t-1},a_{t-1}^{i},a_{t-1}^{-i})\|p_2(s_{t-1},a_{t-1}^{i},a_{t-1}^{-i})) \\
	& && + \mathbb{E}_{(s_{t-1},a_{t-1}^{i},a_{t-1}^{-i}) \sim p_2}\big[D_{TV}(p_1(\cdot|s_{t-1},a_{t-1}^{i},a_{t-1}^{-i})\| p_2(\cdot|s_{t-1},a_{t-1}^{i},a_{t-1}^{-i}))\big] \\
	& \leq&& D_{TV}(p_1(s_{t-1})\|p_2(s_{t-1}))
	+ \max_{s_{t-1}}D_{TV}(p_1^{a^{-i}_{t-1}}(\cdot|s_{t-1})\|p_2^{a^{-i}_{t-1}}(\cdot|s_{t-1})) \tag{\text{\Cref{le: bl1}}} \\
    & && + \max_{s_{t-1}} D_{TV}(p_1^{a^{i}_{t-1}}(\cdot|s_{t-1})\|p_2^{a^{i}_{t-1}}(\cdot|s_{t-1}))
	+ M \\
	& = && \delta_{t-1} + M + P + O~.
	\end{align*}
	
	Note that $\delta_0 = 0$ since $p_1(s_{t=0}) = p_2(s_{t=0})$, so we have
	$$
	\delta_t \leq t(M+P+O)~,
	$$
	which completes the proof.
\end{proof}

In the following lemma, we prove a general discrepancy bound of returns using different policies under different dynamics and with different opponents from the perspective of the ego agent, i.e., agent $i$.

\begin{lemma}\label{le: bl3}
	Assume that the reward of agent $i$ is bounded as $r_{\max} = \max_{s,a^i,a^{-i}} r(s,a^i,a^{-i})$, the expected total variation distance between two dynamics distributions is bounded as $M = \max_t \mathbb{E}_{(s_{t-1},a_{t-1}^{i},a_{t-1}^{-i}) \sim p_2}[D_{TV}(p_1(\cdot|s_{t-1},a_{t-1}^i,a_{t-1}^{-i})\|$ $ p_2(\cdot|s_{t-1},a_{t-1}^i,a_{t-1}^{-i}))]$, and the total variation distance among the policies are $P = \max_{s}D_{TV}(p_1(\cdot|s)\|p_2(\cdot|s)) $, and $O = \max_{s}D_{TV}(p_1(\cdot|s)\|$
	$p_2(\cdot|s))$. Then the returns can be bounded as
	$$
	\big|\eta_i[\pi_1^{i}, \pi_1^{-i}]-\eta_i[\pi_2^{i}, \pi_2^{-i}]\big| \leq 2r_{\text{max}}\Big[\frac{\gamma(M+P+O)}{1-\gamma}+P+O \Big]~,
	$$
	where the terms $\eta_i[\pi_k^{i}, \pi_k^{-i}]$ for $k=1,2$ are the returns of running $\pi_k^{i}$ and $\pi_k^{-i}$ under dynamics transition $p_k(\cdot|s,a^{i},a^{-i})$.
\end{lemma}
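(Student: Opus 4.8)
The plan is to write each return as a discounted sum of per-timestep expected rewards and to control the difference term by term. Writing $\eta_i[\pi_k^{i},\pi_k^{-i}] = \sum_{t} \gamma^{t}\,\mathbb{E}_{(s_t,a_t^{i},a_t^{-i})\sim p_k}[\,r(s_t,a_t^{i},a_t^{-i})\,]$, where $p_k(s_t,a_t^{i},a_t^{-i})$ denotes the joint state-action marginal at step $t$ induced by running $\pi_k^{i},\pi_k^{-i}$ under the transition $p_k(\cdot|s,a^{i},a^{-i})$, the triangle inequality gives $\big|\eta_i[\pi_1^{i},\pi_1^{-i}]-\eta_i[\pi_2^{i},\pi_2^{-i}]\big| \le \sum_t \gamma^{t}\,\big|\mathbb{E}_{p_1}[r]-\mathbb{E}_{p_2}[r]\big|$. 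Since $|r|\le r_{\text{max}}$, each summand is at most $2r_{\text{max}}\,D_{TV}\big(p_1(s_t,a_t^{i},a_t^{-i})\,\|\,p_2(s_t,a_t^{i},a_t^{-i})\big)$, so the entire problem reduces to bounding, at every timestep, the total variation distance between the two joint state-action marginals.

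Next I would peel this joint marginal into a state part and an action part. Because agent $i$'s policy and the opponents' joint policy both condition only on the state, the marginal factorizes as $p_k(s_t,a_t^{i},a_t^{-i}) = p_k(s_t)\,\pi_k^{i}(a_t^{i}|s_t)\,\pi_k^{-i}(a_t^{-i}|s_t)$, which matches the conditional-independence hypothesis of \Cref{le: bl1} with $x=s_t$, $y=a_t^{i}$, $z=a_t^{-i}$. Applying \Cref{le: bl1} then yields
\[
D_{TV}\big(p_1(s_t,a_t^{i},a_t^{-i})\,\|\,p_2(s_t,a_t^{i},a_t^{-i})\big) \le D_{TV}\big(p_1(s_t)\,\|\,p_2(s_t)\big) + P + O,
\]
where the last two terms are exactly the agent-$i$ and opponent policy distances. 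It remains to control the state-marginal term, and here \Cref{le: bl2} applies directly: with dynamics distance $M$, agent-$i$ policy distance $P$, and opponent policy distance $O$, it gives $D_{TV}(p_1(s_t)\|p_2(s_t)) \le t(M+P+O)$.

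Assembling the pieces, I obtain $\big|\eta_i[\pi_1^{i},\pi_1^{-i}]-\eta_i[\pi_2^{i},\pi_2^{-i}]\big| \le 2r_{\text{max}}\sum_t \gamma^{t}\big[\,t(M+P+O)+P+O\,\big]$, which splits into a plain geometric series $\sum_t \gamma^{t}(P+O)$ producing the non-accumulating $P+O$ contribution, and a weighted series $\sum_t t\,\gamma^{t}(M+P+O)$ capturing the error that compounds along the rollout and producing the $\tfrac{\gamma(M+P+O)}{1-\gamma}$ contribution; evaluating these closed-form sums and collecting constants gives the claimed bound $2r_{\text{max}}\big[\tfrac{\gamma(M+P+O)}{1-\gamma}+P+O\big]$. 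I expect the two summations to be the routine part, so the real care lies in the bookkeeping of the three error sources through \Cref{le: bl1} and \Cref{le: bl2}: one must confirm that the single-step action mismatch $P+O$ is charged once per timestep and \emph{not} also folded into the accumulating state-marginal bound, and that the conditional-independence factorization genuinely holds in this general two-block ($i$ versus $-i$) setting so that \Cref{le: bl1} is legitimately applicable. This last point is where the multi-agent structure enters and where an incorrect factorization would silently break the argument.
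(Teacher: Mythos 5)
Your proposal follows the same route as the paper's proof: both reduce the return gap to a discounted sum of total-variation distances between the per-timestep joint state--action marginals, peel off the two policy terms with \Cref{le: bl1}, bound the remaining state-marginal distance by $t(M+P+O)$ via \Cref{le: bl2}, and sum the resulting series. The factorization worry you raise at the end is unfounded: actions are taken simultaneously, each policy conditions only on the state, so the joint marginal does factor as $p_k(s_t)\,\pi_k^{i}(a^{i}_t|s_t)\,\pi_k^{-i}(a^{-i}_t|s_t)$ and \Cref{le: bl1} applies exactly as you use it.

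The one step that does not close is the one you dismissed as routine. Starting from the unnormalized return $\eta_i=\sum_t\gamma^t\,\mathbb{E}_{p_k}[r]$, your two series evaluate to
$$
2r_{\max}\Big[\frac{\gamma(M+P+O)}{(1-\gamma)^2}+\frac{P+O}{1-\gamma}\Big],
$$
which is the claimed bound multiplied by $1/(1-\gamma)$, not the claimed bound. The paper obtains the stated constants because it expresses the return through the \emph{normalized} occupancy measure $\rho(s,a^{i},a^{-i})=(1-\gamma)\sum_t\gamma^t p(s_t=s,a_t^{i}=a^{i},a_t^{-i}=a^{-i})$, so an extra factor of $(1-\gamma)$ multiplies the entire sum and cancels one power of $1-\gamma$ from each term. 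Either adopt that normalization explicitly at the outset, or accept the weaker constants; as written, ``evaluating these closed-form sums and collecting constants gives the claimed bound'' is false under your convention, and this is precisely the kind of bookkeeping slip your own closing paragraph warned about.
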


\begin{proof}
	
	For policies $\pi^{i}$ and $\pi^{-i}$, the normalized occupancy measure in MARL is defined as
	\begin{align*}
	\rho(s,a^{i},a^{-i})
	&=(1-\gamma)\pi^{i}(a^{i}|s)\pi^{-i}(a^{-i}|s)\sum_{t=0}^{\infty}\gamma^{t}p(s_t=s|\pi^{i},\pi^{-i}) \\ 
	&=(1-\gamma)\sum_{t=0}^{\infty}\gamma^{t}p(s_t=s,a_t^{i}=a^{i},a_t^{-i}=a^{-i})~.
	\end{align*}
	Using this definition, we can equivalently express the MARL objective (expected return) as 
	$\eta_i[\pi^{i}, \pi^{-i}]=\sum_{s,a^{i},a^{-i}}\rho_{\pi^{i},\pi^{-i}}(s,a^{i},$
	$a^{-i})r(s,a^{i},a^{-i})$. 
    
    Then we have
	\begin{align*}
	& && \big|\eta_i[\pi_1^{i}, \pi_1^{-i}]-\eta_i[\pi_2^{i}, \pi_2^{-i}]\big| \\
	& = && \Big|\sum_{s,a^{i},a^{-i}}(\rho_1(s_{t}=s,a_{t}^{i}=a^{i},a_{t}^{-i}=a^{-i})-\rho_2(s_{t}=s,a_{t}^{i}=a^{i},a_{t}^{-i}=a^{-i}))r(s_{t}=s,a_{t}^{i}=a^{i},a_{t}^{-i}=a^{-i})\Big| \\
	& = && \Big|\sum_{s,a^{i},a^{-i}}(1-\gamma)\sum_t\gamma^t(p_1(s_{t}=s,a_{t}^{i}=a^{i},a_{t}^{-i}=a^{-i})-p_2(s_{t}=s,a_{t}^{i}=a^{i},a_{t}^{-i}=a^{-i})) \\
	& && r(s_{t}=s,a_{t}^{i}=a^{i},a_{t}^{-i}=a^{-i})\Big| \\
	& \leq && (1-\gamma)r_{\max}\sum_t\gamma^t\sum_{s,a^{i},a^{-i}}\big|p_1(s_{t}=s,a_{t}^{i}=a^{i},a_{t}^{-i}=a^{-i})-p_2(s_{t}=s,a_{t}^{i}=a^{i},a_{t}^{-i}=a^{-i})\big| \\
	& = && 2(1-\gamma)r_{\max}\sum_t\gamma^tD_{TV}(p_1(s_{t},a_{t}^{i},a_{t}^{-i})\|p_2(s_{t},a_{t}^{i},a_{t}^{-i})) \\
	& \leq && 2(1-\gamma)r_{\max}\sum_t\gamma^t(D_{TV}(p_1(s_{t})\|p_2(s_{t}))+P+O) \tag{{\text{\Cref{le: bl1}}}} \\
	& \leq && 2(1-\gamma)r_{\max}\sum_t\gamma^t(t(M+P+O)+P+O) \tag{\text{\Cref{le: bl2}}} \\
	& = && 2(1-\gamma)r_{\max}\Big[\frac{\gamma(M+P+O)}{(1-\gamma)^2}+\frac{P+O}{1-\gamma}\Big] \\
	& = && 2r_{\max}\Big[\frac{\gamma(M+P+O)}{1-\gamma}+P+O\Big]~.
	\end{align*}
\end{proof}

\begin{lemma}\label{le: bl4}
	Suppose we consider situations with branched rollouts of length $k$. Assume that before the branch point the expected distance between the dynamics distributions is bounded as $M^{\text{pre}} = \max_t \mathbb{E}_{(s_{t-1},a_{t-1}^{i},a_{t-1}^{-i}) \sim p_2}D_{TV}(p_1^{\text{pre}}(\cdot|s_{t-1},a_{t-1}^i,a_{t-1}^{-i})\|$ $p_2^{\text{pre}}(\cdot|s_{t-1},a_{t-1}^i,a_{t-1}^{-i}))$, and after the branch point the expected distance between dynamics distributions is bounded as $M^{\text{post}} = \max_t \mathbb{E}_{(s_{t-1},a_{t-1}^{i},a_{t-1}^{-i}) \sim p_2}D_{TV}(p_1^{\text{post}}(\cdot|s_{t-1},a_{t-1}^i,a_{t-1}^{-i})\|p_2^{\text{post}}(\cdot|s_{t-1},a_{t-1}^i,a_{t-1}^{-i}))$. Likewise, the policy distribution shifts are bounded as $P^{\text{pre}}$, $P^{\text{post}}$ and the generalization errors are bounded as $O^{\text{pre}}$, $O^{\text{post}}$. The discrepancy between the return in real environment $\eta_i[\pi_1^i, \pi_1^{-i}]$ and the return in dynamics model with branched rollouts $\eta_i^{\text{branch}}[(\pi_{2,\text{pre}}^{i}, \pi_{2,\text{post}}^{i}),(\pi_{2,\text{pre}}^{-i}, \hat{\pi}_{2,\text{post}}^{-i})]$ are bounded as
	\begin{align*}
	& \big|\eta_i[\pi_1^i, \pi_1^{-i}] - \eta_i^{\text{branch}}[(\pi_{2,\text{pre}}^{i}, \pi_{2,\text{post}}^{i}),(\pi_{2,\text{pre}}^{-i}, \hat{\pi}_{2,\text{post}}^{-i})]\big| \\
	& \leq 2r_{\text{max}} \Big[\frac{\gamma^{k+1}(M^{\text{pre}}+P^{\text{pre}}+O^{\text{pre}})}{1-\gamma}+\gamma^{k+1}(P^{\text{pre}}+O^{\text{pre}}) +k(M^{\text{post}}+P^{\text{post}}+O^{\text{post}})+P^{\text{post}}+O^{\text{post}}\Big]~,
	\end{align*}
	where $\eta_i^{\text{branch}}[(\pi_{2,\text{pre}}^{i}, \pi_{2,\text{post}}^{i}),(\pi_{2,\text{pre}}^{-i}, \pi_{2,\text{post}}^{-i})]$ denotes the expected return of running $\pi_{2,\text{pre}}^{i} $ and $\pi_{2,\text{pre}}^{-i} $ in real environment for several steps, and then running $\pi_{2,\text{post}}^{i} $ and $\hat{\pi}_{2,\text{post}}^{-i} $ on the dynamics model for $k$ steps.
\end{lemma}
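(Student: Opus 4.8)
The plan is to follow the template used for \Cref{le: bl3}, now tracking how the controlling errors change across the branch point. First I would pass to the occupancy-measure representation of the two returns exactly as in the opening of \Cref{le: bl3}: writing $p_1$ for the law of the all-real process generating $\eta_i[\pi_1^i,\pi_1^{-i}]$ and $p_2$ for the (non-stationary) law of the branched process generating $\eta_i^{\text{branch}}[\cdot]$, the return difference is controlled by a single discounted sum,
\begin{equation*}
\big|\eta_i[\pi_1^i,\pi_1^{-i}]-\eta_i^{\text{branch}}[\cdot]\big| \leq 2(1-\gamma)r_{\text{max}}\sum_{t\geq 0}\gamma^t D_{TV}\big(p_1(s_t,a_t^i,a_t^{-i})\,\|\,p_2(s_t,a_t^i,a_t^{-i})\big).
\end{equation*}
Everything then reduces to bounding the per-timestep state--action discrepancy $d_t := D_{TV}(p_1(s_t,a_t^i,a_t^{-i})\|p_2(s_t,a_t^i,a_t^{-i}))$.

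Second, I would handle $d_t$ with the two ingredients already available. Applying \Cref{le: bl1} at each $t$ (with the state marginal playing the role of $x$ and the ego and opponent actions the roles of $y,z$) peels off the per-step ego-policy and opponent total-variation terms, giving $d_t \leq \delta_t + (\text{policy shift at }t)+(\text{opponent shift at }t)$, where $\delta_t := D_{TV}(p_1(s_t)\|p_2(s_t))$ and the two per-step shifts equal $P^{\text{post}},O^{\text{post}}$ while the length-$k$ model window is active and $P^{\text{pre}},O^{\text{pre}}$ otherwise. For $\delta_t$ I would run the recursion of \Cref{le: bl2}, but with a step-dependent increment: inside the model window each step contributes at most $M^{\text{post}}+P^{\text{post}}+O^{\text{post}}$, so the window alone makes $\delta_t$ grow by at most $k(M^{\text{post}}+P^{\text{post}}+O^{\text{post}})$, whereas outside the window each step contributes at most $M^{\text{pre}}+P^{\text{pre}}+O^{\text{pre}}$.

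Finally I would split the geometric sum into the contribution of the $k$ model steps and that of the remaining horizon. The window contributes the terms linear in $k$, namely $k(M^{\text{post}}+P^{\text{post}}+O^{\text{post}})+P^{\text{post}}+O^{\text{post}}$ once the prefactor $(1-\gamma)$ cancels one power of $1/(1-\gamma)$; the remaining horizon is a copy of the \Cref{le: bl3}-type sum driven by the pre-branch errors but shifted to start after the window, which is exactly where the factor $\gamma^{k+1}$ appears and yields $\tfrac{\gamma^{k+1}(M^{\text{pre}}+P^{\text{pre}}+O^{\text{pre}})}{1-\gamma}+\gamma^{k+1}(P^{\text{pre}}+O^{\text{pre}})$, using $\sum_t\gamma^t=\tfrac{1}{1-\gamma}$ and $\sum_t t\gamma^t=\tfrac{\gamma}{(1-\gamma)^2}$. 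Collecting the two blocks reproduces the claimed bound. I expect the genuine obstacle to be the branch-point bookkeeping in this last step: choosing the right (clean but slightly loose) uniform estimates — carrying the window's accumulated discrepancy as its maximum $k(M^{\text{post}}+P^{\text{post}}+O^{\text{post}})$ across the whole tail, and charging the per-step post-branch shift over the full horizon — so that the pre-branch contributions are discounted by $\gamma^{k+1}$ while the post-branch contributions stay linear in $k$, rather than accidentally producing spurious $1/(1-\gamma)^2$ factors on the model-window terms.
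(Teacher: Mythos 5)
Your proposal follows essentially the same route as the paper's proof: reduce the return gap to the discounted sum of state--action total-variation distances via the occupancy measure, peel off the per-step policy and opponent shifts with \Cref{le: bl1}, run the \Cref{le: bl2} recursion with piecewise (pre/post) increments, and split the geometric sum at the branch point, carrying the window's accumulated error of at most $k(M^{\text{post}}+P^{\text{post}}+O^{\text{post}})+P^{\text{post}}+O^{\text{post}}$ into the tail. The bookkeeping you flag as the main obstacle is handled in the paper exactly as you describe, so the plan is correct and matches the paper's argument.
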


\begin{proof}
	
	
	Using \Cref{le: bl1} and \Cref{le: bl2}, we can bound the state-action marginal error before and after the branch point respectively similar to the proof in MBPO \cite{janner2019trust}. 
	
	For $t \leq k$:
	\begin{equation*}
	\begin{aligned}
	D_{TV}(p_1(s_{t},a_{t}^i,a_{t}^{-i})\| p_2(s_{t},a_{t}^i,a_{t}^{-i}))
	& \leq t(M^{\text{post}}+P^{\text{post}}+O^{\text{post}})+P^{\text{post}}+O^{\text{post}} \\
	& \leq k(M^{\text{post}}+P^{\text{post}}+O^{\text{post}})+P^{\text{post}}+O^{\text{post}}~.
	\end{aligned}
	\end{equation*}
	
	For $t > k$:
	\begin{equation*}
	\begin{aligned}
	D_{TV}(p_1(s_{t},a_{t}^i,a_{t}^{-i})\| p_2(s_{t},a_{t}^i,a_{t}^{-i}))
	& \leq (t-k)(M^{\text{pre}}+P^{\text{pre}}+O^{\text{pre}})+P^{\text{pre}}+O^{\text{pre}} \\
	& \quad + k(M^{\text{post}}+P^{\text{post}}+O^{\text{post}})+P^{\text{post}}+O^{\text{post}}~.
	\end{aligned}
	\end{equation*}
	
    Then we can bound the difference in occupancy measures $\rho_1(s,a^{i},a^{-i})$ and $\rho_2(s,a^{i},a^{-i})$ as
	\begin{equation*}
	\begin{aligned}
	D_{TV}(\rho_1\|\rho_2) &\leq (1-\gamma)\sum_{t=0}^{\infty}\gamma^tD_{TV}(p_1(s_{t},a_{t}^i,a_{t}^{-i})\| p_2(s_{t},a_{t}^i,a_{t}^{-i})) \\
	& \leq (1-\gamma)\sum_{t=0}^{k}\gamma^t \big(k(M^{\text{post}}+P^{\text{post}}+O^{\text{post}})+P^{\text{post}}+O^{\text{post}}\big) \\
	& \quad+ (1-\gamma)\sum_{t=k+1}^{\infty}\gamma^t\big((t-k)(M^{\text{pre}}+P^{\text{pre}}+O^{\text{pre}})+P^{\text{pre}}+O^{\text{pre}} \\
	& \quad+ k(M^{\text{post}}+P^{\text{post}}+O^{\text{post}})+P^{\text{post}}+O^{\text{post}}\big) \\
	& \leq \gamma^{k+1}(P^{\text{pre}}+O^{\text{pre}})+\frac{\gamma^{k+1}(M^{\text{pre}}+P^{\text{pre}}+O^{\text{pre}})}{1-\gamma} \\
	& \quad+ k(M^{\text{post}}+P^{\text{post}}+O^{\text{post}}) + P^{\text{post}}+O^{\text{post}}~.
	\end{aligned}
	\end{equation*}
	
	As such, we can drive the result of this lemma via
	\begin{equation*}
	\begin{aligned}
	& \big|\eta_i[\pi_1^i, \pi_1^{-i}] - \eta_i^{\text{branch}}[(\pi_{2,\text{pre}}^{i}, \pi_{2,\text{post}}^{i}),(\pi_{2,\text{pre}}^{-i}, \hat{\pi}_{2,\text{post}}^{-i})]\big|\\
	& = \Big|\sum_{s,a^{i},a^{-i}}(\rho_1(s,a^{i},a^{-i})-\rho_2(s,a^{i},a^{-i}))r(s,a^{i},a^{-i})\Big| \\
	& \leq 2r_{\text{max}}D_{TV}(\rho_1\|\rho_2) \\
	& \leq 2r_{\text{max}}\Big[\frac{\gamma^{k+1}(M^{\text{pre}}+P^{\text{pre}}+O^{\text{pre}})}{1-\gamma}+\gamma^{k+1}(P^{\text{pre}}+O^{\text{pre}}) +k(M^{\text{post}}+P^{\text{post}}+O^{\text{post}})+P^{\text{post}}+O^{\text{post}}\Big]~.
	\end{aligned}
	\end{equation*}
\end{proof}

\begin{exlemma} \label{exle: bl1}
	
	Assume that we have two joint distributions $p_1(x, y_1,\ldots,y_{n-1}, z)=p_1(x)p_1(y_1|x)\cdots $ $p_1(y_{n-1}|x)p_1(z|x)$ and $p_2(x, y_1,\ldots,y_{n-1},z)=p_2(x)p_2(y_1|x)\cdots p_2(y_{n-1}|x) p_2(z|x)$, where $y_j$ and $y_k$ are conditionally independent given $x$, for $j,k \in \{1,\ldots , n-1\}$. We can bound the total variation distance between the joint distributions as
	\begin{align*}
	& D_{TV}(p_1(x, y_1,\ldots,y_{n-1},z)\|p_2(x, y_1,\ldots,y_{n-1},z)) \\
	& \leq D_{TV}(p_1(x)\|p_2(x))  + \sum_{j=1}^{n-1}\max_{x}D_{TV}(p_1^{y_j}(\cdot|x)\|p_2^{y_j}(\cdot|x))  + \max_{x}D_{TV}(p_1^{z}(\cdot|x)\|p_2^{z}(\cdot|x))~.
	\end{align*}
\end{exlemma}

\begin{proof}
	Because $y_i$ and $y_j$ are conditionally independent given $x$, for $i,j\in \{1,\ldots, n-1\}$, we have
	\begin{equation*}
    	D_{TV}(p_1(y_1,\ldots,y_{n-1}|x)\|p_2(y_1,\ldots,y_{n-1}|x))=\sum_{j=1}^{n-1}\max_{x}D_{TV}(p_1^{y_j}(\cdot|x)\|p_2^{y_j}(\cdot|x))~.
	\end{equation*} 
	
	Using Lemma \ref{le: bl1}, we have
	\begin{equation*}
	\begin{aligned}
	& D_{TV}(p_1(x, y_1,\ldots,y_{n-1},z)\|p_2(x, y_1,\ldots,y_{n-1},z)) \\
	& \leq D_{TV}(p_1(x)\|p_2(x)) + \max_{x}D_{TV}(p_1^{y_1,\ldots,y_{n-1}}(\cdot|x)\|p_2^{y_1,\ldots,y_{n-1}}(\cdot|x)) + \max_{x}D_{TV}(p_1^{z}(\cdot|x)\|p_2^{z}(\cdot|x)) \\
	& = D_{TV}(p_1(x)\|p_2(x)) + \sum_{j=1}^{n-1}\max_{x}D_{TV}(p_1^{y_j}(\cdot|x)\|p_2^{y_j}(\cdot|x)) + \max_{x}D_{TV}(p_1^{z}(\cdot|x)\|p_2^{z}(\cdot|x))~.
	\end{aligned}
	\end{equation*}
\end{proof}

Assume an $n$-agent stochastic game, we extend the \Cref{le: bl2}, \Cref{le: bl3}, and \Cref{le: bl4} by using \Cref{exle: bl1} in the proofs. Specifically, let $x=s$, $z=a^i$ and $\{y_1,\ldots,y_{n-1} \} = a^{-i}$, then we have the following conclusions.

\begin{exlemma}\label{exle: bl2}
	
	Assume that in the $n$-agent stochastic game, the initial state distributions of the two dynamics are the same, i.e., $p_1(s_{t=0}) = p_2(s_{t=0})$. 
	Let the dynamics distance be bounded as
	$M=\max_t\mathbb{E}_{(s_{t-1},a_{t-1}^{i},a_{t-1}^{-i})\sim p_2^{t-1}}[$ $D_{TV}(p_1(\cdot|s_{t-1},a_{t-1}^{i} a_{t-1}^{-i})\|p_2(\cdot|s_{t-1},a_{t-1}^{i}, a_{t-1}^{-i}))]$,
	the ego-agent policy distribution shift be bounded as $P=\max_{s}D_{TV}($
	$p_1(\cdot|s)\|p_2(\cdot|s))$,
	and the opponent model generalization errors be bounded as
	$O_j=\max_{s}D_{TV}(p_1(\cdot|s)\|p_2(\cdot|s))$ 
	for $j\in \{-i\}$, then the total variation distance in the state marginal is bounded as
	$$
	D_{TV}(p_1(s_t)\|p_2(s_t)) \leq t \Big(M+P+\sum_{j\in\{-i\}}O_j \Big)~.
	$$
\end{exlemma}



\begin{exlemma} \label{exle: bl3}
	Assume that in the $n$-agent stochastic game, the scale of the immediate reward is bounded as $r_{\max} = \max_{s,a^i,a^{-i}} r(s,a^i,a^{-i})$, 
	the dynamics distance is bounded as
	$M = \max_t\mathbb{E}_{(s_{t-1},a_{t-1}^{i},a_{t-1}^{-i})\sim p_2}[D_{TV}(p_1(\cdot|s_{t-1},a_{t-1}^{i}, a_{t-1}^{-i})\|$
	$p_2(\cdot|s_{t-1},a_{t-1}^{i},a_{t-1}^{-i}))]$,
	the ego-agent policy distribution shift is bounded as $P = \max_{s}D_{TV}(p_1(\cdot|s)\|p_2(\cdot|s))$,
	and the opponent model generalization errors are bounded as
	$O_j = \max_{s}D_{TV}(p_1(\cdot|s)\|p_2(\cdot|s))$ 
	for $j\in \{-i\}$. 
	Then the returns discrepancy can be bounded as
	$$
	\big|\eta_i[\pi^{i}_1, \pi^{-i}_{1}]-\eta_i[\pi^{i}_2, \pi^{-i}_2] \big| \leq 2r_{\text{max}} \Big[\frac{\gamma(M+P+\sum_{j\in\{-i\}}O_j)}{1-\gamma}+P+\sum_{j\in\{-i\}}O_j \Big]~.
	$$
\end{exlemma}

\begin{exlemma} \label{exle: bl4}
	
	Suppose we consider situations with branched rollouts of length $k$ in the $n$-agent stochastic game.
	Assume that before the branch point the expected distance between the dynamics is bounded as 
	$$
	\max_t\mathbb{E}_{(s_{t-1},a_{t-1}^{i},a_{t-1}^{-i})\sim p_2}[D_{TV}(p_1^{\text{pre}}(\cdot|s_{t-1},a_{t-1}^{i}, a_{t-1}^{-i})\|p_2^{\text{pre}}(\cdot|s_{t-1},a_{t-1}^{i}, a_{t-1}^{-i}))]\leq M^{\text{pre}}~,
	$$
	and after the branch point the expected distance between dynamics distributions is bounded as
	$$
	\max_t\mathbb{E}_{(s_{t-1},a_{t-1}^{i},a_{t-1}^{-i})\sim p_2}[D_{TV}(p_1^{\text{post}}(\cdot|s_{t-1},a_{t-1}^{i}, a_{t-1}^{-i})\|p_2^{\text{post}}(\cdot|s_{t-1},a_{t-1}^{i}, a_{t-1}^{-i}))]\leq M^{\text{post}}~.
	$$
	Likewise, the ego-agent policy distribution shifts are bounded as $P^{\text{pre}}$, $P^{\text{post}}$, $O^{\text{pre}}_j$ and the generalization errors of opponent models are bounded by $O^{\text{post}}_j$, for $j\in\{-i\}$. The discrepancy between the return in real environment $\eta_i[\pi_1^i, \pi_1^{-i}]$\ and the return in the dynamics model with branched rollouts $\eta_i^{\text{branch}}[(\pi_{2}^{i,\text{pre}}, \pi_{2}^{i,\text{post}}),(\pi_{2}^{-i,\text{pre}}, \hat{\pi}_{2}^{-i,\text{post}})]$ are bounded as
	\begin{equation*}
	\begin{aligned}
	& && \big|\eta_i[\pi^{i}_1, \pi^{-i}_{1}] -
	\eta_i^{\text{branch}}[(\pi_{2}^{i,\text{pre}}, \pi_{2}^{i,\text{post}}),(\pi_{2}^{-i,\text{pre}}, \hat{\pi}_{2}^{-i,\text{post}})]\big| \\
	& \leq && 2r_{\text{max}}\Big[\frac{\gamma^{k+1}(M^{\text{pre}}+P^{\text{pre}}+\sum_{j\in\{-i\}}O^{\text{pre}})}{1-\gamma}+\gamma^{k+1}(P^{\text{pre}}+\sum_{j\in\{-i\}}O^{\text{pre}}) \\
	& &&+k(M^{\text{post}}+P^{\text{post}}+\sum_{j\in\{-i\}}O^{\text{post}})+P^{\text{post}}+\sum_{j\in\{-i\}}O^{\text{post}}\Big]~.
	\end{aligned}
	\end{equation*}
\end{exlemma}


\section{Proof of Convergence}
\label{app: converge}

\begin{theorem}
	\label{th: MABRPO_convergence}
	Let $\mathcal{P}$ be the process operator of multi-agent soft Q-learning algorithm and $\boldsymbol{Q}^{\text{soft}}_{*}$ be its optimal value. Let $\{\mathcal{P}_t\}$ be the process operator of our algorithm at iteration $t$, which consists of a dynamics model, i.e., the transition model $\mathcal{T}_t(\cdot|s, \boldsymbol{a})$ and the reward model $\boldsymbol{r}_t(s, \boldsymbol{}{a})$\footnote{Take state-action pairs of all agents as input and the observations as well as rewards of all agents as output.} and opponent models, denoted as $\boldsymbol{\phi}_t(\cdot|s)$, for simplicity\footnote{Take the observations of all opponents as input and the actions of all opponents as output.}. Define $\boldsymbol{Q}_{t+1}=\mathcal{P}_t(\boldsymbol{Q}_{t})$.
	
	Define the operators $\oplus_t^{\boldsymbol{\phi}_t}$ and $\oplus^{\boldsymbol{\phi}_t}$ as following: 
	\begin{align*}
	\oplus_t^{\boldsymbol{\phi}_t}\boldsymbol{f}(\cdot) &= \mathbb{E}_{s^{\prime}\sim \mathcal{T}_t}\Big[\alpha\log\int_{\boldsymbol{A}^{-i}}\boldsymbol{\phi}_t(\boldsymbol{a}^{-i}|s^{\prime})\int_{\boldsymbol{A}^i}\exp(\boldsymbol{f}(\cdot)/\alpha)d\boldsymbol{a}^{i}d\boldsymbol{a}^{-i}\Big]~;\\
	\oplus^{\boldsymbol{\phi}_t}\boldsymbol{f}(\cdot) &= \mathbb{E}_{s^{\prime}\sim \mathcal{T}_s}\Big[\alpha\log\int_{\boldsymbol{A}^{-i}}\boldsymbol{\phi}_t(\boldsymbol{a}^{-i}|s^{\prime})\int_{\boldsymbol{A}^i}\exp(\boldsymbol{f}(\cdot)/\alpha)d\boldsymbol{a}^{i}d\boldsymbol{a}^{-i}\Big]~,
	\end{align*}
	where $\mathcal{T}_s(\cdot)$ means the real transition of the environment.
	
	The operator $\mathcal{P}$ can be expressed as
	$$
	\mathcal{P}(\boldsymbol{Q}_t, \boldsymbol{Q})=
	\begin{cases}
	\boldsymbol{r}_t(s, \boldsymbol{a})+\beta\gamma\oplus^{\boldsymbol{\phi}_t}\boldsymbol{Q}+(1-\beta)\gamma\oplus_t^{\boldsymbol{\phi}_t}\boldsymbol{Q} &  (s, \boldsymbol{a})\in \tau_t \\
	\boldsymbol{Q}_t &  \text{otherwise}
	\end{cases}~,
	$$
	where $\tau_t$ is the trajectory sample at iteration $t$.
	
	Then $\boldsymbol{Q}_t$ values computed using $\mathcal{P}_t$ will converge to $\boldsymbol{Q}^{\text{soft}}_*$ if the conditions below are satisfied:
	\begin{enumerate}[leftmargin=30px]
		\item $\oplus^{\boldsymbol{\phi}}_{t} \rightarrow \oplus^{\boldsymbol{\phi}}$, which means that
		$$
		\lim_{t \rightarrow \infty} \max _{(s, \boldsymbol{a}) \in \mathcal{S} \times A} | \oplus_{t}^{\boldsymbol{\phi}} \boldsymbol{f}(\cdot)-\oplus^{\boldsymbol{\phi}} \boldsymbol{f}(\cdot)|=\boldsymbol{0}~,
		$$
		for all functions $\boldsymbol{f}$.
		\item $\boldsymbol{r}_t$ converges to $\boldsymbol{r}$ for all $(s, \boldsymbol{a})$, $\boldsymbol{r}$ is the real reward function, and $\boldsymbol{r}_t$ is continuous for all $t$.
		$$
		\lim_{t\rightarrow\infty}\max_{s, \boldsymbol{a}}|\boldsymbol{r}_t-\boldsymbol{r}| = \boldsymbol{0}~,
		$$
		\item $\boldsymbol{\phi}_t$ converges to $\boldsymbol{\phi}$ for all $s$, $\boldsymbol{\phi}$ is the real joint opponent policy.
		$$
		\lim_{t\rightarrow\infty}\max_{s}|\boldsymbol{\phi}_t-\boldsymbol{\phi}| = \boldsymbol{0}~,
		$$
		\item Discount factor $\gamma \in [0,1)$.
		\item Each state-action pair is visited infinitely often.
	\end{enumerate}
\end{theorem}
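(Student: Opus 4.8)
The plan is to view the iteration $\boldsymbol{Q}_{t+1}=\mathcal{P}_t(\boldsymbol{Q}_t)$ as an asynchronous stochastic-approximation process and to prove that the algorithm's operator $\mathcal{P}_t$ converges to the true MASQL process operator $\mathcal{P}$, whose unique fixed point is $\boldsymbol{Q}^{\text{soft}}_*$; this follows the process-operator framework behind the Nash-$Q$ analysis of \cites{hu2003nash} and the multiagent soft-$Q$ analysis of \cites{wei2018multiagent}. First I would identify the limit of $\mathcal{P}_t$: conditions 2 and 3 give $\boldsymbol{r}_t\to\boldsymbol{r}$ and $\boldsymbol{\phi}_t\to\boldsymbol{\phi}$, and together with condition 1 (the model-transition backup agreeing asymptotically with the real-transition backup) they force both $\oplus^{\boldsymbol{\phi}_t}$ and $\oplus_t^{\boldsymbol{\phi}_t}$ to the true soft backup $\oplus^{\boldsymbol{\phi}}$. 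Hence the $\beta$-convex combination in $\mathcal{P}_t$ collapses onto $\mathcal{P}\boldsymbol{Q}=\boldsymbol{r}+\gamma\oplus^{\boldsymbol{\phi}}\boldsymbol{Q}$ independently of $\beta$.

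Next I would verify that $\mathcal{P}$ is a $\gamma$-contraction in the sup-norm. The key fact is that the soft (log-sum-exp) backup is $1$-Lipschitz in the sup-norm of its argument; averaging over $s'\sim\mathcal{T}$ and over $\boldsymbol{a}^{-i}\sim\boldsymbol{\phi}$ preserves non-expansiveness, and multiplying by $\gamma\in[0,1)$ (condition 4) yields a genuine contraction, hence a unique fixed point that coincides with $\boldsymbol{Q}^{\text{soft}}_*$. This step is essentially a restatement of the soft-$Q$ contraction argument and should be routine.

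The core of the argument is a convergence lemma for iterative processes subject to a vanishing operator perturbation. I would decompose $\|\boldsymbol{Q}_{t+1}-\boldsymbol{Q}^{\text{soft}}_*\|\le\|\mathcal{P}_t(\boldsymbol{Q}_t)-\mathcal{P}(\boldsymbol{Q}_t)\|+\|\mathcal{P}(\boldsymbol{Q}_t)-\mathcal{P}(\boldsymbol{Q}^{\text{soft}}_*)\|$, bound the second term by $\gamma\|\boldsymbol{Q}_t-\boldsymbol{Q}^{\text{soft}}_*\|$ using the contraction, and bound the first term by a perturbation $c_t$ that aggregates $\|\boldsymbol{r}_t-\boldsymbol{r}\|$, the opponent-model error (condition 3, propagated through the Lipschitz dependence of $\oplus$ on $\boldsymbol{\phi}$), and the backup error (condition 1). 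Conditions 1--3 give $c_t\to0$, while condition 5 (every $(s,\boldsymbol{a})$ visited infinitely often) handles the asynchronous ``otherwise'' branch that leaves unvisited entries unchanged. The process-operator convergence theorem, in the form used in \cites{hu2003nash}, then forces $\boldsymbol{Q}_t\to\boldsymbol{Q}^{\text{soft}}_*$.

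The hard part will be the perturbation bound $\|\mathcal{P}_t(\boldsymbol{Q}_t)-\mathcal{P}(\boldsymbol{Q}_t)\|\to0$ uniformly in $\boldsymbol{Q}_t$, because the opponent- and transition-model errors are multiplied against $\exp(\boldsymbol{Q}_t/\alpha)$ inside the log-sum-exp and $\boldsymbol{Q}_t$ is a priori unbounded. To control this I would first establish a uniform bound $\|\boldsymbol{Q}_t\|\le r_{\text{max}}/(1-\gamma)$ by induction from the boundedness of $\boldsymbol{r}_t$, which confines the soft backup to a compact set on which the map $\boldsymbol{\phi}\mapsto\oplus^{\boldsymbol{\phi}}\boldsymbol{Q}$ is uniformly Lipschitz; condition 3 then controls the opponent-model contribution and condition 1 absorbs the residual transition error, delivering $c_t\to0$ as required.
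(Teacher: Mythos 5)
Your proposal is correct and follows the same skeleton as the paper's proof: identify the limit operator, show $\mathcal{P}_t$ approximates $\mathcal{P}$ uniformly using conditions 1--3, and then invoke a convergence result for approximating processes together with the non-expansiveness of the soft backup and conditions 4--5. The differences are in how the two technical hurdles are closed. First, the paper does not run the perturbed-contraction recursion $\|\boldsymbol{Q}_{t+1}-\boldsymbol{Q}_*\|\le c_t+\gamma\|\boldsymbol{Q}_t-\boldsymbol{Q}_*\|$ by hand; it cites the approximating-process proposition of Szepesv\'ari and Littman (its Lemma~D.5) and verifies its hypotheses by choosing indicator functions $F_t,G_t$ supported on the visited trajectory $\tau_t$ --- this is what formally absorbs the asynchronous ``otherwise'' branch and condition 5, which your sketch only gestures at via ``the process-operator convergence theorem.'' Second, and more substantively, the uniformity-in-$\boldsymbol{Q}$ issue you correctly flag as the hard part is resolved by a different mechanism: the paper never bounds $\|\boldsymbol{Q}_t\|$. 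Instead it writes the opponent-model discrepancy $\boldsymbol{D}_2$ as $\alpha\log$ of a ratio of integrals in which the $\int\exp(\boldsymbol{Q}/\alpha)$ factors cancel, yielding a bound of the form $\alpha\log\bigl(1+\max_{s',\boldsymbol{a}^{-i}}|\boldsymbol{\phi}_t-\boldsymbol{\phi}|\bigr)$ that is uniform in $\boldsymbol{Q}$ for free; the transition-model residual is then handled directly by condition 1, which is stated for \emph{all} functions $\boldsymbol{f}$ and so is already uniform. Your alternative --- an inductive bound $\|\boldsymbol{Q}_t\|\le r_{\max}/(1-\gamma)$ confining the backup to a compact set on which $\boldsymbol{\phi}\mapsto\oplus^{\boldsymbol{\phi}}\boldsymbol{Q}$ is uniformly Lipschitz --- is a legitimate route and arguably more transparent, though you would need $\sup_t\|\boldsymbol{r}_t\|<\infty$ (which follows from condition 2) rather than $\|\boldsymbol{r}\|\le r_{\max}$ alone, and the log-ratio cancellation gets you the same conclusion with less machinery. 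Both arguments reach the same endpoint; neither has a gap.
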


\begin{proof}
	
	We will use Lemma D.5 to prove this theorem.
	Since $(s, \boldsymbol{a})$ are always in $\tau_t$, we use $\mathcal{P}(\boldsymbol{Q})=
	\boldsymbol{r}_t(s, \boldsymbol{a})+\beta\gamma\oplus^{\boldsymbol{\phi}_t}\boldsymbol{Q}+(1-\beta)\gamma\oplus_t^{\boldsymbol{\phi}_t}\boldsymbol{Q}
	$ in the proof below.
	
	First, we prove that $\mathcal{P}_t$ approximates $\mathcal{P}$ at arbitrary Q value. Let $\boldsymbol{D}_t = |\mathcal{P}_t(\boldsymbol{Q})-\mathcal{P}(\boldsymbol{Q})|$, we show that $\boldsymbol{D}_t$ converges to $\boldsymbol{0}$ when $t\rightarrow\infty$.
	Define $\boldsymbol{b}$ as the opponent action matrix, i.e., predicted opponent actions for each agent.
	\begin{align*}
	\boldsymbol{D}_t & = |\mathcal{P}_t(\boldsymbol{Q})-\mathcal{P}(\boldsymbol{Q})| \\
	& =\Big|\boldsymbol{r}_t(s, [\boldsymbol{a}^{i}\; \boldsymbol{b}]) + \beta\gamma\oplus^{\boldsymbol{\phi}_t}\boldsymbol{Q}+(1-\beta)\gamma\oplus_t^{\boldsymbol{\phi}_t}\boldsymbol{Q} -\boldsymbol{r}_t(s, \boldsymbol{a}) - \mathbb{E}_{s^{\prime}\sim \mathcal{T}_s}\Big[\alpha\log\int_{\boldsymbol{A}}\exp(\boldsymbol{Q}/\alpha)d\boldsymbol{a}\Big]\Big| \\
	& \leq |\boldsymbol{r}_t(s, [\boldsymbol{a}^{i}\; \boldsymbol{b}])-\boldsymbol{r}_t(s, \boldsymbol{a})| \\
	& \quad+\beta\gamma \Big|\mathbb{E}_{s^{\prime}\sim \mathcal{T}_s}\Big[\alpha\log\int_{\boldsymbol{A}^{-i}}\boldsymbol{\phi}_t(\boldsymbol{a}^{-i}|s^{\prime})\int_{\boldsymbol{A}^i}\exp(\boldsymbol{Q}/\alpha)d\boldsymbol{a}^{i}d\boldsymbol{a}^{-i}-\alpha\log\int_{\boldsymbol{A}}\exp(\boldsymbol{Q}/\alpha)d\boldsymbol{a}\Big]\Big| \\
	& \quad+(1-\beta)\gamma \Big|\mathbb{E}_{s^{\prime}\sim \mathcal{T}_t}\Big[\alpha\log\int_{\boldsymbol{A}^{-i}}\boldsymbol{\phi}_t(\boldsymbol{a}^{-i}|s^{\prime})\int_{\boldsymbol{A}^i}\exp(\boldsymbol{Q}/\alpha)d\boldsymbol{a}^{i}d\boldsymbol{a}^{-i}\Big] \\
	& \quad\quad\quad\quad\quad\quad-\mathbb{E}_{s^{\prime}\sim \mathcal{T}_s}\Big[\alpha\log\int_{\boldsymbol{A}}\exp(\boldsymbol{Q}/\alpha)d\boldsymbol{a}\Big]\Big| \\
	& = \boldsymbol{D}_1 + \beta\gamma\boldsymbol{D}_2 + (1-\beta)\gamma\boldsymbol{D}_3~.
	\end{align*}

	For $\boldsymbol{D}_1$, $\boldsymbol{\phi}_t$ converges to $\boldsymbol{\phi}$, thus $[\boldsymbol{a}^{i}\;\boldsymbol{b}]$ converges to $\boldsymbol{a}$. Given $\boldsymbol{r}_t$ is continuous, $\boldsymbol{D}_1$ converges to $\boldsymbol{0}$ when $t\rightarrow \infty$.
	
	For $\boldsymbol{D}_2$, we show that with the fourth condition,  $\boldsymbol{D}_2$ will converges to $\boldsymbol{0}$.
	\begin{align*}
	\boldsymbol{D}_2 & = \Big|\mathbb{E}_{s^{\prime}\sim \mathcal{T}_s}\Big[\alpha\log\int_{\boldsymbol{A}^{-i}}\boldsymbol{\phi}_t(\boldsymbol{a}^{-i}|s^{\prime})\int_{\boldsymbol{A}^i}\exp(\boldsymbol{Q}/\alpha)d\boldsymbol{a}^{i}d\boldsymbol{a}^{-i}-\alpha\log\int_{\boldsymbol{A}}\exp(\boldsymbol{Q}/\alpha)d\boldsymbol{a}\Big]\Big| \\
	& = \Big|\mathbb{E}_{s^{\prime}\sim \mathcal{T}_s}\Big[\alpha\log\frac{\int_{\boldsymbol{A}^{-i}}\boldsymbol{\phi}_t(\boldsymbol{a}^{-i}|s^{\prime})\int_{\boldsymbol{A}^i}\exp(\boldsymbol{Q}/\alpha)d\boldsymbol{a}^{i}d\boldsymbol{a}^{-i}}{\int_{\boldsymbol{A}^{-i}}\int_{\boldsymbol{A}^i}\exp(\boldsymbol{Q}/\alpha)d\boldsymbol{a}^{i}d\boldsymbol{a}^{-i}}\Big]\Big| \\
	& = \Big|\mathbb{E}_{s^{\prime}\sim \mathcal{T}_s}\Big[\alpha\log\frac{\int_{\boldsymbol{A}^{-i}}(\boldsymbol{\phi}(\boldsymbol{a}^{-i}|s^{\prime})+\boldsymbol{\phi}_t(\boldsymbol{a}^{-i}|s^{\prime})-\boldsymbol{\phi}(\boldsymbol{a}^{-i}|s^{\prime}))\int_{\boldsymbol{A}^i}\exp(\boldsymbol{Q}/\alpha)d\boldsymbol{a}^{i}d\boldsymbol{a}^{-i}}{\int_{\boldsymbol{A}^{-i}}\int_{\boldsymbol{A}^i}\exp(\boldsymbol{Q}/\alpha)d\boldsymbol{a}^{i}d\boldsymbol{a}^{-i}}\Big]\Big| \\
	& = \Big|\mathbb{E}_{s^{\prime}\sim \mathcal{T}_s}\Big[\alpha\log\frac{\int_{\boldsymbol{A}^{-i}}\boldsymbol{\phi}(\boldsymbol{a}^{-i}|s^{\prime})\int_{\boldsymbol{A}^i}\exp(\boldsymbol{Q}/\alpha)d\boldsymbol{a}^{i}d\boldsymbol{a}^{-i}}{\int_{\boldsymbol{A}^{-i}}\int_{\boldsymbol{A}^i}\exp(\boldsymbol{Q}/\alpha)d\boldsymbol{a}^{i}d\boldsymbol{a}^{-i}} \\
	& \quad\quad\quad\quad\quad + \frac{\int_{\boldsymbol{A}^{-i}}(\boldsymbol{\phi}_t(\boldsymbol{a}^{-i}|s^{\prime})-\boldsymbol{\phi}(\boldsymbol{a}^{-i}|s^{\prime}))\int_{\boldsymbol{A}^i}\exp(\boldsymbol{Q}/\alpha)d\boldsymbol{a}^{i}d\boldsymbol{a}^{-i}}{\int_{\boldsymbol{A}^{-i}}\int_{\boldsymbol{A}^i}\exp(\boldsymbol{Q}/\alpha)d\boldsymbol{a}^{i}d\boldsymbol{a}^{-i}}\Big]\Big|\\
	& \leq \Big|\mathbb{E}_{s^{\prime}\sim \mathcal{T}_s}\Big[\alpha\log\frac{\int_{\boldsymbol{A}^{-i}}\int_{\boldsymbol{A}^i}\exp(\boldsymbol{Q}/\alpha)d\boldsymbol{a}^{i}d\boldsymbol{a}^{-i}}{\int_{\boldsymbol{A}^{-i}}\int_{\boldsymbol{A}^i}\exp(\boldsymbol{Q}/\alpha)d\boldsymbol{a}^{i}d\boldsymbol{a}^{-i}} \\
	& \quad\quad\quad\quad\quad + \frac{\int_{\boldsymbol{A}^{-i}}(\boldsymbol{\phi}_t(\boldsymbol{a}^{-i}|s^{\prime})-\boldsymbol{\phi}(\boldsymbol{a}^{-i}|s^{\prime}))\int_{\boldsymbol{A}^i}\exp(\boldsymbol{Q}/\alpha)d\boldsymbol{a}^{i}d\boldsymbol{a}^{-i}}{\int_{\boldsymbol{A}^{-i}}\int_{\boldsymbol{A}^i}\exp(\boldsymbol{Q}/\alpha)d\boldsymbol{a}^{i}d\boldsymbol{a}^{-i}}\Big]\Big|\\
	& \leq \mathbb{E}_{s^{\prime}\sim \mathcal{T}_s}\Big[\alpha\log\Big(1+\frac{\int_{\boldsymbol{A}^{-i}}\max_{s^{\prime}, \boldsymbol{a}^{-i}}|\boldsymbol{\phi}_t(\boldsymbol{a}^{-i}|s^{\prime})-\boldsymbol{\phi}(\boldsymbol{a}^{-i}|s^{\prime})|\int_{\boldsymbol{A}^i}\exp(\boldsymbol{Q}/\alpha)d\boldsymbol{a}^{i}d\boldsymbol{a}^{-i}}{\int_{\boldsymbol{A}^{-i}}\int_{\boldsymbol{A}^i}\exp(\boldsymbol{Q}/\alpha)d\boldsymbol{a}^{i}d\boldsymbol{a}^{-i}}\Big)\Big] \\
	\end{align*}

	Given $t\rightarrow\infty$, the upper bound of $\boldsymbol{D}_2$ will converge to $\mathbb{E}_{s^{\prime}\sim \mathcal{T}_s}[\alpha\log(\boldsymbol{1})]=\boldsymbol{0}$.
	
	For $\boldsymbol{D}_3$, we split it into two parts:
	\begin{align*}
	\boldsymbol{D}_3 & =
	\Big|\mathbb{E}_{s^{\prime}\sim \mathcal{T}_t}\Big[\alpha\log\int_{\boldsymbol{A}^{-i}}\boldsymbol{\phi}_t(\boldsymbol{a}^{-i}|s^{\prime})\int_{\boldsymbol{A}^i}\exp(\boldsymbol{Q}/\alpha)d\boldsymbol{a}^{i}d\boldsymbol{a}^{-i}\Big]-\mathbb{E}_{s^{\prime}\sim \mathcal{T}_s}[\alpha\log\int_{\boldsymbol{A}}\exp(\boldsymbol{Q}/\alpha)d\boldsymbol{a}]\Big| \\
	& \leq \Big|\mathbb{E}_{s^{\prime}\sim \mathcal{T}_s}\Big[\alpha\log\int_{\boldsymbol{A}^{-i}}\boldsymbol{\phi}_t(\boldsymbol{a}^{-i}|s^{\prime})\int_{\boldsymbol{A}^i}\exp(\boldsymbol{Q}/\alpha)d\boldsymbol{a}^{i}d\boldsymbol{a}^{-i}-\alpha\log\int_{\boldsymbol{A}}\exp(\boldsymbol{Q}/\alpha)d\boldsymbol{a}\Big]\Big| \\
	& \quad+\Big|\mathbb{E}_{s^{\prime}\sim \mathcal{T}_t}\Big[\alpha\log\int_{\boldsymbol{A}}\exp(\boldsymbol{Q}/\alpha)d\boldsymbol{a}-\mathbb{E}_{s^{\prime}\sim \mathcal{T}_s}\Big[\alpha\log\int_{\boldsymbol{A}}\exp(\boldsymbol{Q}/\alpha)d\boldsymbol{a}\Big]\Big]\Big| \\
	& = \boldsymbol{D}_2 + \boldsymbol{D}_4~.
	\end{align*}
	
	We have already shown that $\boldsymbol{D}_2$ converges to $\boldsymbol{0}$ when $t\rightarrow\infty$. From condition 1, we have $\lim_{t\rightarrow\infty}\max_{s, \boldsymbol{a}}\boldsymbol{D}_4=\boldsymbol{0}$.
	
	Now we prove that $\mathcal{P}_t$ approximates $\mathcal{P}$ with arbitrary Q value function.
	To apply Lemma D.5, we check the 4 conditions in Lemma D.5.
	From Lemma D.6, we get $\oplus_t^{\boldsymbol{\phi}}$ is a non-expansion operator, for all opponent model $\boldsymbol{\phi}$.
	Since $\oplus_t^{\boldsymbol{\phi}_t}$ is a non-expansion operator, we get $\|\mathcal{P}_t(\boldsymbol{Q}_t, \boldsymbol{Q_{*}})-\mathcal{P}_t(\boldsymbol{Q}_t, \boldsymbol{Q})\|\leq \gamma\|\boldsymbol{Q_{*}}-\boldsymbol{Q}\|$.
	Then select
	$$
	G_{t}(x)=\begin{cases}
	{0,} & {\text { if } x \in \tau_{t}} \\
	{1,} & {\text { otherwise }}
	\end{cases}~,
	$$
	and
	$$
	F_{t}(x)=\begin{cases}
	{\gamma,} & {\text { if } x \in \tau_{t}} \\
	{0,}      & {\text { otherwise }}
	\end{cases}~.
	$$
	
	With condition 4 and condition 5, all three conditions in Lemma D.5 are satisfied. As such, the proof is completed.
\end{proof}

\begin{theorem} \label{th: MASQL_convergence}
	In a finite state $n$-agent stochastic game, the Q value sequence ${\boldsymbol{Q}_0, \boldsymbol{Q}_1, \ldots, \boldsymbol{Q}_n}$, as computed by the update rule of Multi-agent Soft Q learning algorithm
	\begin{equation}
	\boldsymbol{Q}_{t+1} = (1-\alpha_t)\boldsymbol{Q}_t + \alpha_t(\boldsymbol{r}_t+\gamma \mathbb{E}_{s^{\prime}\sim \mathcal{T}_s}[\boldsymbol{V}_t^{\text{soft}}\left(s^{\prime}\right)])~,
	\label{eq: masql}
	\end{equation}
	will converge to the Nash Q-value $\boldsymbol{Q}_{*}=\left[Q_{*}^{1}, \ldots, Q_{*}^{N_o}\right]$ if the following conditions are satisfied:
	
	\begin{enumerate}[leftmargin=15px]
		\item $0 \leq \alpha_{t}(x) \leq 1, \sum_{t} \alpha_{t}(x)=\infty, \sum_{t} \alpha_{t}^{2}(x)<\infty$, where $\alpha_t$ is the learning rate.
		\item Every state-action pair $(s, \boldsymbol{a})$ is visited infinitely often.
		\item For each stage game, the Nash equilibrium is recognized either as the global optimum or as a saddle point. All the global optimum or saddle points share the same Nash value in each stage game\footnote{The details of this condition can be found in \cites{yang2018mean}.}.
	\end{enumerate}
\end{theorem}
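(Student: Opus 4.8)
The plan is to recognize the update rule in \eq{eq: masql} as a stochastic approximation process and discharge the convergence claim through the general stochastic-approximation lemma already invoked in the proof of \Cref{th: MABRPO_convergence} (Lemma D.5), with a soft Nash Bellman operator playing the role of the target contraction. First I would define the deterministic target operator $\boldsymbol{H}$ by $\boldsymbol{H}\boldsymbol{Q}(s,\boldsymbol{a}) = \boldsymbol{r}(s,\boldsymbol{a}) + \gamma\,\mathbb{E}_{s'\sim\mathcal{T}_s}[\boldsymbol{V}^{\text{soft}}(s')]$, where $\boldsymbol{V}^{\text{soft}}(s')$ is the soft (log-sum-exp) value of $\boldsymbol{Q}$ at $s'$, i.e. $\oplus^{\boldsymbol{\phi}}\boldsymbol{Q}(s',\cdot)$. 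Then \eq{eq: masql} takes the canonical form $\boldsymbol{Q}_{t+1} = (1-\alpha_t)\boldsymbol{Q}_t + \alpha_t[\boldsymbol{H}\boldsymbol{Q}_t + w_t]$, where $w_t$ is the zero-mean sampling noise coming from replacing the expectation over $\mathcal{T}_s$ and the reward by a single transition/reward sample at iteration $t$. By construction the fixed point of $\boldsymbol{H}$ is the soft Nash Q-value $\boldsymbol{Q}_*$.

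The core of the argument is to show that $\boldsymbol{H}$ is a $\gamma$-contraction in the sup-norm. For any two value tables $\boldsymbol{Q}$ and $\boldsymbol{Q}'$ one has $\|\boldsymbol{H}\boldsymbol{Q} - \boldsymbol{H}\boldsymbol{Q}'\|_\infty \le \gamma\max_{s'}|\oplus^{\boldsymbol{\phi}}\boldsymbol{Q}(s',\cdot) - \oplus^{\boldsymbol{\phi}}\boldsymbol{Q}'(s',\cdot)|$, and I would then invoke the non-expansion property of the soft-max operator $\oplus^{\boldsymbol{\phi}}$ (Lemma D.6) to bound the right-hand side by $\gamma\|\boldsymbol{Q}-\boldsymbol{Q}'\|_\infty$; since the discount factor satisfies $\gamma<1$ this is a strict contraction. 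Here condition 3 enters crucially: it guarantees that the value extracted at each stage game --- whether through a global optimum or a saddle point --- is uniquely determined, so that $\boldsymbol{H}$ is single-valued and its fixed point coincides with the Nash Q-value $\boldsymbol{Q}_*$ rather than merely some self-consistent soft value. The equivalence between MASQL and MASAC noted in the main text, via the partition function $Z(s_t)=\exp(V^*(s_t))$, is what lets me identify this soft fixed point with the intended entropy-regularized Nash solution.

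With the contraction in hand, the remaining task is to verify the hypotheses of Lemma D.5 for the process $\boldsymbol{Q}_{t+1} = (1-\alpha_t)\boldsymbol{Q}_t + \alpha_t[\boldsymbol{H}\boldsymbol{Q}_t + w_t]$: condition 1 supplies the Robbins--Monro learning-rate requirements $\sum_t\alpha_t=\infty$ and $\sum_t\alpha_t^2<\infty$; condition 2, infinite visitation of every $(s,\boldsymbol{a})$, ensures every component is updated infinitely often and controls the conditional variance of $w_t$, which is bounded because the rewards are bounded and $\oplus^{\boldsymbol{\phi}}$ is Lipschitz; and the contraction established above furnishes the required $\gamma$-pseudo-contraction toward $\boldsymbol{Q}_*$. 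Applying Lemma D.5 then yields $\boldsymbol{Q}_t\to\boldsymbol{Q}_*$. I expect the main obstacle to be precisely the contraction step and its reliance on condition 3: one must argue carefully that the soft value operator selects a \emph{consistent} Nash value across iterations despite possible multiplicity of equilibria, so that the non-expansion of $\oplus^{\boldsymbol{\phi}}$ composes into a genuine contraction with the intended fixed point $\boldsymbol{Q}_*$; by contrast, checking the variance and boundedness side-conditions of Lemma D.5 is comparatively routine given bounded rewards and the Lipschitz continuity of log-sum-exp.
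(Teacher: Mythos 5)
Your overall strategy---cast \eq{eq: masql} as a stochastic approximation of a Bellman-type operator, prove a $\gamma$-contraction, and invoke a standard convergence lemma---is reasonable, and your verification of the Robbins--Monro and variance side-conditions matches what the paper does. The paper, however, routes the argument through Lemma~\ref{le: cl1} (the Jaakkola-style lemma with a vanishing perturbation term $c_t$) rather than Lemma~\ref{le: cl5}, and this choice is not cosmetic: it is precisely what lets the paper avoid the step where your proposal breaks down.

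The genuine gap is your claim that the fixed point of $\boldsymbol{H}\boldsymbol{Q} = \boldsymbol{r} + \gamma\,\mathbb{E}_{s'}[\boldsymbol{V}^{\text{soft}}(s')]$ coincides with the Nash Q-value $\boldsymbol{Q}_*$. The soft value $V^{\text{soft}}(s) = \alpha\log\int\exp(\boldsymbol{Q}(s,\boldsymbol{a})/\alpha)\,d\boldsymbol{a}$ is a smoothed maximum carrying an entropy bonus; it is not equal to the Nash value $v^{\text{Nash}}(s) = \mathbb{E}_{\pi^{\text{Nash}}}[\boldsymbol{Q}(s,\boldsymbol{a})]$, and condition~3 does nothing to close that gap---it only resolves the multiplicity of equilibria, not the discrepancy between log-sum-exp and an equilibrium expectation. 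Your contraction argument (which is itself fine, via Lemma~\ref{le: cl6}) therefore delivers convergence to the \emph{soft} fixed point $\boldsymbol{Q}^{\text{soft}}_*$, not to $\boldsymbol{Q}_*$, and the statement "so that $\boldsymbol{H}$ is single-valued and its fixed point coincides with the Nash Q-value" is asserted where a proof is needed. The paper confronts exactly this issue by splitting $\boldsymbol{F}_t = \boldsymbol{F}_t^{\text{Nash}} + \boldsymbol{c}_t$ with $\boldsymbol{c}_t = \gamma\big(\mathbb{E}_{s'}[\boldsymbol{V}_t^{\text{soft}}(s')] - \boldsymbol{v}_t^{\text{Nash}}(s')\big)$: the Nash part contracts toward $\boldsymbol{Q}_*$ by the Hu--Wellman result (Lemma~\ref{le: cl2}), and the perturbation $\boldsymbol{c}_t$ is argued to vanish using the monotone improvement of the soft Q-values (Lemma~\ref{le: cl4}), the contraction of the soft operator (Lemma~\ref{le: cl3}), and condition~3. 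To repair your proof you would need to supply an equivalent argument that $\mathbb{E}[\boldsymbol{V}_t^{\text{soft}}] - \boldsymbol{v}_t^{\text{Nash}} \to 0$, at which point you would essentially be reproducing the paper's decomposition inside the $\lambda_t$ term of Lemma~\ref{le: cl5}.
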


\begin{proof}
	We prove this theorem using Lemma D.1. Substracting $\boldsymbol{Q}_{*}$ on both sides of \eq{eq: masql}, we obtain
	$$
	\begin{aligned}
	\boldsymbol{\Delta}_{t}(x) & =\boldsymbol{Q}_{t}(s, \boldsymbol{a})-\boldsymbol{Q}_{*}(s, \boldsymbol{a}) \\
	\boldsymbol{F}_t(x) & =\boldsymbol{r}_t+\gamma \mathbb{E}_{s^{\prime}\sim \mathcal{T}_s}[\boldsymbol{V}_{t}^{\text{soft}}(s^{\prime})]-\boldsymbol{Q}_{*}(s, \boldsymbol{a})~,
	\end{aligned}
	$$
	where $x \triangleq\left(s, \boldsymbol{a}\right)$, $\boldsymbol{V}_t^{\text{soft}}\left(s\right)=[V_{t}^{\text{soft},1}, \ldots, V_{t}^{\text{soft},n}]$ and $V_{t}^{\text{soft},i}(s)=\alpha\log\int\exp(\boldsymbol{Q}_t^i(s,\boldsymbol{a})/\alpha)d\boldsymbol{a}$.
	Let $\mathcal{F}_t$ denote the $\sigma$ field generated at time $t$ by the process. Since $\boldsymbol{Q}_t$ is derived from previous trajectory up to time $t$, both $\boldsymbol{\Delta}_t$ and $\boldsymbol{F}_t$ are $\mathcal{F}_t$-measurable. The condition 1 and condition 2 of Lemma D.1 are satisfied because of the Condition 1 of this theorem and the finite state space assumption.
	
	To apply Lemma D.1, we now show that $\boldsymbol{F}_t$ satisfies the Condition 3 and Condition 4 in Lemma D.1.
	\begin{align*}
	\boldsymbol{F}_t(x) & =\boldsymbol{r}_t+\gamma \mathbb{E}_{s^{\prime}\sim \mathcal{T}_s}\left[\boldsymbol{V}_t^{\text{soft}}\left(s^{\prime}\right)\right]-\boldsymbol{Q}_{*}(s, \boldsymbol{a}) \\
	& =\boldsymbol{r}_t+\gamma\boldsymbol{v}_t^{\text{Nash}}(s^{\prime}) -\boldsymbol{Q}_{*}(s, \boldsymbol{a})+\gamma(\mathbb{E}_{s^{\prime}\sim \mathcal{T}_s}\left[\boldsymbol{V}_t^{\text{soft}}\left(s^{\prime}\right)\right]-\boldsymbol{v}_t^{\text{Nash}}(s^{\prime})) \\
	& =\boldsymbol{F}_{t}^{\mathrm{Nash}}\left(s, \boldsymbol{a}\right)+\boldsymbol{c}_{t}\left(s, \boldsymbol{a}\right)~,
	\end{align*}
	where $\boldsymbol{v}_t^{\text{Nash}}(s)=[v_t^{\text{Nash},1},\ldots,v_t^{\text{Nash},n}]$ and $v_t^{i}(s)=\mathbb{E}[Q_t^{i}(s,\boldsymbol{a})]$.
	
	From Lemma D.2, $\boldsymbol{F}_{t}^{\mathrm{Nash}}$ forms a contraction mapping with norm $\|\cdot\|$ as the maximum norm on $\boldsymbol{a}$. Thus we have $\|\mathbb{E}[\boldsymbol{F}_{t}^{\mathrm{Nash}}\left(s, \boldsymbol{a}\right) | \mathcal{F}_{t}]\| \leq \gamma\left\|\boldsymbol{Q}_{t}-\boldsymbol{Q}_{*}\right\|=\gamma\left\|\boldsymbol{\Delta}_{t}\right\|$.
	Here we are left to prove that $\boldsymbol{c}_t$ converges to $0$ with $t$ increasing.
	
	From Lemma D.3 and Lemma D.4, the update operator $\mathcal{P}$ in multi-agent soft q learning algorithm is a contraction mapping and $\boldsymbol{Q}_t$ is monotonously improved. The policy, which is based on $\boldsymbol{Q}_t$ according to the algorithm, will converge. With the monotonously increased $\boldsymbol{Q}_t$, the policy will converge to a global optimum or a saddle point. Given that the global optimums and the saddle points share the same Nash value in each stage game, $\boldsymbol{V}_t$ will asymptotically converge to $\boldsymbol{v}^{\text{Nash}}$, the Condition 3 of Lemma D.1 is thus satisfied.
	
	For the Condition 4 of Lemma D.1, we firstly show that
	$$
	\begin{aligned}
	\mathbb{E}\left[\boldsymbol{F}_{t}\left(s, \boldsymbol{a}\right) | \mathcal{F}_{t}\right] & =\mathbb{E}_{s^{\prime}\sim \mathcal{T}_s}[\boldsymbol{r}_t(s, \boldsymbol{a}, s^{\prime})+\gamma\mathbb{E}_{s^{\prime}\sim \mathcal{T}_s}\left[\boldsymbol{V}_t^{\text{soft}}\left(s^{\prime}\right)\right]-\boldsymbol{Q}_{*}(s, \boldsymbol{a})
	] \\
	& =\mathbb{E}_{s^{\prime}\sim \mathcal{T}_s}[\boldsymbol{r}_t(s, \boldsymbol{a}, s^{\prime})]+\gamma\mathbb{E}_{s^{\prime}\sim \mathcal{T}_s}\left[\boldsymbol{V}_t^{\text{soft}}\left(s^{\prime}\right)\right]-\boldsymbol{Q}_{*}(s, \boldsymbol{a})~.
	\end{aligned}
	$$
	
	So the variance of $\boldsymbol{F}_t$ can be computed as
	$$
	\begin{aligned}
	\operatorname{var}\left[\boldsymbol{F}_{t}\left(s, \boldsymbol{a}\right) | \mathcal{F}_{t}\right] & =
	\mathbb{E}_{s^{\prime}}[((\boldsymbol{r}_t(s, \boldsymbol{a}, s^{\prime})+\gamma\mathbb{E}_{s^{\prime}\sim \mathcal{T}_s}\left[\boldsymbol{V}_t^{\text{soft}}\left(s^{\prime}\right)\right]-\boldsymbol{Q}_{*}(s, \boldsymbol{a})) \\
	& \quad-(\mathbb{E}_{s^{\prime}\sim \mathcal{T}_s}[\boldsymbol{r}_t(s, \boldsymbol{a}, s^{\prime})]+\gamma\mathbb{E}_{s^{\prime}\sim \mathcal{T}_s}\left[\boldsymbol{V}_t^{\text{soft}}\left(s^{\prime}\right)\right]-\boldsymbol{Q}_{*}(s, \boldsymbol{a})))^2] \\
	& = \boldsymbol{0}~.
	\end{aligned}
	$$
	
	Therefore the Condition 4 of Lemma D.1 is satisfied. Finally, we apply Lemma D.1 to show that $\boldsymbol{\Delta}_t$ converges to 0 w.p.1, i.e., $\boldsymbol{Q}_t$ converges to $\boldsymbol{Q}_{*}$ w.p.1.
\end{proof}

\begin{lemma} \label{le: cl1}
	A random process $\{\Delta_t\}$ defined as
	$$
	\Delta_{t+1}(x)=\left(1-\alpha_{t}(x)\right) \Delta_{t}(x)+\alpha_{t}(x) F_{t}(x)
	$$
	will converges to 0 w.p.1 if the following conditions are satisfied:
	\begin{enumerate}[leftmargin=30px]
		\item $0 \leq \alpha_{t}(x) \leq 1, \sum_{t} \alpha_{t}(x)=\infty, \sum_{t} \alpha_{t}^{2}(x)<\infty$.
		\item $x \in \mathcal{X}$, the set of possible states, and $|\mathcal{X}|<\infty$.
		\item  $\left\|\mathbb{E}\left[F_{t}(x) | \mathcal{F}_{t}\right]\right\|_{W} \leq \gamma\left\|\Delta_{t}\right\|_{W}+c_{t},$ where $\gamma \in[0,1)$ and $c_t$ converges to 0 w.p.1.
		\item $\operatorname{var}\left[F_{t}(x) | \mathcal{F}_{t}\right] \leq K(1+\left\|\Delta_{t}\right\|_{W}^{2})$ with constant $K>0$.
	\end{enumerate}

	$\mathcal{F}_t$ is an increasing sequence of $\sigma$-fields including the history of processes; $\alpha_t, \Delta_t, F_t \in \mathcal{F}_t$ and $\|\cdot\|_w$ is a weighted maximum norm \cite{bertsekas2012weighted}.
\end{lemma}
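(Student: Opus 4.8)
The plan is to prove this by the standard stochastic-approximation route: split $F_t$ into a conditional-mean part that contracts and a martingale-difference noise part that averages out, then recombine. Concretely, I would write $F_t(x) = G_t(x) + w_t(x)$ with $G_t(x) = \mathbb{E}[F_t(x)\mid\mathcal{F}_t]$ and $w_t(x) = F_t(x) - G_t(x)$, so that $\{w_t\}$ is a martingale-difference sequence with, by Condition~4, $\operatorname{var}[w_t(x)\mid\mathcal{F}_t]\le K(1+\|\Delta_t\|_W^2)$, while Condition~3 gives $\|G_t\|_W\le\gamma\|\Delta_t\|_W+c_t$ with $\gamma\in[0,1)$ and $c_t\to0$ w.p.1. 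Substituting this split into the recursion, $\Delta_{t+1}(x)=(1-\alpha_t(x))\Delta_t(x)+\alpha_t(x)G_t(x)+\alpha_t(x)w_t(x)$, cleanly separates the deterministic contraction from the noise.

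First I would handle the noise in isolation. Consider the companion iterate $u_{t+1}(x)=(1-\alpha_t(x))u_t(x)+\alpha_t(x)w_t(x)$ started at $u_0=0$. \emph{Provided} $\sup_t\|\Delta_t\|_W<\infty$ on the sample path, the conditional variance of $w_t$ is uniformly bounded, so the Robbins--Monro conditions $\sum_t\alpha_t(x)=\infty$ and $\sum_t\alpha_t^2(x)<\infty$ of Condition~1, together with finiteness of $\mathcal{X}$ (Condition~2), force $u_t(x)\to0$ w.p.1 for each $x$; this is the classical convergence of a step-size-weighted average of zero-mean bounded-variance noise and may be invoked from the weighted-norm stability results of \cite{bertsekas2012weighted}.

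The crux, and where I expect the real difficulty, is removing the proviso, i.e.\ establishing $\sup_t\|\Delta_t\|_W<\infty$ w.p.1. This step is genuinely delicate because the variance bound in Condition~4 scales with $\|\Delta_t\|_W^2$, creating a circular dependence: vanishing of $u_t$ needs boundedness of $\Delta_t$, yet the natural route to boundedness passes through control of the noise. I would break the circle with a stopping-time / rescaling argument: fix a large level $L$, stop the iterate the first time $\|\Delta_t\|_W$ exceeds $L$, bound the stopped process using the $\gamma$-contraction of $G_t$, the vanishing $c_t$, and the now uniformly variance-bounded noise, and show the overshoot probability vanishes as $L\to\infty$; this yields $\sup_t\|\Delta_t\|_W<\infty$ almost surely.

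Finally, with boundedness secured we have $u_t\to0$ and $c_t\to0$, and I would close the argument by an iterated-scaling (``shrinking bounds'') step. From $|\Delta_{t+1}(x)|\le(1-\alpha_t(x))|\Delta_t(x)|+\alpha_t(x)W(x)(\gamma\|\Delta_t\|_W+c_t)+\alpha_t(x)|w_t(x)|$ one shows that if $\limsup_t\|\Delta_t\|_W\le D$ then eventually $\limsup_t\|\Delta_t\|_W\le(\gamma+\varepsilon)D$ for any small $\varepsilon>0$ with $\gamma+\varepsilon<1$, since the residual contributions of $c_t$ and of the noise (carried by $u_t$) vanish. Iterating the factor $\gamma+\varepsilon<1$ drives the bound to $0$, so $\limsup_t\|\Delta_t\|_W=0$, i.e.\ $\Delta_t\to0$ w.p.1, which completes the proof. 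The main obstacle, as noted, is decoupling the noise-variance bound from the magnitude of $\Delta_t$ in the boundedness step.
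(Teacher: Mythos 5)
The paper does not actually prove this lemma: its ``proof'' consists of the single line ``See Proposition 1 in Jaakkola et al.\ (1994)'', deferring entirely to the classical stochastic-approximation literature. Your outline is, in effect, a reconstruction of that cited proof, and its architecture matches the standard argument: decompose $F_t$ into its conditional mean (which contracts by Condition~3) plus a martingale-difference noise term $w_t$; show the step-size-weighted accumulation of $w_t$ vanishes under the Robbins--Monro conditions; establish $\sup_t\|\Delta_t\|_W<\infty$ to break the circularity created by the $\|\Delta_t\|_W^2$-dependent variance bound; and finish with the shrinking-bounds iteration $\limsup\|\Delta_t\|_W\le(\gamma+\varepsilon)\limsup\|\Delta_t\|_W$. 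This is exactly the decomposition and rescaling strategy of the Jaakkola--Jordan--Singh proof (their Lemma~1, Theorem~1, and the iterated-contraction corollary), including the correct absorption of the vanishing perturbation $c_t$ into the contraction since $\gamma<1$. The one place your writeup falls short of a proof is the step you yourself flag as the crux: the boundedness of $\|\Delta_t\|_W$ is described only as a plan (``stop the iterate, bound the stopped process, show the overshoot probability vanishes''), whereas the actual mechanism in the source is to rescale the entire process back inside a ball of radius $L$ whenever it exits, observe that the rescaled process satisfies the same recursion with uniformly bounded noise variance, and then show the rescaled noise iterate converges to zero so that only finitely many rescalings occur. Without carrying that out, your argument is an accurate and faithful skeleton of the standard proof rather than a complete one --- which puts it on par with, indeed ahead of, the paper's own treatment, since the paper supplies no argument at all.
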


\begin{proof}
	
	See Proposition 1 in \cites{jaakkola1994convergence}.
\end{proof}

\begin{lemma} \label{le: cl2}
	
	Under the conditions of Proposition C.1, the update operator $\mathcal{P}^{\text{Nash}}$ of Nash Q-learning algorithm forms a contraction mapping from $\mathcal{Q}$ to $\mathcal{Q}$. The fixed point of this algorithm is the Nash Q-value of the entire game. The update rule is
	$$
	\mathcal{P}^{\mathrm{Nash}} \boldsymbol{Q}(s, \boldsymbol{a})=\mathbb{E}_{s^{\prime} \sim \mathcal{T}_s}\left[\boldsymbol{r}(s, \boldsymbol{a})+\gamma \boldsymbol{v}^{\mathrm{Nash}}\left(s^{\prime}\right)\right]~.
	$$
\end{lemma}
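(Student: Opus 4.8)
The plan is to verify the $\gamma$-contraction directly in the maximum norm $\|\cdot\|$ over state--joint-action pairs, following the Nash Q-learning analysis of \cites{hu2003nash}. I would fix two vector-valued action-value functions $\boldsymbol{Q}_1,\boldsymbol{Q}_2$ and expand the pointwise difference of their images. Because the reward term $\boldsymbol{r}(s,\boldsymbol{a})$ is independent of the action-value function it cancels, so that
\begin{equation*}
\big\|\mathcal{P}^{\mathrm{Nash}}\boldsymbol{Q}_1-\mathcal{P}^{\mathrm{Nash}}\boldsymbol{Q}_2\big\| = \gamma\max_{s,\boldsymbol{a}}\Big|\mathbb{E}_{s'\sim\mathcal{T}_s}\big[\boldsymbol{v}^{\mathrm{Nash}}(\boldsymbol{Q}_1,s')-\boldsymbol{v}^{\mathrm{Nash}}(\boldsymbol{Q}_2,s')\big]\Big|~.
\end{equation*}
Pulling the absolute value inside the expectation and bounding by the worst next state reduces the entire statement to the single key inequality that the stage-game Nash value operator is non-expansive: for every state $s'$ and agent $i$,
\begin{equation*}
\big|v^{\mathrm{Nash},i}(\boldsymbol{Q}_1,s')-v^{\mathrm{Nash},i}(\boldsymbol{Q}_2,s')\big|\le\max_{\boldsymbol{a}}\big|Q_1^i(s',\boldsymbol{a})-Q_2^i(s',\boldsymbol{a})\big|~.
\end{equation*}
Granting this, maximizing over $i$ and $s'$ and reinserting the factor $\gamma$ gives the contraction; the Banach fixed-point theorem then yields a unique fixed point, which by construction solves the Nash optimality equations and is therefore the Nash Q-value of the game.

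The core work is the non-expansion inequality, which I would establish by invoking condition 3 of \Cref{th: MASQL_convergence}: in each stage game every equilibrium is either a global optimum or a saddle point, and all of them share one common Nash value, so $v^{\mathrm{Nash},i}$ is well defined. Writing $\hat M^i=Q_1^i(s',\cdot)$ and $\bar M^i=Q_2^i(s',\cdot)$ for the induced stage-game payoff matrices, with equilibria $\hat\sigma,\bar\sigma$, I assume without loss of generality that $v^{\mathrm{Nash},i}(\boldsymbol{Q}_1,s')\ge v^{\mathrm{Nash},i}(\boldsymbol{Q}_2,s')$ and bound the gap using the defining inequality of whichever equilibrium is convenient. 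When $\bar\sigma$ is a global optimum it maximizes $\bar M^i$ over all joint strategies, so $v^{\mathrm{Nash},i}(\boldsymbol{Q}_2,s')=\bar\sigma\bar M^i\ge\hat\sigma\bar M^i$, and since $v^{\mathrm{Nash},i}(\boldsymbol{Q}_1,s')=\hat\sigma\hat M^i$ the gap is at most $\hat\sigma(\hat M^i-\bar M^i)\le\max_{\boldsymbol{a}}|\hat M^i-\bar M^i|$, because $\hat\sigma$ is a probability distribution. The both-saddle-point case is analogous: chaining the best-response inequality $\bar\sigma\bar M^i\ge(\hat\sigma^i,\bar\sigma^{-i})\bar M^i$ with the opponent-deviation inequality $\hat\sigma\hat M^i\le(\hat\sigma^i,\bar\sigma^{-i})\hat M^i$ sandwiches the gap between quantities of the form $\sigma(\hat M^i-\bar M^i)$, each bounded by the same max norm.

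The step I expect to be the main obstacle is the \emph{mixed} configuration in which the two induced stage games have different equilibrium types, specifically when the game with the larger value is a global optimum while the other is a saddle point; there the global-optimum characterization gives only a lower bound on the larger value while the saddle-point characterization controls a minimax quantity that need not meet it, so the clean cancellation above does not compose and the estimate fails to close on its own. This is precisely where the structural hypothesis that all equilibria of a stage game share a single Nash value becomes indispensable, and I would follow the case analysis of \cites{hu2003nash} to resolve it. Assembling the cases establishes the non-expansion inequality, and combined with the reduction of the first paragraph the contraction follows, together with the identification of the fixed point as the Nash Q-value, completing the proof.
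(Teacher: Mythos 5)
Your proposal is correct and follows essentially the same route as the paper, which for this lemma simply cites Theorem 17 of Hu and Wellman's Nash Q-learning paper: your reduction to the non-expansion of the stage-game Nash value operator, with the global-optimum case closed by $\hat\sigma(\hat M^i-\bar M^i)\le\max_{\boldsymbol{a}}|\hat M^i-\bar M^i|$ and the saddle-point case closed by chaining the two equilibrium inequalities through $(\hat\sigma^i,\bar\sigma^{-i})$, is precisely the proof of that cited theorem. The mixed configuration you flag as the obstacle is indeed the known weak point of that argument, but note that in the reference it is excluded by assumption (the condition is read as requiring all stage games encountered during learning to be uniformly of one type) rather than resolved by the shared-Nash-value hypothesis, so deferring to their case analysis there is exactly what the paper itself does.
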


\begin{proof}
	
	See Theorem 17 in \cites{hu2003nash}.
\end{proof}

\begin{lemma} \label{le: cl3}
	Let $\mathcal{P}$ be the operator that is used in multi agent soft q learning:
	$$
	\boldsymbol{Q}_{t+1}=\boldsymbol{r}_{t}+\gamma \mathbb{E}_{s^{\prime}\sim \mathcal{T}_t}[V_{\text{soft}}(s^{\prime})]=\boldsymbol{r}_{t}+\gamma \mathbb{E}_{s^{\prime}\sim \mathcal{T}_t}\Big[\alpha\log\int\exp(\boldsymbol{Q}_t/\alpha)d\boldsymbol{a}\Big]~.
	$$
	Then $\mathcal{P}$ forms a contraction mapping: $\|\mathcal{P}(\boldsymbol{Q}_1)-\mathcal{P}(\boldsymbol{Q}_2)\|_{\infty} \leq \gamma\|\boldsymbol{Q}_1-\boldsymbol{Q}_2\|_{\infty}$, where the norm $\|\cdot\|_{\infty}$ as $\|\boldsymbol{Q}_1-\boldsymbol{Q}_2\|_{\infty}=\max_{s, \boldsymbol{a}}|\boldsymbol{Q}_1-\boldsymbol{Q}_2|$.
\end{lemma}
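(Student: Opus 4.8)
The plan is to reduce the statement to the well-known non-expansiveness of the soft (log-sum-exp) value operator. First I would note that the reward term $\boldsymbol{r}_t$ appears identically in $\mathcal{P}(\boldsymbol{Q}_1)$ and $\mathcal{P}(\boldsymbol{Q}_2)$ and therefore cancels in the difference. Writing the soft value as $V(\boldsymbol{Q})(s') = \alpha\log\int\exp(\boldsymbol{Q}(s',\boldsymbol{a})/\alpha)\,d\boldsymbol{a}$, this leaves
\begin{equation*}
\mathcal{P}(\boldsymbol{Q}_1)(s,\boldsymbol{a}) - \mathcal{P}(\boldsymbol{Q}_2)(s,\boldsymbol{a}) = \gamma\,\mathbb{E}_{s'\sim\mathcal{T}_t(\cdot|s,\boldsymbol{a})}\big[V(\boldsymbol{Q}_1)(s') - V(\boldsymbol{Q}_2)(s')\big]~,
\end{equation*}
so it suffices to bound $|V(\boldsymbol{Q}_1)(s') - V(\boldsymbol{Q}_2)(s')|$ pointwise in $s'$ by $\|\boldsymbol{Q}_1-\boldsymbol{Q}_2\|_\infty$ and then propagate that bound through the expectation and the outer maximum.

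The key step, and the only place that requires more than bookkeeping, is the non-expansion estimate for $V$. Set $\delta = \|\boldsymbol{Q}_1-\boldsymbol{Q}_2\|_\infty$, so that $\boldsymbol{Q}_2(s',\boldsymbol{a}) - \delta \le \boldsymbol{Q}_1(s',\boldsymbol{a}) \le \boldsymbol{Q}_2(s',\boldsymbol{a}) + \delta$ for every $(s',\boldsymbol{a})$. Applying the monotone maps $\exp(\cdot/\alpha)$, the integral over $\boldsymbol{a}$, and $\alpha\log(\cdot)$ in turn (all monotone because $\alpha>0$), and then factoring the constant $\exp(\pm\delta/\alpha)$ out of the integral, I would obtain $V(\boldsymbol{Q}_2)(s') - \delta \le V(\boldsymbol{Q}_1)(s') \le V(\boldsymbol{Q}_2)(s') + \delta$, i.e. $|V(\boldsymbol{Q}_1)(s') - V(\boldsymbol{Q}_2)(s')| \le \delta$. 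This factoring of $\exp(\pm\delta/\alpha)$ through the $\log\int\exp$ is the crux of the argument, and it is exactly where the smoothness of the soft operator (as opposed to a hard $\max$) is used.

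Finally I would conclude by the triangle inequality for expectations: since the pointwise bound holds for every $s'$, one has $|\mathbb{E}_{s'}[V(\boldsymbol{Q}_1)(s')-V(\boldsymbol{Q}_2)(s')]| \le \mathbb{E}_{s'}|V(\boldsymbol{Q}_1)(s')-V(\boldsymbol{Q}_2)(s')| \le \delta$ uniformly in the conditioning pair $(s,\boldsymbol{a})$. Taking the maximum over $(s,\boldsymbol{a})$ then gives $\|\mathcal{P}(\boldsymbol{Q}_1)-\mathcal{P}(\boldsymbol{Q}_2)\|_\infty \le \gamma\delta = \gamma\|\boldsymbol{Q}_1-\boldsymbol{Q}_2\|_\infty$, as required. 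I do not expect a genuine obstacle here; the main thing to be careful about is that the soft value is well defined (the integral $\int\exp(\boldsymbol{Q}/\alpha)\,d\boldsymbol{a}$ is finite, which holds under the standing boundedness assumption on $\boldsymbol{Q}$ together with a compact action space), so that the monotonicity manipulations and the cancellation are all legitimate.
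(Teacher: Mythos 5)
Your proposal is correct and follows essentially the same route as the paper's proof: both arguments hinge on the log-sum-exp non-expansion obtained by bounding $\boldsymbol{Q}_1 \le \boldsymbol{Q}_2 + \boldsymbol{\xi}$ with $\boldsymbol{\xi}=\|\boldsymbol{Q}_1-\boldsymbol{Q}_2\|_\infty$ and factoring $\exp(\boldsymbol{\xi}/\alpha)$ out of the integral, then passing the bound through the expectation over $s'$ and the factor $\gamma$. The only cosmetic difference is that you spell out both directions of the sandwich explicitly while the paper writes one side and relies on symmetry.
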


\begin{proof}
	
	Suppose $\boldsymbol{\xi}=\|\boldsymbol{Q}_1-\boldsymbol{Q}_2\|_{\infty}$, then
	\begin{align*}
	\alpha\log\int\exp(\boldsymbol{Q}_1/\alpha)d\boldsymbol{a} & \leq \alpha\log\int\exp(\boldsymbol{Q}_2/\alpha+\boldsymbol{\xi}/\alpha)d\boldsymbol{a} \\
	& = \alpha\log \Big(\exp(\boldsymbol{\xi}/\alpha)\int\exp(\boldsymbol{Q}_2/\alpha)d\boldsymbol{a}\Big) \\
	& = \boldsymbol{\xi} + \alpha\log\int\exp(\boldsymbol{Q}_2/\alpha)d\boldsymbol{a}~.
	\end{align*}
	
	Thus, we have
	\begin{align*}
	\|\mathcal{P}(\boldsymbol{Q}_1)-\mathcal{P}(\boldsymbol{Q}_2)\|_{\infty} & = \|\gamma\mathbb{E}_{s^{\prime}\sim \mathcal{T}_s}[\alpha\log\int\exp(\boldsymbol{Q}_1/\alpha)d\boldsymbol{a}-\alpha\log\int\exp(\boldsymbol{Q}_2/\alpha)d\boldsymbol{a}]\|_{\infty} \\
	& \leq \gamma\mathbb{E}_{s^{\prime}\sim \mathcal{T}_s}[\boldsymbol{\xi}] \\
	& = \gamma\|\boldsymbol{Q}_1-\boldsymbol{Q}_2\|_{\infty}~,
	\end{align*}
	where $\mathcal{P}$ is a contraction mapping on Q values.
\end{proof}

\begin{lemma} \label{le: cl4}
	
	For a certain agent, given a policy $\pi$, define a new policy $\tilde{\pi}$ as
	$$
	\tilde{\pi}(\cdot|s) \propto \exp(Q_{\text{soft}}^{\pi}(s, \cdot)),\qquad \forall{s}~.
	$$
	
	Assume that throughout our computation, $Q$ is bounded and $\int\exp(Q)d\boldsymbol{a}$ is bounded for any $s$ (for both $\pi$ and $\tilde{\pi}$). Then $Q^{\tilde{\pi}}_{\text{soft}} \geq Q^{\pi}_{\text{soft}} \quad \forall{s, \boldsymbol{a}}$.
	
\end{lemma}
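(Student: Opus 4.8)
The plan is to establish this as the standard \emph{soft policy improvement} step, adapting the argument of \cites{haarnoja2018improved} to the notation used here. Throughout, write the soft value under a policy $\rho$ as $V_{\text{soft}}^{\rho}(s) = \mathbb{E}_{\boldsymbol{a}\sim\rho}[Q_{\text{soft}}^{\rho}(s,\boldsymbol{a}) - \log\rho(\boldsymbol{a}|s)]$, so that the soft Bellman relation $Q_{\text{soft}}^{\rho}(s,\boldsymbol{a}) = r(s,\boldsymbol{a}) + \gamma\mathbb{E}_{s'}[V_{\text{soft}}^{\rho}(s')]$ holds. The heart of the proof is a single variational inequality: since $\tilde{\pi}(\cdot|s)\propto\exp(Q_{\text{soft}}^{\pi}(s,\cdot))$, we have $\log\tilde{\pi}(\boldsymbol{a}|s) = Q_{\text{soft}}^{\pi}(s,\boldsymbol{a}) - \log Z^{\pi}(s)$ with $Z^{\pi}(s)=\int\exp(Q_{\text{soft}}^{\pi}(s,\boldsymbol{a}))\,d\boldsymbol{a}$, and hence for \emph{any} policy $\rho$,
\begin{equation*}
\mathbb{E}_{\boldsymbol{a}\sim\rho}\big[Q_{\text{soft}}^{\pi}(s,\boldsymbol{a}) - \log\rho(\boldsymbol{a}|s)\big] = \log Z^{\pi}(s) - D_{\mathrm{KL}}\big(\rho(\cdot|s)\,\|\,\tilde{\pi}(\cdot|s)\big)~.
\end{equation*}
Because the KL term is nonnegative and vanishes at $\rho=\tilde{\pi}$, the choice $\rho=\tilde{\pi}$ maximizes the left-hand side; comparing it against $\rho=\pi$ yields the key one-step inequality
\begin{equation*}
\mathbb{E}_{\boldsymbol{a}\sim\tilde{\pi}}\big[Q_{\text{soft}}^{\pi}(s,\boldsymbol{a}) - \log\tilde{\pi}(\boldsymbol{a}|s)\big] \;\geq\; \mathbb{E}_{\boldsymbol{a}\sim\pi}\big[Q_{\text{soft}}^{\pi}(s,\boldsymbol{a}) - \log\pi(\boldsymbol{a}|s)\big] = V_{\text{soft}}^{\pi}(s)~.
\end{equation*}
The boundedness hypotheses on $Q$ and on $\int\exp(Q)\,d\boldsymbol{a}$ guarantee that $Z^{\pi}(s)$ and the entropy integrals are finite, so this step is well defined.

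Next I would propagate this one-step improvement through the soft Bellman backup. Starting from $Q_{\text{soft}}^{\pi}(s,\boldsymbol{a}) = r(s,\boldsymbol{a}) + \gamma\mathbb{E}_{s'}[V_{\text{soft}}^{\pi}(s')]$ and substituting the one-step inequality for $V_{\text{soft}}^{\pi}(s')$ gives
\begin{equation*}
Q_{\text{soft}}^{\pi}(s,\boldsymbol{a}) \;\leq\; r(s,\boldsymbol{a}) + \gamma\,\mathbb{E}_{s'}\,\mathbb{E}_{\boldsymbol{a}'\sim\tilde{\pi}}\big[Q_{\text{soft}}^{\pi}(s',\boldsymbol{a}') - \log\tilde{\pi}(\boldsymbol{a}'|s')\big]~.
\end{equation*}
Applying the soft Bellman relation again to the inner $Q_{\text{soft}}^{\pi}(s',\boldsymbol{a}')$ and then the same inequality, and iterating, peels off one $r-\log\tilde{\pi}$ term per step while pushing the residual $Q_{\text{soft}}^{\pi}$ factor $n$ steps into the future under $\tilde{\pi}$-rollouts. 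After $n$ iterations this produces the partial sum of exactly the quantity defining $Q_{\text{soft}}^{\tilde{\pi}}(s,\boldsymbol{a})$, namely $\mathbb{E}_{\tilde{\pi}}[\,r(s_0,\boldsymbol{a}_0) + \sum_{t=1}^{n-1}\gamma^{t}(r(s_t,\boldsymbol{a}_t) - \log\tilde{\pi}(\boldsymbol{a}_t|s_t))\,]$, plus a residual $\gamma^{n}\mathbb{E}[Q_{\text{soft}}^{\pi}(s_n,\boldsymbol{a}_n)]$.

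Finally I would take $n\to\infty$. The boundedness of $Q$ forces $\gamma^{n}\mathbb{E}[Q_{\text{soft}}^{\pi}(s_n,\boldsymbol{a}_n)]\to 0$ since $\gamma\in[0,1)$, and the same bounds make the telescoped series absolutely convergent, so the limit equals $Q_{\text{soft}}^{\tilde{\pi}}(s,\boldsymbol{a})$. Chaining the inequalities through the limit gives $Q_{\text{soft}}^{\pi}(s,\boldsymbol{a})\leq Q_{\text{soft}}^{\tilde{\pi}}(s,\boldsymbol{a})$ for all $(s,\boldsymbol{a})$, as claimed. I expect the main obstacle to be precisely this limiting step: justifying the exchange of the limit with the expectation over infinite $\tilde{\pi}$-rollouts, and confirming that the residual truly vanishes, requires the boundedness assumptions to control both the discounted reward tail and the accumulated entropy terms; the algebra of the one-step inequality and the Bellman backup is otherwise routine.
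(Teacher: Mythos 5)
Your proposal is correct and follows essentially the same route as the paper's proof: establish the one-step improvement inequality $\mathcal{H}(\pi(\cdot|s)) + \mathbb{E}_{\boldsymbol{a}\sim\pi}[Q_{\text{soft}}^{\pi}(s,\boldsymbol{a})] \leq \mathcal{H}(\tilde{\pi}(\cdot|s)) + \mathbb{E}_{\boldsymbol{a}\sim\tilde{\pi}}[Q_{\text{soft}}^{\pi}(s,\boldsymbol{a})]$ and then unroll it through the soft Bellman backup to telescope into $Q^{\tilde{\pi}}_{\text{soft}}$. Your KL-divergence derivation of that one-step inequality and your explicit treatment of the vanishing residual $\gamma^{n}\mathbb{E}[Q_{\text{soft}}^{\pi}(s_n,\boldsymbol{a}_n)]$ actually fill in two steps that the paper merely asserts as ``the observation'' and the trailing $\vdots$.
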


\begin{proof}
	The proof of this lemma is based on the observation that when update $\pi$ to $\tilde{\pi}$:
	$$
	\mathcal{H}(\pi(\cdot | s))+\mathbb{E}_{\boldsymbol{a} \sim \pi}\left[Q_{\text{soft}}^{\pi}(s, \boldsymbol{a})\right] \leq \mathcal{H}(\tilde{\pi}(\cdot | s))+\mathbb{E}_{\boldsymbol{a} \sim \tilde{\pi}}\left[Q_{\text{soft}}^{\pi}(s, \boldsymbol{a})\right]~.
	$$
	
	We can show that:
	\begin{align*}
	Q_{\mathrm{soft}}^{\pi}(\mathbf{s}, \mathbf{a}) & =\mathbb{E}_{\mathbf{s}_{1}}\left[r_{0}+\gamma\left(\mathcal{H}\left(\pi\left(\cdot | \mathbf{s}_{1}\right)\right)+\mathbb{E}_{\mathbf{a}_{1} \sim \pi}\left[Q_{\mathrm{soft}}^{\pi}\left(\mathbf{s}_{1}, \mathbf{a}_{1}\right)\right]\right)\right] \\
	& \leq \mathbb{E}_{\mathbf{s}_{1}}\left[r_{0}+\gamma\left(\mathcal{H}\left(\tilde{\pi}\left(\cdot | \mathbf{s}_{1}\right)\right)+\mathbb{E}_{\mathbf{a}_{1} \sim \tilde{\pi}}\left[Q_{\mathrm{soft}}^{\pi}\left(\mathbf{s}_{1}, \mathbf{a}_{1}\right)\right]\right)\right] \\
	& = \mathbb{E}_{\mathbf{s}_{1}}\left[r_{0}+\gamma\left(\mathcal{H}\left(\pi\left(\cdot | \mathbf{s}_{1}\right)\right)+r_1 \right)\right]+\gamma^2\mathbb{E}_{s_2}\left[\mathcal{H}(\pi(\cdot|s_2))+\mathbb{E}_{\mathbf{a}_{2} \sim \pi}\left[Q_{\mathrm{soft}}^{\pi}\left(\mathbf{s}_{2}, \mathbf{a}_{2}\right)\right]\right] \\
	& \leq \mathbb{E}_{\mathbf{s}_{1}}\left[r_{0}+\gamma\left(\mathcal{H}\left(\tilde{\pi}\left(\cdot | \mathbf{s}_{1}\right)\right)+r_1 \right)\right]+\gamma^2\mathbb{E}_{s_2}\left[\mathcal{H}(\tilde{\pi}(\cdot|s_2))+\mathbb{E}_{\mathbf{a}_{2} \sim \tilde{\pi}}\left[Q_{\mathrm{soft}}^{\tilde{\pi}}\left(\mathbf{s}_{2}, \mathbf{a}_{2}\right)\right]\right] \\
	& = \mathbb{E}_{s_1, a_2\sim\tilde{\pi},s_2}[r_{0}+\gamma\left(\mathcal{H}\left(\tilde{\pi}\left(\cdot | \mathbf{s}_{1}\right)\right)+r_1 \right)+\gamma^2(\mathcal{H}(\tilde{\pi}(\cdot|s_2))+r_2)] \\
	& \quad +\gamma^3\mathbb{E}_{\mathbf{a}_{3} \sim \tilde{\pi}}\left[\mathcal{H}(\tilde{\pi}(\cdot|s_3))+Q_{\mathrm{soft}}^{\tilde{\pi}}\left(\mathbf{s}_{3}, \mathbf{a}_{3}\right)\right] \\
	& \quad \vdots \\
	& \leq \mathbb{E}_{\tau\sim\tilde{\pi}}\Big[r_0+\sum_{t=1}^{\infty}\gamma^t(\mathcal{H}(\tilde{\pi}(\cdot|s_t))+r_t)\Big] \\
	& = Q^{\tilde{\pi}}_{\text{soft}}~.
	\end{align*}

	Thus for a certain agent, its Q value function increases monotonously.
\end{proof}

\begin{lemma} \label{le: cl5}
	
	Let $\mathcal{X}$ be an arbitrary set and assume that $\mathcal{B}$ is the space of bounded functions over $\mathcal{X}$. Denote $\mathcal{P}$ as an operator, i.e., $\mathcal{P}: \mathcal{B(X)}\to\mathcal{B(X)}$. Let $Q_{*}$ be a fixed point of $\mathcal{P}$ and let ${\mathcal{P}_0, \mathcal{P}_1, \ldots}$ approximate $\mathcal{P}$ at $Q_{*}$. Define $\mathcal{F}_t$ as the $\sigma$-field including the historical trajectories. Let $Q_0 \in \mathcal{F}_0$, and define $Q_{t+1} = \mathcal{P}(Q_t, Q_t)$. If there exist random functions $0 \leq F_{t}(x) \leq 1$ and $0 \leq G_{t}(x) \leq 1$ satisfying the conditions below w.p.1, then $Q_t$ converges to $Q_*$ w.p.1:
	\begin{enumerate}[leftmargin=15px]
		\item For all $x \in \mathcal{X}, Q_1, Q_2 \in \mathcal{F}_1$,
		$$
		\left|T_{t}\left(Q_1, Q^{*}\right)(x)-T_{t}(Q_2,  Q^{*})(x)\right| \leq G_t(x)|Q_1(x)-Q_2(x)|~;
		$$
		\item For all $x \in \mathcal{X}, Q_1, Q_2 \in \mathcal{F}_t$,
		$$
		\left|T_{t}\left(Q_1, Q^{*}\right)(x)-T_{t}(Q_1, Q_2)(x)\right| \leq F_{t}(x)\left(\left\|Q^{*}-Q_2\right\|_{\infty}+\lambda_{t}\right)~,
		$$
		where $\lambda_t\to 0$ w.p.1 as $t\to\infty$.
		\item For all $k>0, \Pi_{t=k}^{n} G_{t}(x)$ converges to $0$ uniformly in $x$ as $n\to\infty$.
		\item There exists $0 \leq \gamma \leq 1$ such that for all $x \in \mathcal{X}$ and large enough $t$, we have
		$$
		F_{t}(x) \leq \gamma\left(1-G_{t}(x)\right)~.
		$$
	\end{enumerate}
\end{lemma}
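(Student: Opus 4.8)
The statement is the generalized stochastic-approximation convergence lemma (in the style of Szepesv\'ari and Littman), so the plan is to run a two-argument contraction argument that separates the dependence on the current iterate (controlled by $G_t$) from the dependence on the target (controlled by $F_t$), and then close the recursion with a telescoping product. First I would set $\delta_t(x)=|Q_t(x)-Q_*(x)|$ and $\Delta_t=\|Q_t-Q_*\|_\infty$, and rewrite the update $Q_{t+1}=T_t(Q_t,Q_t)$ (written $\mathcal{P}_t$ in the lemma body) by inserting the intermediate term $T_t(Q_t,Q_*)$ and the fixed-point value $T_t(Q_*,Q_*)=Q_*$. This last equality is exactly what ``$Q_*$ is a fixed point'' together with ``$\{\mathcal{P}_t\}$ approximates $\mathcal{P}$ at $Q_*$'' buys us, up to a vanishing residual. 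Conditions~1 and~2 then give the pointwise one-step bound
\begin{equation*}
\delta_{t+1}(x)\le G_t(x)\,\delta_t(x)+F_t(x)\big(\Delta_t+\lambda_t\big).
\end{equation*}

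The naive move of bounding $\delta_t(x)\le\Delta_t$ and invoking Condition~4, $F_t\le\gamma(1-G_t)$, only yields a per-step factor $G_t(x)+\gamma(1-G_t(x))$, which can sit arbitrarily close to $1$; a single-step contraction therefore does not exist, and this is the main obstacle. The way around it is to unroll the inequality from a base index $k$ to $t$ \emph{without} prematurely collapsing the $G_t$ factors,
\begin{equation*}
\delta_{t+1}(x)\le\Big(\textstyle\prod_{s=k}^{t}G_s(x)\Big)\delta_k(x)+\sum_{s=k}^{t}\Big(\textstyle\prod_{u=s+1}^{t}G_u(x)\Big)F_s(x)\big(\Delta_s+\lambda_s\big),
\end{equation*}
and then to apply Condition~4 together with the telescoping identity $\sum_{s=k}^{t}\big(\prod_{u=s+1}^{t}G_u\big)(1-G_s)=1-\prod_{s=k}^{t}G_s\le 1$. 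Writing $B_k=\gamma\sup_{s\ge k}(\Delta_s+\lambda_s)$, this collapses the sum and gives
\begin{equation*}
\delta_{t+1}(x)\le\Big(\textstyle\prod_{s=k}^{t}G_s(x)\Big)\delta_k(x)+B_k\Big(1-\textstyle\prod_{s=k}^{t}G_s(x)\Big)\le\Big(\textstyle\prod_{s=k}^{t}G_s(x)\Big)\delta_k(x)+B_k.
\end{equation*}

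I would finish with a $\limsup$ argument. A preliminary step establishes that $\Delta_t$ stays bounded almost surely (the per-step factor is at most $1$ and $\lambda_t\to 0$), so $L:=\limsup_t\Delta_t<\infty$ and each $B_k$ is finite. Fixing $k$ and letting $t\to\infty$, Condition~3 forces $\prod_{s=k}^{t}G_s(x)\to 0$ uniformly in $x$, hence $\limsup_t\Delta_{t+1}\le B_k=\gamma\sup_{s\ge k}(\Delta_s+\lambda_s)$. Letting $k\to\infty$ and using $\lambda_t\to 0$ gives $L\le\gamma L$, and since $\gamma<1$ (the discount factor; the boundary $\gamma=1$ is excluded in the applications of this lemma) we conclude $L=0$, i.e.\ $Q_t\to Q_*$ w.p.1. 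The two places needing care are the a.s.\ boundedness of $\Delta_t$ that makes $L$ finite, and the justification that the residual $T_t(Q_*,Q_*)-Q_*$ vanishes (to be folded into $\lambda_t$ if it is nonzero); the genuinely load-bearing ingredient is the telescoping-product step, which is precisely what lets Condition~3 defeat the absence of a one-step contraction.
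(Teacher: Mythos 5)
The paper does not actually prove this lemma: its ``proof'' is a one-line citation to Proposition~3 of Szepesv\'ari and Littman (1999). Your reconstruction is, in substance, the standard argument from that reference --- the one-step bound $\delta_{t+1}(x)\le G_t(x)\delta_t(x)+F_t(x)(\Delta_t+\lambda_t)$ from Conditions~1--2, the unrolled recursion with the telescoping identity $\sum_{s=k}^{t}\bigl(\prod_{u=s+1}^{t}G_u\bigr)(1-G_s)=1-\prod_{s=k}^{t}G_s$, Condition~3 to kill the homogeneous term, and the $\limsup$ bootstrap $L\le\gamma L$ --- and it is correct in structure. Two points deserve slightly more care than you give them. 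First, the a.s.\ boundedness of $\Delta_t$ does not follow merely from ``the per-step factor is at most $1$ and $\lambda_t\to 0$'': that only yields $\Delta_{t+1}\le\Delta_t+\gamma\lambda_t$, which is insufficient if $\lambda_t$ is not summable. The correct route uses $\gamma<1$: for $t$ large enough that $\lambda_t\le\epsilon$, one has $\Delta_{t+1}\le\max\bigl(\Delta_t,\gamma(\Delta_t+\epsilon)\bigr)$, whence by induction $\Delta_t\le\max\bigl(\Delta_k,\gamma\epsilon/(1-\gamma)\bigr)$ for all $t\ge k$. Second, ``$\{\mathcal{P}_t\}$ approximates $\mathcal{P}$ at $Q_*$'' in the cited framework is defined via the auxiliary sequence $U_{t+1}=T_t(U_t,Q_*)$ converging to $\mathcal{P}Q_*=Q_*$, not via $T_t(Q_*,Q_*)\to Q_*$ pointwise; the cleaner bookkeeping is therefore to run your recursion on $\delta_t(x)=|Q_t(x)-U_t(x)|$ and add $\|U_t-Q_*\|_\infty\to 0$ at the end, rather than folding a residual $T_t(Q_*,Q_*)-Q_*$ into $\lambda_t$. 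You also correctly note that the lemma's stated range $0\le\gamma\le 1$ must exclude $\gamma=1$ for the conclusion $L\le\gamma L\Rightarrow L=0$; this is a typo inherited in the paper's restatement of the cited result.
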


\begin{proof}
	See Proposition 3 in \cites{szepesvari1999unified}.
\end{proof}

\begin{lemma} \label{le: cl6}
	For any opponent model $\boldsymbol{\phi}$, operator $\oplus_t^{\boldsymbol{\phi}}$ forms a contraction mapping with the norm $\|\cdot\|_\infty$ as defined in Lemma D.3.
	\begin{align*}
	\|\oplus_t^{\boldsymbol{\phi}}\boldsymbol{f}(\cdot)-\oplus_t^{\boldsymbol{\phi}}\boldsymbol{g}(\cdot)\|_{\infty} & = \Big\|\mathbb{E}_{s^{\prime}\sim \mathcal{T}_t}\Big[\alpha\log\int_{\boldsymbol{A}^{-i}}\boldsymbol{\phi}(\boldsymbol{a}^{-i}|s^{\prime})\int_{\boldsymbol{A}^i}\exp(\boldsymbol{f}(\cdot)/\alpha)d\boldsymbol{a}^{i}d\boldsymbol{a}^{-i} \\
	& \quad\quad\quad\quad\quad -\alpha\log\int_{\boldsymbol{A}^{-i}}\boldsymbol{\phi}(\boldsymbol{a}^{-i}|s^{\prime})\int_{\boldsymbol{A}^i}\exp(\boldsymbol{g}(\cdot)/\alpha)d\boldsymbol{a}^{i}d\boldsymbol{a}^{-i}\Big]\Big\|_{\infty} \\
	& \leq \|\boldsymbol{f}-\boldsymbol{g}\|_{\infty}~.
	\end{align*}
	
\end{lemma}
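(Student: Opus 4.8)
The plan is to prove this non-expansion bound by the same log-sum-exp argument used for the contraction operator in \Cref{le: cl3}, the only new ingredient being an extra $\boldsymbol{\phi}$-weighted integration layer that the key constant factor passes through untouched. I would work componentwise throughout, so that the vector-valued statement reduces to a scalar inequality for each coordinate, and the final $\|\cdot\|_\infty$ over $(s,\boldsymbol{a})$ is taken only at the very end.

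First I would set $\boldsymbol{\xi} = \|\boldsymbol{f}-\boldsymbol{g}\|_\infty$, which gives the pointwise bound $\boldsymbol{f}(\cdot) \le \boldsymbol{g}(\cdot) + \boldsymbol{\xi}$. Substituting this into the innermost integral and using monotonicity of $\exp$, the inner integral over $\boldsymbol{A}^i$ obeys $\int_{\boldsymbol{A}^i}\exp(\boldsymbol{f}(\cdot)/\alpha)\,d\boldsymbol{a}^i \le \exp(\boldsymbol{\xi}/\alpha)\int_{\boldsymbol{A}^i}\exp(\boldsymbol{g}(\cdot)/\alpha)\,d\boldsymbol{a}^i$, exactly as in \Cref{le: cl3}. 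The crucial observation is that $\exp(\boldsymbol{\xi}/\alpha)$ is a constant independent of both $\boldsymbol{a}^i$ and $\boldsymbol{a}^{-i}$, so it factors cleanly out of the outer integral $\int_{\boldsymbol{A}^{-i}}\boldsymbol{\phi}(\boldsymbol{a}^{-i}|s')(\cdots)\,d\boldsymbol{a}^{-i}$, regardless of the weighting by $\boldsymbol{\phi}$.

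Next I would apply $\alpha\log(\cdot)$: since $\log$ is increasing and $\alpha\log\bigl(\exp(\boldsymbol{\xi}/\alpha)\cdot X\bigr) = \boldsymbol{\xi} + \alpha\log X$, the constant simply adds $\boldsymbol{\xi}$. Taking the expectation over $s'\sim\mathcal{T}_t$ then leaves $\boldsymbol{\xi}$ unchanged because it does not depend on $s'$, yielding $\oplus_t^{\boldsymbol{\phi}}\boldsymbol{f}(\cdot) \le \boldsymbol{\xi} + \oplus_t^{\boldsymbol{\phi}}\boldsymbol{g}(\cdot)$. Swapping the roles of $\boldsymbol{f}$ and $\boldsymbol{g}$ gives the reverse inequality, and combining the two produces $|\oplus_t^{\boldsymbol{\phi}}\boldsymbol{f}(\cdot) - \oplus_t^{\boldsymbol{\phi}}\boldsymbol{g}(\cdot)| \le \boldsymbol{\xi}$ pointwise; taking the maximum over $(s,\boldsymbol{a})$ gives the claimed $\|\oplus_t^{\boldsymbol{\phi}}\boldsymbol{f} - \oplus_t^{\boldsymbol{\phi}}\boldsymbol{g}\|_\infty \le \|\boldsymbol{f}-\boldsymbol{g}\|_\infty$.

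I expect no serious obstacle here — it is a routine softmax / log-sum-exp non-expansion — so the main thing to get right is bookkeeping: confirming that normalization of $\boldsymbol{\phi}(\cdot|s')$ is not even required for the bound, since a positive constant factors through any nonnegative weighting, and verifying that the operator as written is $1$-Lipschitz rather than a strict $\gamma$-contraction, consistent with its use as a \emph{non-expansion} operator in the proof of \Cref{th: MABRPO_convergence}. The discount factor $\gamma$ that turns the full Bellman-style update into a genuine contraction lives outside $\oplus_t^{\boldsymbol{\phi}}$ and is supplied separately within \Cref{th: MABRPO_convergence}.
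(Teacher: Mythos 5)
Your proposal is correct and follows essentially the same route as the paper's own proof: set $\boldsymbol{\xi}=\|\boldsymbol{f}-\boldsymbol{g}\|_\infty$, pull $\exp(\boldsymbol{\xi}/\alpha)$ through the $\boldsymbol{\phi}$-weighted integrals, and let $\alpha\log(\cdot)$ convert it back into an additive $\boldsymbol{\xi}$, exactly as in \Cref{le: cl3}. Your added bookkeeping (the explicit symmetrization in $\boldsymbol{f},\boldsymbol{g}$, and the remark that the operator is a non-expansion with the factor $\gamma$ supplied externally in \Cref{th: MABRPO_convergence}) is accurate and, if anything, slightly more careful than the paper's write-up.
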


\begin{proof}
	This lemma can be proved using the same method that is used in Lemma D.3.
	Let $\boldsymbol{\xi}=\|\boldsymbol{f}-\boldsymbol{g}\|_{\infty}$, we have
	\begin{align*}
	\alpha\log\int_{\boldsymbol{A}^{-i}} & \boldsymbol{\phi}(\boldsymbol{a}^{-i}|s^{\prime})\int_{\boldsymbol{A}^i}\exp(\boldsymbol{f}(\cdot)/\alpha)d\boldsymbol{a}^{i}d\boldsymbol{a}^{-i} \\
	& \leq \alpha\log\int_{\boldsymbol{A}^{-i}}\boldsymbol{\phi}(\boldsymbol{a}^{-i}|s^{\prime})\int_{\boldsymbol{A}^i}\exp(\boldsymbol{g}(\cdot)/\alpha+\boldsymbol{\xi}/\alpha)d\boldsymbol{a}^{i}d\boldsymbol{a}^{-i} \\
	& = \alpha\log\int_{\boldsymbol{A}^{-i}}\boldsymbol{\phi}(\boldsymbol{a}^{-i}|s^{\prime})\int_{\boldsymbol{A}^i}\exp(\boldsymbol{g}(\cdot)/\alpha)d\boldsymbol{a}^{i}d\boldsymbol{a}^{-i} + \boldsymbol{\xi}~.
	\end{align*}
	Then we can show that
	$$
	\begin{aligned}
	\|\oplus_t^{\boldsymbol{\phi}}\boldsymbol{f}(\cdot)-\oplus_t^{\boldsymbol{\phi}}\boldsymbol{g}(\cdot)\|_{\infty}
	\leq \|\mathbb{E}_{s^{\prime}\sim \mathcal{T}_t}[\boldsymbol{\xi}]\|_{\infty}
	= \|\boldsymbol{f}-\boldsymbol{g}\|_{\infty}
	\end{aligned}~.
	$$
\end{proof}


\section{The \methodname{} Method Building Blocks}\label{app:building-blocks}
This section introduces the details of key components of our solution as briefly mentioned in Section~\ref{subs: method}, including the methods of the dynamics model learning, and the opponent modeling. 
The implementation of these key components is based on previous work, which serves as the preliminaries.

\minisection{Learning the Dynamics Model} A bootstrap ensemble of probabilistic dynamics models $\{\hat{\mathcal{T}}_{\theta^{1}}, \ldots, \hat{\mathcal{T}}_{\theta^{B}}\}$ 
is used to predict the environment dynamics in \cites{chua2018deep}. In detail, each $\theta^{b}$-parameterized dynamics model outputs a Gaussian distribution of the next state as $\hat{\mathcal{T}}_{\theta^{b}}(s^{\prime} | s, a^{i}, a^{-i})=\mathcal{N}({\mu}_{\theta^{b}}({s}, a^{i}, a^{-i}),$ 
$\Sigma_{\theta^{b}}({s}, a^{i}, a^{-i}))$. 
Individual probabilistic dynamics model can capture the aleatoric uncertainty arisen from inherent stochasticity of a system and the bootstrap ensemble is used to capture the epistemic uncertainty, due to a lack of sufficient data \cite{chua2018deep}. 
Previous works have demonstrated that this kind of model leads to better asymptotic performance than single deterministic model, even when the ground-truth dynamic is deterministic \cite{lakshminarayanan2017simple}. 
We train the  models with all the data obtained from real environment via maximum likelihood estimation. The loss for each dynamics model is
\begin{align}
        J_{\hat{\mathcal{T}}}({\theta^{b}}) = \mathbb{E}_{s_t, a_t^{i}, a_t^{-i}, s_{t+1} \sim \mathcal{D}}
        \Big[
        & \log \operatorname{det} {\Sigma}_{\theta^{b}}({s}_{t}, a_t^{i}, a_t^{-i})  
         +e^{\top}{\Sigma}_{\theta^{b}}^{-1}({s}_{t}, a_t^{i}, a_t^{-i})e 
        \Big]~, \nonumber
\end{align}
where $e=\mu_{\theta^{b}}({s}_{t}, a_t^{i}, a_t^{-i})-{s}_{t+1}~.$

When generating a prediction from the probabilistic ensemble dynamics model, we select one of the models randomly from a uniform distribution, which means that the transitions along one rollout can be sampled from different individual dynamics model. 
For simplicity, we denote the ensemble model by $\hat{\mathcal{T}}_{\theta}(s^{\prime} | s, a^{i}, a^{-i})$.

\minisection{Learning the Opponent Models} For agent $i$, the policy of one opponent agent $j$ can be modeled as a Gaussian distribution of the action: $\hat{\pi}^{j}(a^{j}|s)=\pi_{\phi^{j}}(a^{j}|s)=\mathcal{N}({\mu}_{\phi^{j}}(s), {\Sigma}_{\phi^{j}}(s))$, where $\phi_j$ is the parameter of the opponent model for the agent $j$. 
We use maximum likelihood estimation of the opponent $j$'s actions with an entropy regularizer $H(\pi_{\phi^{j}}(\cdot|s_t))$ to learn the opponent's policy. The loss for each opponent model as
\begin{align}
        J_{\pi}({\phi^{j}})  &= -\mathbb{E}_{s_t, a_t^{j} \sim \mathcal{D}}\left[\log(\pi_{\phi^{j}}(a_{t}^{j}|s_t))\right]+\alpha H(\pi_{\phi^{j}}(\cdot|s_t)) \nonumber  \\
        & =\sum_{t=1}^{N}\big[\mu_{\phi^{j}}({s}_{t})-a_{t}^{j}\big]^{\top} {\Sigma}_{\phi^{j}}^{-1}({s}_{t})\big[\mu_{\phi^{j}}\left({s}_{t}\right)-a_{t}^{j}\big] \label{eq: opponent}
         +\log \operatorname{det} {\Sigma}_{\phi^{j}}\left({s}_{t}\right)+\alpha H(\pi_{\phi^{j}}(\cdot|s_t))~,\nonumber    
\end{align}
where $\mathcal{D}$ is the training set of the latest collected samples.

In line \ref{alg: opp_err} of \alg{alg}, the opponent model error $\epsilon_{\hat{\pi}}^{j}$ is computed for each opponent model. For the continuous action space, we define 
$\epsilon_{\hat{\pi}}^{j} = \mathbb{E}_{s_t, a_t^j\sim \mathcal{D}^{v}, \hat{a}_t^j\sim \pi_{\phi^j}(\cdot|s_t)}[\|a_t^j-\hat{a}_t^j\|_2]$; 
for the discrete action space, we define 
$\epsilon_{\hat{\pi}}^{j} = \mathbb{E}_{s_t, a_t^j\sim \mathcal{D}^{v}, \hat{a}_t^j\sim \pi_{\phi^j}(\cdot|s_t)}[1-\mathds{1}(a_t^j=\hat{a}_t^j)]$, where $\mathcal{D}^{v}$ is the the evaluation set of the latest collected samples.


\section{Implementation Details and Hyperparameter Settings}
\label{app: hyper_set}

We implement all our experiments using PyTorch. The action spaces in all our experiments except Climb are discrete. Unless otherwise specified, our policies and opponent models are parameterized by a three-layer MLP with ReLU as activation function, and each layer has 64 units, and our dynamics models consist of four MLP with swish \cite{ramachandran2017searching} as activation function, where each layer has 256 units. To support discrete action space, we use the Gumbel-Softmax estimator \cite{jang2016categorical} to get discrete actions. We use Adam optimizer to update the parameters. The reward discount factor is set to 0.95, the size of the replay buffer to 500,000 and the batch size to 1,024. Soft updates with target networks use ratio as 0.01. The learning rate used in updating dynamics model is halved every 5,000 episodes. The number of timesteps for each episode is set to 25. We use accuracy for discrete action space and mean square error for continuous action space, to measure the prediction performance of the opponent models. More details of the important hyperparameters settings of \methodname{} are listed in \tab{tab: hypers-1} for cooperative tasks and in \tab{tab: hypers-2} for competitive tasks, respectively.

The results are obtained by running 10 random seeds. In the definition of \textit{normalized agent interactions}, a full interaction among an agent group means every agent takes an action.

In this paper, the computing resources we use to conduct experiments are several CPUs and GPUs. In detail, the CPUs are Intel Xeon E5-2686 v4 @ 2.30GHz and the GPUs are NVIDIA GeForce RTX 2080 Ti.

\begin{table}[t]
	\centering
	\begin{tabular}{|c|c|c|c|c|}
		\hline
		Parameter & Description & \makecell*[c]{Cooperative \\ communication} & \makecell*[c]{Cooperative \\ navigation} & \makecell*[c]{Cooperative \\ scheduling} \\
		\hline\hline
		$N$ & Epochs & \multicolumn{3}{c|}{200} \\
		\hline
		$E$ & \makecell*[c]{Environment steps \\ between model updates} & 200 & \multicolumn{2}{c|}{300} \\
		\hline
		$M$ & \makecell*[c]{Model rollout \\ batch size} & \multicolumn{3}{c|}{1024} \\
		\hline
		$k$ & \makecell*[c]{Branched rollout \\ length} & \makecell*[c]{1$\to$10\\over epochs\\15$\to$100} & \makecell*[c]{1$\to$6\\over epochs\\15$\to$100} & \makecell*[c]{1$\to$8\\over epochs\\40$\to$100} \\
		\hline
		$B$ & \makecell*[c]{Dynamics model \\ ensemble size} & \multicolumn{2}{c|}{10} & 8 \\
		\hline
		$G$ & \makecell*[c]{Policy update steps \\ per environment step} & 10 & \multicolumn{2}{c|}{20} \\
		\hline
		- & Policy learning rate & 0.002 & 0.01 & 0.01 \\
		\hline
		- & \makecell*[c]{Opponent models \\ learning rate} & 0.001 & 0.0003 & 0.0005 \\
		\hline
		- & \makecell*[c]{Dynamics models \\ learning rate} & 0.003 & 0.001 & 0.0025 \\
		\hline
	\end{tabular}
	\vspace{5pt}
	\\
	Note: $x\to y$ over epochs $a\to b$ denotes a thresholded linear function, i.e., at the $e$th epoch , $f(e)=\min (\max (x+\frac{e-a}{b-a} \cdot(y-x), x), y)$.
	\vspace{5pt}
	\caption{The hyperparameters used in the experiments of the cooperative tasks.}
	\label{tab: hypers-1}
\end{table}

\begin{table}[!htbp]
	\centering
	\begin{tabular}{|c|c|c|c|c|}
		\hline
		Parameter & Description & \makecell*[c]{Keep-away} & \makecell*[c]{Physical deception} & \makecell*[c]{Predator prey} \\
		\hline\hline
		$N$ & Epochs & \multicolumn{3}{c|}{200} \\
		\hline
		$E$ & \makecell*[c]{Environment steps \\ between model updates} & 50 & \multicolumn{2}{c|}{200} \\
		\hline
		$M$ & \makecell*[c]{Model rollout \\ batch size} & \multicolumn{3}{c|}{1024} \\
		\hline
		$k$ & \makecell*[c]{Branched rollout \\ length} & \makecell*[c]{1$\to$10\\over epochs\\30$\to$120} & \makecell*[c]{1$\to$8\\over epochs\\20$\to$100} & \makecell*[c]{1$\to$5\\over epochs\\20$\to$100} \\
		\hline
		$B$ & \makecell*[c]{Dynamics model \\ ensemble size} & \multicolumn{2}{c|}{8} & 10 \\
		\hline
		$G$ & \makecell*[c]{Policy update steps \\ per environment step} & \multicolumn{3}{c|}{20} \\
		\hline
		- & Policy learning rate & 0.01 & 0.0004 & 0.0001 \\
		\hline
		- & \makecell*[c]{Opponent models \\ learning rate} & 0.0005 & 0.005 & 0.002 \\
		\hline
		- & \makecell*[c]{Dynamics models \\ learning rate} & 0.001 & 0.0005 & 0.0005 \\
		\hline
	\end{tabular}
	\vspace{5pt}
	\caption{The hyperparameters used in the experiments of the competitive tasks.}
	\label{tab: hypers-2}
\end{table}

\section{More Experiment Results}
\label{app: exp}

\subsection{Task Descriptions}

\minisection{Cooperative communication} scenario with two agents and three landmarks. One of the agents is a speaker that knows the target landmark but cannot move, and the other is a listener that can move to the target landmark via communication with the speaker.

\minisection{Cooperative navigation} scenario with three agents and three landmarks. Agents are collectively rewarded based on the proximity of any agent to each landmark while avoiding collision. 

\begin{wrapfigure}{r}{0.35\textwidth}
  \begin{center}
  \vspace{-30pt}
	\includegraphics[width=1\linewidth]{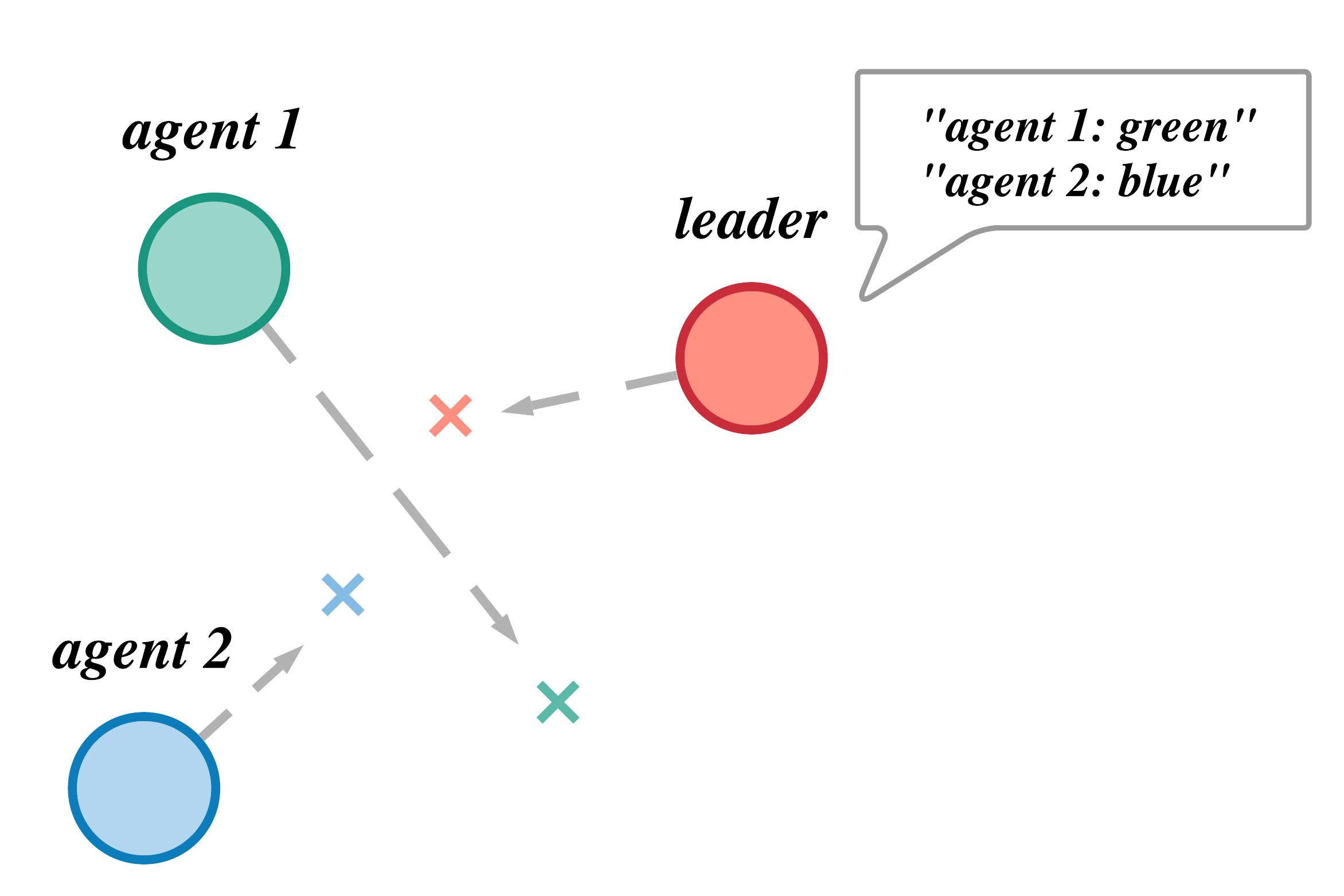}
  \end{center}
  \caption{Cooperative scheduling task.}\vspace{-15pt}
  \label{fig:scheduling}
\end{wrapfigure}

\minisection{Cooperative scheduling} scenario is described in \se{sec: exp}, as shown in \fig{fig:scheduling}. Agents are collectively rewarded based on the proximity of each agent to its corresponding landmark. There is one leader that knows the correspondence between agents and landmarks, the other two agents that need to know their corresponding landmarks via communication with the leader. The agents do not collide in this task. We implement the model-free algorithms with opponent modeling in this task.

\minisection{Climb} The agents have different preferences in two states.  At state 1, agent 1 maximizes the reward by moving to the lower left landmark. By contrast, agent 2 can get different rewards by moving to both landmarks, computed as $\max(-2(a+0.5)^2-2(b+0.5)^2, -(a-0.5)^2-(b-0.5)^2)$. At state 2, the agents' preferences change, agent 1 maximizes the reward by moving to the upper right landmark. By contrast, agent 2 can get different rewards by moving to both landmarks, computed as $\max(-(a+0.5)^2-(b+0.5)^2, -2(a-0.5)^2-2(b-0.5)^2)$.

\minisection{Keep-away} scenario with one agent, one adversary and two landmarks. The agent knows the target landmark and is rewarded based on the distance to the target landmark. The adversary is rewarded for preventing the agent from reaching the target landmark. The adversary does not know the correct landmark and must infer it from the movement of the agents.

\minisection{Physical deception} scenario with two agents, one adversary and two landmarks. The agents know the target landmark and are rewarded based on the minimum distance of any agent to the target landmark. The adversary does not know the target landmark and needs to reach the target landmark by following the agents. The agents are also rewarded by how well they deceive the adversary.

\minisection{Predator-prey} scenario with one prey (agent) and three predators (adversaries). The prey moves faster and is rewarded for running away from the three predators. The predators move slower and need to catch the prey through cooperation.

\subsection{Cooperative Task Results}
The Cooperative scheduling is set as a SECO task: (i) the agents do not collide, which makes the dynamics models prediction easier; (ii) the agents have heterogeneous action spaces, which make it more challenging for the opponent models to predict the opponents' actions accurately. In the right subfigure of \fig{fig: cooperative}, the sample efficiency refers to the amount of the interactions between the agents. For fair comparison, we implement the model-free baselines with opponent models in this experiment. Our methods \methodname{} and \methodnametwo{} reach the asymptotic performance with less agents' interactions, which indicates high sample efficiency. 
Similar to the results for CESO tasks, our methods contribute to the more stable training process.

\subsection{Competitive Task Results}
\label{subs: tab_res}

We evaluate the sample efficiency of our method in the competitive environments on the Multi-Agent Particle Environment \cite{lowe2017multi}. Three scenarios are included, i.e., 1) \textit{Keep-away} scenario with an agent, an adversary and two landmarks, 2) \textit{Physical deception} scenario with two agents, an adversary, and two landmarks, and 3) \textit{Predator-prey} scenario with a prey (agent) and three predators (adversaries).

In these competitive tasks, we focus on the coordination within a team: the ego agent does not communicate with agents from other teams. While conducting the model rollouts, the ego agent $i$ communicates with only the agents of its own team, and samples actions of agents in the other teams with the opponent models.

\begin{figure}[!ht]
	\centering
	\includegraphics[width=0.65\linewidth]{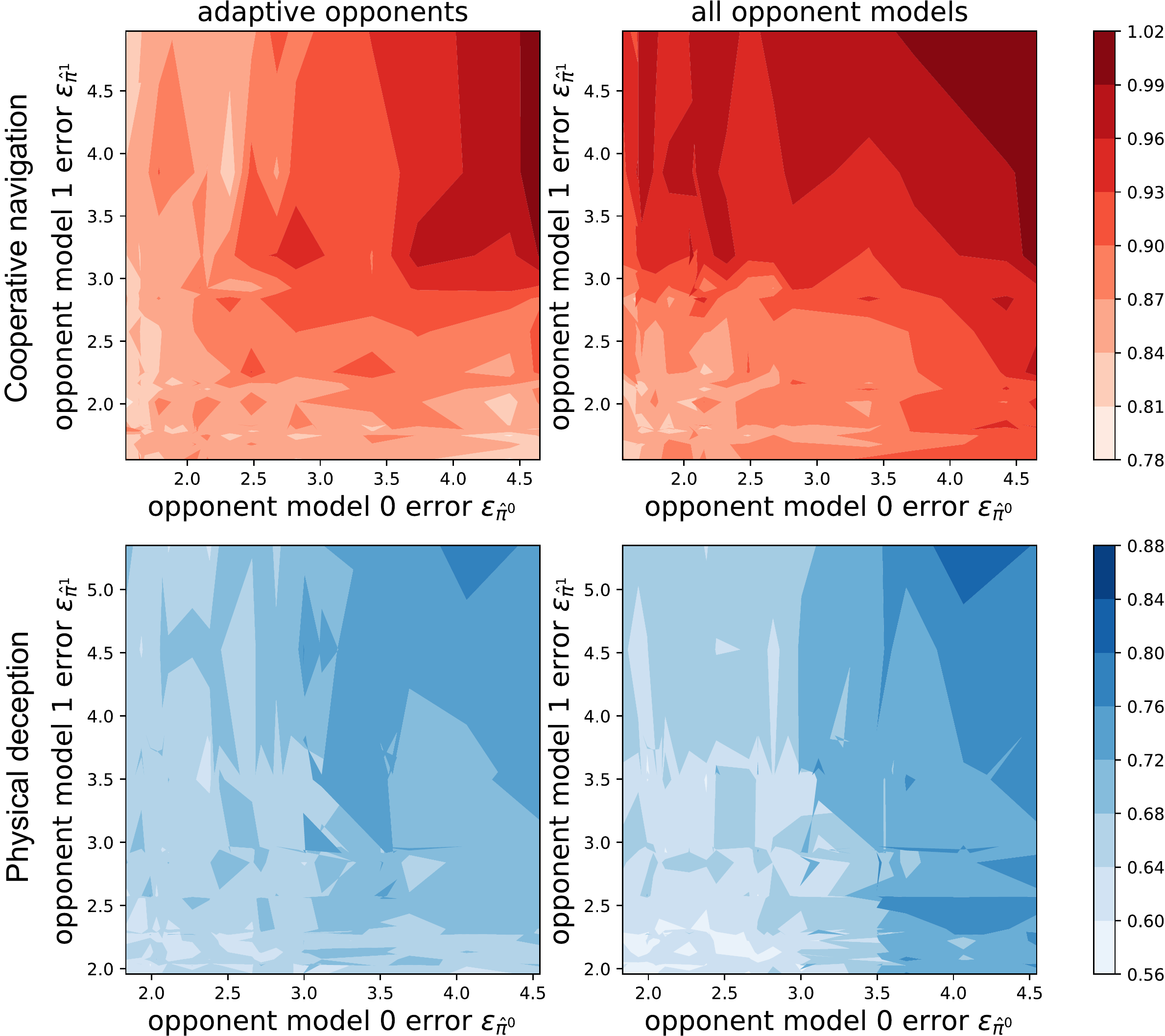}
	\caption{Model compounding errors in Cooperative navigation and Physical deception. The figures show the compounding errors (the dark color, the higher error) using different opponent model usages with opponent models that have different errors.
	}
	\label{fig: comp_c_e}
\end{figure}

\minisection{Experiment Protocol}
Each bar in Figure \ref{fig: competition} shows the $[0,1]$ normalized agent reward score ($\textit{agent~reward} - \textit{adversary~reward}$). The higher score, the better for the agent. 
All the results are produced by evaluating every task 30 times with different random seeds.

\begin{figure*}[t]
	\centering
	\includegraphics[width=1\linewidth]{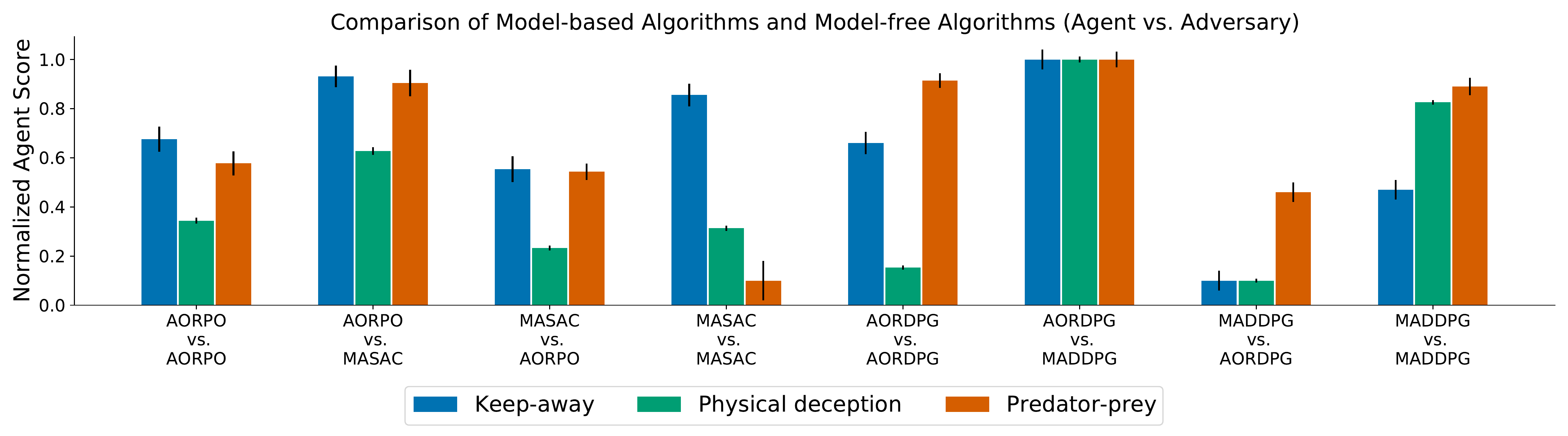}
	\caption{Performance comparison on the competition environments in Multi-Agent Particle Environment. Each bar shows the $[0,1]$ normalized agent score (agent reward minus adversary reward).
	} 
	\label{fig: competition}
\end{figure*}

Since the choice of the base model-free algorithms affects the final performance in the competitive tasks, we compare \methodname{} and \methodnametwo{} with MASAC and MADDPG, respectively, following the evaluation protocol in \cites{lowe2017multi}. The comparison results are shown in \fig{fig: competition}. We find that \methodname{} gets higher agent scores (agent reward minus adversary reward) than MASAC does when competing with the same rivals and that \methodnametwo{} outperforms MADDPG similarly.

\minisection{More Detailed Performance}
The rows are the algorithms that used by the adversaries, and the columns are the ones used by the agents, i.e., the score of each entry corresponds to the performance of the column algorithm. Results are in Table \ref{tab: com_1} and \ref{tab: com_2}.

\begin{table}[!htbp]
	\small
	\centering
	\begin{subtable}{0.31\linewidth}
		\centering
		\caption{Results of Keep-away.}
		\begin{tabular}{l l l}
			\toprule
			Algorithms    & MASAC & \methodname{} \\
			\midrule
			MASAC         & -6.68 & \textbf{-6.28} \\
			\methodname{} & -8.31 & \textbf{-7.65} \\
			\bottomrule
		\end{tabular}
	\end{subtable}
	\begin{subtable}{0.31\linewidth}
		\centering
		\caption{Results of Physical deception.}
		\begin{tabular}{l l l}
			\toprule
			Algorithms    & MASAC & \methodname{} \\
			\midrule
			MASAC         & 19.64 & \textbf{32.49} \\
			\methodname{} & 16.36 & \textbf{20.89} \\
			\bottomrule
		\end{tabular}
	\end{subtable}
	\begin{subtable}{0.31\linewidth}
		\centering
		\caption{Results of Predator prey.}
		\begin{tabular}{l l l}
			\toprule
			Algorithms    & MASAC  & \methodname{} \\
			\midrule
			MASAC         & -74.93 & \textbf{-25.69} \\
			\methodname{} & -47.79 & \textbf{-45.70} \\
			\bottomrule
		\end{tabular}
	\end{subtable}
	\vspace{5pt}
	\caption{Results of \methodname{} vs.MASAC.}
	\label{tab: com_1}
\end{table}

\begin{table}[!htbp]
	\small
	\centering
	\begin{subtable}{0.31\linewidth}
		\centering
		\caption{Results of Keep-away.}
		\resizebox{\linewidth}{!}{
		\begin{tabular}{l l l}
			\toprule
			Algorithms           & MADDPG & \makecell*[c]{\methodnametwo{}} \\
			\midrule
			MADDPG               & 40.60  & \textbf{47.72} \\
			\makecell*[c]{\methodnametwo{}} & 10.93  & \textbf{13.12} \\
			\bottomrule
		\end{tabular}}
	\end{subtable}
	\begin{subtable}{0.31\linewidth}
		\centering
		\caption{Results of Physical deception.}
		\resizebox{\linewidth}{!}{
		\begin{tabular}{l l l}
			\toprule
			Algorithms           & MADDPG & \makecell*[c]{\methodnametwo{}} \\
			\midrule
			MADDPG               & -8.75  & \textbf{-5.91} \\
			\makecell*[c]{\methodnametwo{}} & -10.75 & \textbf{-7.73} \\
			\bottomrule
		\end{tabular}}
	\end{subtable}
	\begin{subtable}{0.31\linewidth}
		\centering
		\caption{Results of Predator prey.}
		\resizebox{\linewidth}{!}{
		\begin{tabular}{l l l}
			\toprule
			Algorithms           & MADDPG & \makecell*[c]{\methodnametwo{}} \\
			\midrule
			MADDPG               & -26.53 & \textbf{-19.83} \\
			\makecell*[c]{\methodnametwo{}} & -52.89 & \textbf{-25.04} \\
			\bottomrule
		\end{tabular}}
	\end{subtable}
	\vspace{5pt}
	\caption{Results of \methodnametwo{} vs. MADDPG.}
	\label{tab: com_2}
	
\end{table}

\subsection{Comparison with Other Model-based Algorithms}\label{sec:comp-with-mbmarl}

We compare our algorithm \methodname{} with the method called \textit{multi-agent RL with multi-step generative models} (MAMSGM) as proposed by \cite{krupnik2019multi}. We implement MAMSGM in the online form, which means that the method updates the models and collects data by planning the trajectories using the models. The experiments are conducted in the Predator prey scenario. The results are shown in Table \ref{tab: com_3}, the meanings of the tables are the same as the ones in \app{subs: tab_res}, i.e., the score of each entry corresponds to the performance of the column algorithm.

\begin{table}[!htbp]
	\small
	\centering
	\begin{subtable}{0.31\linewidth}
		\centering
		\caption{Results in Keep-away.}
		\begin{tabular}{l l l}
			\toprule
			Algorithms & MAMSGM & \makecell*[c]{\methodname{}} \\
			\midrule
			\makecell*[c]{MAMSGM} & -61.31  & \textbf{44.1} \\
			\methodname{} & -107.9  & \textbf{-7.65} \\
			\bottomrule
		\end{tabular}
	\end{subtable}
	\hfill
	\begin{subtable}{0.31\linewidth}
		\centering
		\caption{Results in Physical deception.}
		\begin{tabular}{l l l}
			\toprule
			Algorithms & MAMSGM & \makecell*[c]{\methodname{}} \\
			\midrule
			\makecell*[c]{MAMSGM} & 134.3  & \textbf{221.44} \\
			\methodname{} & 2.65  & \textbf{20.89} \\
			\bottomrule
		\end{tabular}
	\end{subtable}
	\hfill
	\begin{subtable}{0.31\linewidth}
		\centering
		\caption{Results in Predator prey.}
		\begin{tabular}{l l l}
			\toprule
			Algorithms & MAMSGM & \makecell*[c]{\methodname{}} \\
			\midrule
			\makecell*[c]{MAMSGM} & -69.03  & \textbf{-3.26} \\
			\methodname{} & -357.9  & \textbf{-45.69} \\
			\bottomrule
		\end{tabular}
	\end{subtable}
	\vspace{4pt}
	\caption{Comparison between \methodname{} and MAMSGM.}
	\label{tab: com_3}
\end{table}




\subsection{Model Error Analysis}\label{app:model-error}
In Figure \ref{fig: comp_r_l}, we have shown the relationship between the model compounding error and the interactions numbers of different rollout schemes.
Further, we investigate that the adaptive opponent-wise rollout method can reduce the model compounding error with diverse opponent models. As shown in Figure \ref{fig: comp_c_e}, using opponent models with different performance, the adaptive opponent model selection reduces the model compounding errors generally.


\subsection{More Analysis on Adaptive Opponent-wise Rollout} \label{app:two-rollout} 

There are at least two feasible implementations of adaptive opponent-wise rollout schemes, namely \textit{simu-to-real} rollout and \textit{real-to-simu} rollout, as illustrated in Figure~\ref{fig:two-rollout-illustration}. The one shown in Figure~\ref{fig: ma-mbrl} is the simu-to-real rollout. In this section, we discuss the pros and cons of these two schemes. 

\begin{figure}[t]
	\centering
	\includegraphics[width=0.75\linewidth]{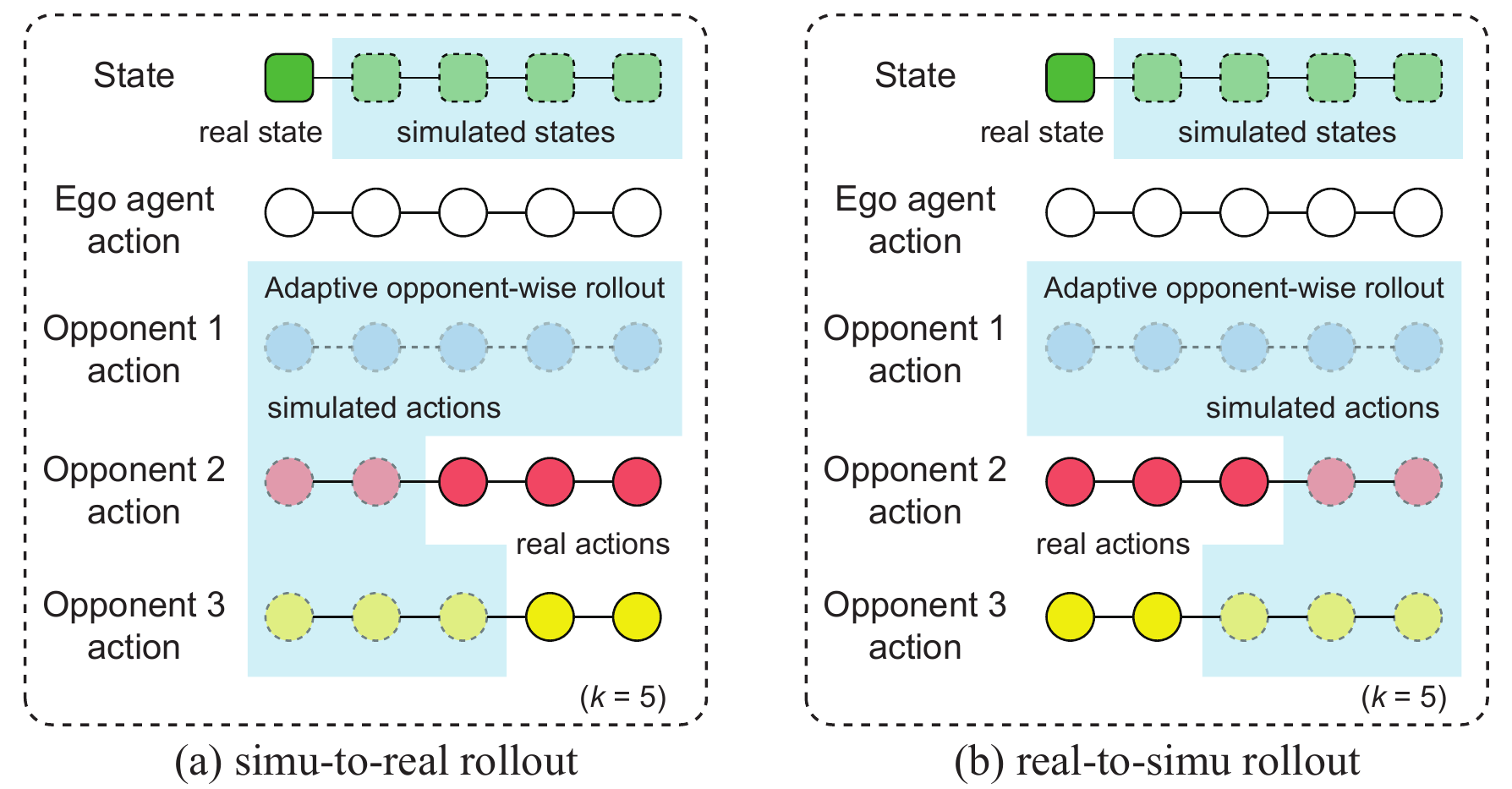}
	\caption{Two feasible adaptive opponent-wise rollout schemes. (a) simu-to-real rollout: first use opponent models to simulate the actions, then call the opponent agents for real actions; (b) real-to-simu rollout: first call the opponent agents for real actions, then use opponent models to simulate more actions.
	}
	\label{fig:two-rollout-illustration}
\end{figure}

We first analyze the difference of the state distance between these two alternatives. For simplicity, we consider the situation that there are only two opponents, and without loss of generality we assume the total branched rollout length using the opponent model $\hat{\pi}^1$ with small error is $k$ and the length of the other opponent model $\hat{\pi}^2$ performed rollout is $l$.
We now briefly derive the state distance in the $k$ steps between the adaptive opponent-wise rollout and the scheme of using both real opponents in the dynamics model according to Extended Lemma~\ref{exle: bl2}. 
For simu-to-real rollout, the state distance of the $k$ steps is bounded by $ \sum_{t=1}^k t(\epsilon^1_{\hat{\pi}} + \epsilon^2_{\hat{\pi}}) + l(k-l) (\epsilon^1_{\hat{\pi}} + \epsilon^2_{\hat{\pi}}) + \sum_{t=l+1}^k (t-l) \epsilon^1_{\hat{\pi}}$ (the discount factor $\gamma$ is omitted for simplicity). 
And for real-to-simu rollout, the state distance of the $k$ steps is bounded by $\sum_{t=1}^{k-l} t \epsilon^1_{\hat{\pi}} + l(k-l)\epsilon^1_{\hat{\pi}} + \sum_{t=k-l+1}^k(t-k+l) (\epsilon^1_{\hat{\pi}} + \epsilon^2_{\hat{\pi}})$. By comparison, we find that the state distance in the $k$ steps of simu-to-real is larger than that of real-to-simu by $l(k-l)\epsilon^2_{\hat{\pi}}$, which means simu-to-real will compound more errors. This is easy to understand since in real-to-simu at first no error of the opponent model $\hat{\pi}^2$ is induced and thus will not be compound later. 


\begin{figure}[t]
	\centering
	\includegraphics[width=0.6\linewidth]{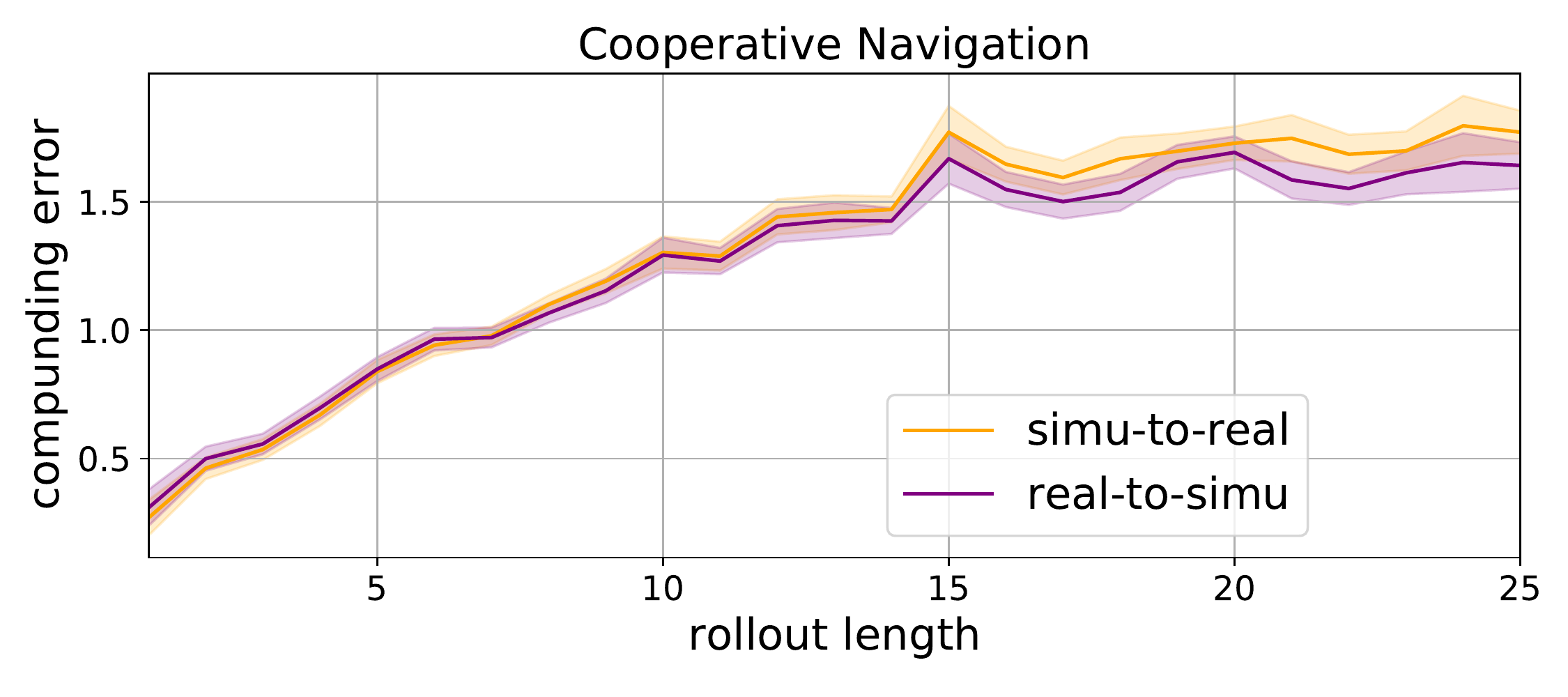}
	\caption{Model compounding errors and ratios for two rollout schemes.
	}
	\label{fig:two-rollout-model-error}
\end{figure}

\fig{fig:two-rollout-model-error} shows the results in \textit{Cooperative communication} scenario.
We can see that the real-to-simu rollout scheme achieves lower compounding error than simu-to-real scheme for long rollout (when $k>15$) but the model compounding errors are almost the same for short rollout (when $k < 15$). When $k=10$, the compounding errors are almost the same, while the contribution ratio of compounding error from opponent models from simu-to-real scheme is smaller than that from real-to-simu scheme.
Furthermore, as shown in \fig{fig:two-rollout-effectiveness} and \fig{fig:two-rollout-performance}, the simu-to-real scheme actually achieves higher sample efficiency than real-to-simu scheme, for both $k=10$ and $k=20$, which looks counter-intuitive.


\begin{minipage}[t]{0.48\linewidth}
	\vspace{-0pt}
	\centering
	\includegraphics[height=0.6\linewidth]{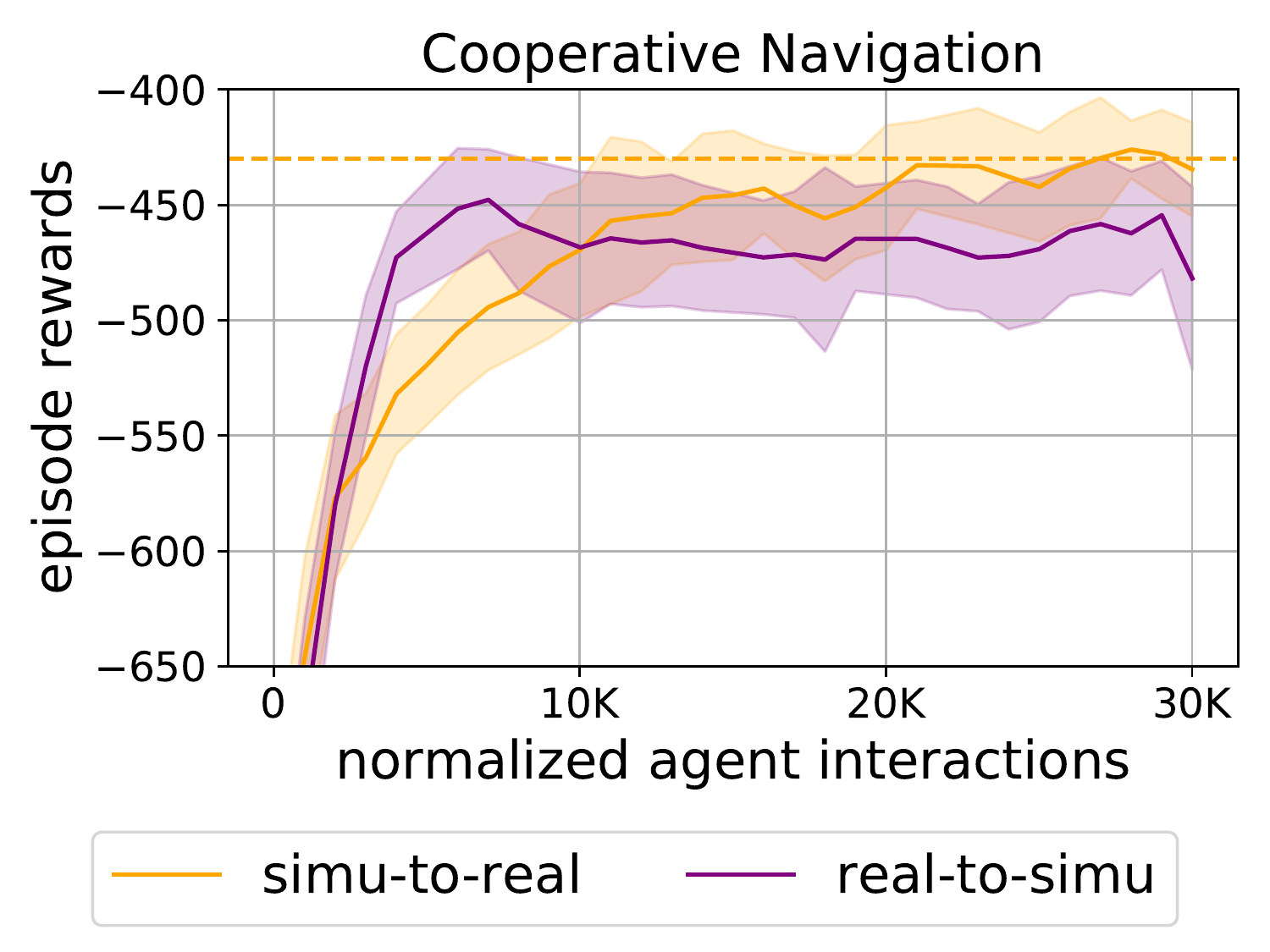}
	\captionof{figure}{Effectiveness w.r.t. interactions, using fixed learned dynamics model and $k=20$.
	}
	\label{fig:two-rollout-effectiveness}
	\vspace{10pt}
\end{minipage}
\hfill
\begin{minipage}[t]{0.48\linewidth}
	\vspace{-0pt}
	\centering
	\includegraphics[height=0.6\linewidth]{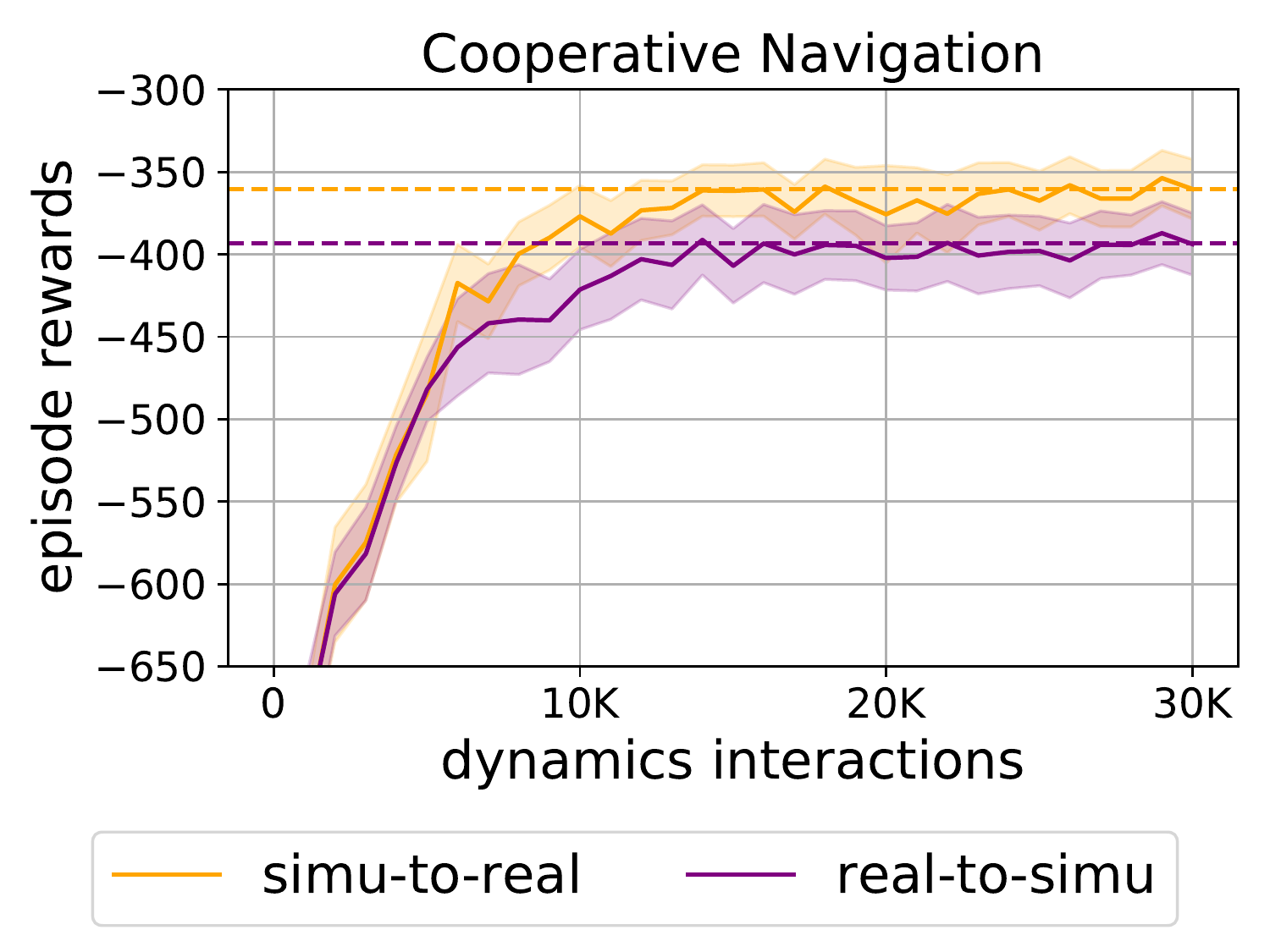}
	\captionof{figure}{The average episode reward performance  for two rollout schemes, using $k=10$.
	}
	\label{fig:two-rollout-performance}
	\vspace{10pt}
\end{minipage}

It may be counter-intuitive in the single-agent setting, while the situation is different in the multi-agent setting. 
We now provide the following detailed explanations for above observations. Although real-to-simu achieves less compounding error theoretically and empirically, yet the simulated rollouts generated by simu-to-real are more useful for policy training. If adopting the real-to-simu scheme, opponent models are exploited at the end when the error has been large already. In this situation, the dynamics model transits to useless states, and the opponent models provide unrealistic actions, which is of no use for ego policy training. Differently, if adopting the simu-to-real scheme, opponent models are exploited at the begging when the error is not large, and even if the error may be larger than in real-to-simu at the end, real opponents are used at this time, which at least guides the ego agent how to interact with others at these new states.


We think such an analysis requires a deeper investigation about the tigher compounding error bound w.r.t. the generalization error of the dynamics model and the opponent models, which deserves a thorough research work in the future.

\end{document}